\newcommand{\PreserveBackslash}[1]{\let\temp=\\#1\let\\=\temp}
\newcolumntype{C}[1]{>{\PreserveBackslash\centering}p{#1}}
\newcolumntype{R}[1]{>{\PreserveBackslash\raggedleft}p{#1}}
\newcolumntype{L}[1]{>{\PreserveBackslash\raggedright}p{#1}}
\newtheorem{lemma}{Lemma}
\ificcvfinal\pagestyle{empty}\fi
\begin{document}

\title{USIP: Unsupervised Stable Interest Point Detection from 3D Point Clouds}

\author{Jiaxin Li\thanks{now at nuTonomy: an APTIV company.} \qquad Gim Hee Lee \\
Department of Computer Science, National University of Singapore
}


\maketitle

\begin{abstract}

In this paper, we propose the USIP detector: an Unsupervised Stable Interest Point detector that can detect highly repeatable and accurately localized keypoints from 3D point clouds under arbitrary transformations without the need for any ground truth training data. Our USIP detector consists of a feature proposal network that learns stable keypoints from input 3D point clouds and their respective transformed pairs from randomly generated transformations. We provide degeneracy analysis of our USIP detector and suggest solutions to prevent it. We encourage high repeatability and accurate localization of the keypoints with a probabilistic chamfer loss that minimizes the distances between the detected keypoints from the training point cloud pairs. Extensive experimental results of repeatability tests on several simulated and real-world 3D point cloud datasets from Lidar, RGB-D and CAD models show that our USIP detector significantly outperforms existing hand-crafted and deep learning-based 3D keypoint detectors. Our code is available at the project website.\footnote{https://github.com/lijx10/USIP}
\end{abstract} 

\section{Introduction} \label{sec_intro}
\begin{figure}[h] 
        \centering
        \subfigure[]{\includegraphics[width=0.2\textwidth]{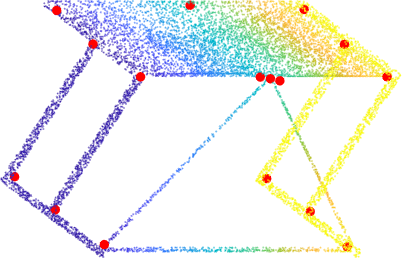}}
        \subfigure[]{\includegraphics[width=0.2\textwidth]{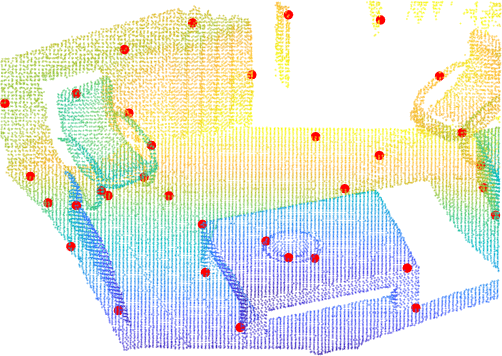}}
        \subfigure[]{\includegraphics[width=0.2\textwidth]{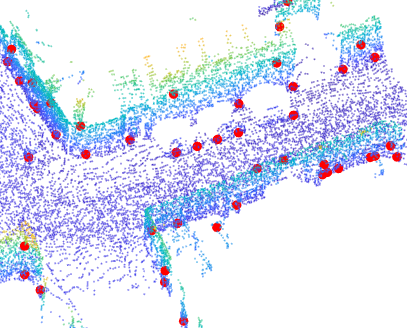}}
        \subfigure[]{\includegraphics[width=0.2\textwidth]{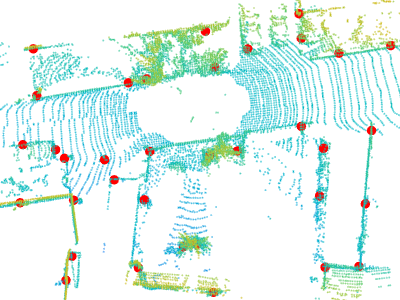}}
        \caption{Examples of keypoints detected by our USIP detector on four datasets: (a) ModelNet40 \cite{wu20153d}, object model. (b) Redwood \cite{choi2015robust} (Trained on ``RGB-D reconstruction dataset" \cite{zeng20173dmatch}), indoor RGB-D. (c) Oxford RobotCar \cite{RobotCarDatasetIJRR}, outdoor SICK LiDAR. (d) KITTI \cite{Geiger2012kitti} (Trained on Oxford), outdoor Velodyne LiDAR.} 
        \label{fig_intro}
        \vspace{-10pt}
\end{figure}

3D interest point or keypoint detection refers to the problem of finding stable points with well-defined positions that are highly repeatable on 3D point clouds under arbitrary SE(3) transformations. These detected keypoints play important roles in many computer vision and robotics tasks, where 3D point clouds are widely adopted as the data structure to represent objects and scenes in the 3D space. Examples include geometric registration for 3D object modeling \cite{besl1992method} or point cloud-based Simultaneous Localization and Mapping (SLAM) \cite{montemerlo2002fastslam},
and 3D object \cite{Tangelder2008,Lian2012} or place recognition \cite{uy2018pointnetvlad}. In these tasks, the detected keypoints are respectively used as correspondences to compute rigid transformations, and locations to extract representative signatures for efficient retrievals. Hence, a keypoint detector that cannot produce highly repeatable and well-localized keypoints from 3D point clouds under arbitrary transformations would render these tasks to fail catastrophically. 

Despite the high number of successful hand-crafted detectors proposed for 2D images \cite{rublee2011orb, lowe2004distinctive, harris1988combined}, significantly lesser hand-crafted detectors \cite{tombari2013performance} with limited success are proposed for hand-crafted detectors on 3D point clouds. This difference can be largely attributed to the difficulty in hand-crafting powerful algorithms to extract meaningful information solely from the Euclidean coordinates of the point cloud in comparison to images that contain richer information from the additional RGB channels. 
The problem is further aggravated by the fact that it is difficult to hand-craft 3D detectors to handle 3D point clouds in arbitrary transformations, \ie, different reference coordinate frames. In particular, different transformations applied to the same 3D point cloud cause the Euclidean coordinates of each point to change significantly, thus severely affecting the repeatability of the keypoints from the 3D detectors. 

It seems evidential that all the above mentioned problems with hand-crafted detectors for 3D point clouds can be resolved by the highly successful data-driven deep networks. 
However, very few deep learning-based 3D keypoint detectors exist (only one deep learning-based approach \cite{jian20183dfeat} exists to date) in contrast to its increasing success on learning 3D keypoint descriptors \cite{deng2018ppfnet, deng2018ppf, zeng20173dmatch, khoury2017learning}. This is due to the lack of ground truth training datasets to supervise deep learning-based detectors on 3D point clouds. Unlike 3D descriptors that are supervised by easily available ground truth registered overlapping 3D point clouds \cite{deng2018ppfnet, deng2018ppf, khoury2017learning, zeng20173dmatch, jian20183dfeat, gojcic2018perfect}, it is impossible for anyone to identify and label the ``ground truth'' keypoints on 3D point clouds. Consequently, most of the works on 3D descriptors \cite{deng2018ppfnet, deng2018ppf, zeng20173dmatch, khoury2017learning} ignored the detector problem and are built on top of existing hand-crafted 3D detectors or uniform sampling.

In view of the challenges on both hand-crafted and deep learning-based 3D detectors, we propose the USIP detector: an \textbf{\underline{U}}nsupervised \textbf{\underline{S}}table \textbf{\underline{I}}nterest \textbf{\underline{P}}oint deep learning-based detector that can detect highly repeatable, and accurately localized keypoints from 3D point clouds under arbitrary transformations \emph{without} the need for any ground truth training data. To this end, we design a Feature Proposal Network (FPN) that outputs a set of keypoints and their respective saliency uncertainties from an input 3D point cloud. 
Our FPN improves keypoint localization by estimating their positions on contrary to existing 3D detectors \cite{pcl, jian20183dfeat, zhong2009intrinsic} that select existing points in the point cloud as keypoints, which causes quantization errors.
During training, we apply randomly generated SE(3) transformations on each point cloud to get a set of corresponding pairs of transformed point clouds as inputs to the FPN. Furthermore, we identify and prevent the degeneracy of our USIP detector. 
We encourage high repeatability and accurate localization of the keypoints with a probabilistic chamfer loss that minimizes the distances between the detected keypoints from the training point cloud pairs. 
Additionally, we introduce a point-to-point loss to enforce the constraint of getting keypoints that lie close to the point cloud. 
We verify our USIP detector by performing extensive repeatability tests on several simulated and real-world benchmark 3D point cloud datasets from Lidar, RGB-D and CAD models. Some qualitative results are shown in Fig~\ref{fig_intro}.

\vspace{0.2cm}
\noindent Our key contributions are summarized as follows:
\begin{itemize}
    \item Our USIP detector is fully unsupervised, thus avoids the need for ground truth that are impossible to obtain. 
    \item We provide degeneracy analysis of our USIP detector and suggest solutions to prevent it. 
    \item Our FPN improves keypoint localization by estimating the keypoint position instead of choosing it from an existing point in the point cloud. 
    \item We introduce the probabilistic chamfer loss and point-to-point loss to encourage high repeatability and accurate keypoint localization. 
    \item The use of randomly generated transformations on point clouds during training inherently allows our network to achieve good performance under rotations.
\end{itemize}

\section{Related Work}  \label{sec_related_work}
Unlike the recent success of deep learning-based 3D keypoint descriptors \cite{deng2018ppfnet, deng2018ppf, khoury2017learning, zeng20173dmatch, jian20183dfeat, gojcic2018perfect}, most existing 3D keypoint detectors remain hand-crafted. A comprehensive review and evaluation of existing hand-crafted 3D keypoint detectors can be found in \cite{tombari2013performance}. 
Local Surface Patches (LSP) \cite{chen20073d} and Shape Index (SI) \cite{dorai1997cosmos} are based on the maximum and minimum principal curvatures of a point, and consider the point as a keypoint if it is a global extremum in a predefined neighborhood. Intrinsic Shape Signatures (ISS) \cite{zhong2009intrinsic} and KeyPoint Quality (KPG) \cite{mian2010repeatability} select salient points that has a local neighborhood with large variations along each principal axis. MeshDoG \cite{zaharescu2009surface} and Salient Points (SP) \cite{castellani2008sparse} construct a scale-space of the curvature with the Difference-of-Gaussian (DoG) operator similar to SIFT \cite{lowe2004distinctive}. Points with local extrema values over an one-ring neighborhood are selected as keypoints. These methods can be regarded as the 3D extension of SIFT. Laplace-Beltrami Scale-sapce (LBSS) \cite{unnikrishnan2008multi} computes the saliency by applying a Laplace-Beltrami operator on increasing supports for each point. 

More recently, LORAX \cite{elbaz20173d} proposes the method of projecting the point set into a depth map and use Principal Component Analysis (PCA) to select keypoints with commonly found geometric characteristics.
All hand-crafted approaches share the common trait of relying on the local geometric properties of the points to select keypoints. Hence, the performances of these detectors deteriorate under disturbances such as noise, density variations and/or arbitrary transformations. In contrast, our deep learning-based USIP detector is more resilient to these disturbances by learning from data. 
To the best of our knowledge, the only existing deep learning-based 3D keypoint detector is the weakly supervised 3DFeatNet \cite{jian20183dfeat}, which is trained with GPS/INS tagged point clouds. However, the training of 3Dfeat-Net is largely focused on learning discriminative descriptors using the Siamese architecture with an attention score map that estimates the saliency of each point as its by-product. It does not ensure good performance of the keypoint detection. In comparison, our USIP is designed to encourage high repeatability and accurate localization of the keypoints. Furthermore, our method is fully unsupervised and does not rely on any form of ground truth datasets.

\begin{figure*}[t!] \centering
\subfigure[]{\includegraphics[width=0.8\textwidth]{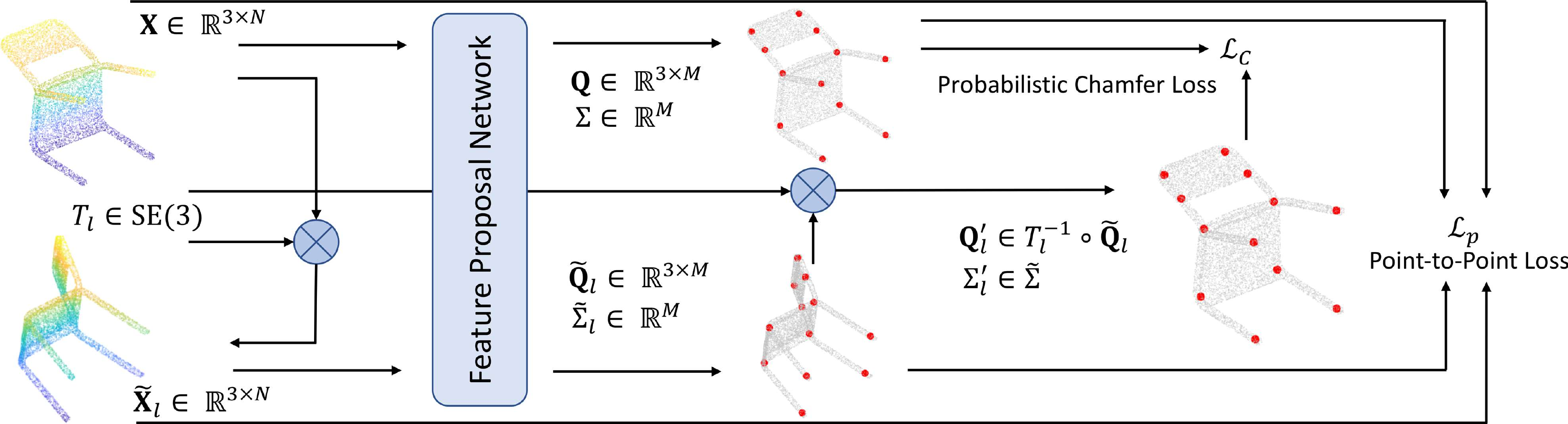}\label{fig_dec_pipeline}}
\subfigure[]{\includegraphics[width=0.90\textwidth]{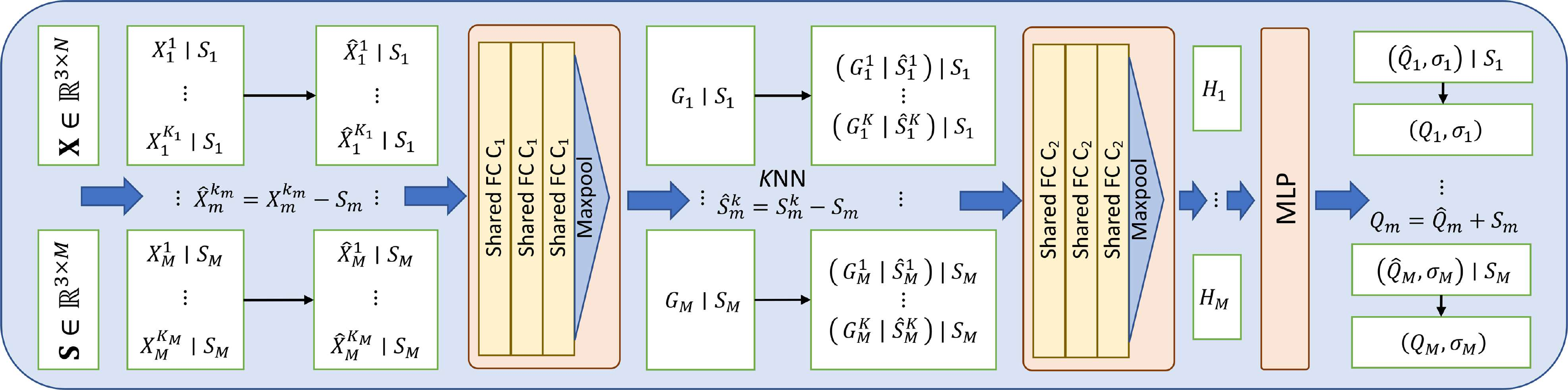}\label{fig_dec_fpn}}
\caption{(a) The training pipeline of USIP detector. (b) The architecture of our Feature Proposal Network (FPN). See text for more detail.}
\vspace{-8pt}
\end{figure*}

\section{Our USIP Detector} \label{sec_usip}
Fig.~\ref{fig_dec_pipeline} shows the illustration of the pipeline to train our USIP detector. We denote a point cloud from the training dataset as $\mathbf{X}=[X_0, \cdots, X_N] \in \mathbb{R}^{3 \times N}$. A set of transformation matrices $\{T_1, \cdots, T_L\}$, where $T_l \in \text{SE}(3)$ is randomly generated and applied to the point cloud $\mathbf{X}$ to form $L$ pairs of training inputs denoted as $\{ \{\mathbf{X}, \mathbf{\tilde{X}}_1\}, \cdots, \{\mathbf{X}, \mathbf{\tilde{X}}_L\} \}$, where $\mathbf{\tilde{X}}_l=T_l \circ \mathbf{X} \in \mathbb{R}^{3 \times N}$. Here, we use the operator $\circ$ to denote matrix multiplication under homogeneous coordinate with a slight abuse of notation. 
We drop the indices $l$ for brevity and refer to a triplet of training pair of point clouds and their corresponding transformation matrix as $\{\mathbf{X}, \mathbf{\tilde{X}}, T\}$. During training, $\mathbf{X}$ and $\mathbf{\tilde{X}}$ are respectively fed into the FPN, which outputs $M$ proposal keypoints and its saliency uncertainties denoted as $\{\mathbf{Q} = [Q_1, \cdots, Q_M], \Sigma = [\sigma_1, \cdots, \sigma_M]^T\}$ and $\{\mathbf{\tilde{Q}} = [\tilde{Q}_1, \cdots, \tilde{Q}_M], \tilde{\Sigma} = [\tilde{\sigma}_1, \cdots, \tilde{\sigma}_M]^T\}$ for the respective point cloud. $Q_m \in \mathbb{R}^3$, $\tilde{Q}_m \in \mathbb{R}^3$, 
$\sigma_m \in \mathbb{R}^+$ and $\tilde{\sigma}_m \in \mathbb{R}^+$. We enforce $\sigma_m \in \mathbb{R}^+$ and $\tilde{\sigma}_m \in \mathbb{R}^+$ so that it is a valid rate parameter in our probabilistic chamfer loss (see later paragraph).
To improve keypoint localization, it is not necessary for all $Q_m \in \mathbf{Q}$ to be any of the points in $\mathbf{X}$. Similar condition applies to all $\tilde{Q}_m \in \mathbf{\tilde{Q}}$. 

We undo the transformation on $\mathbf{\tilde{Q}}$ with a slight abuse of notation to get $\mathbf{Q}' = T^{-1} \circ \mathbf{\tilde{Q}} \in \mathbb{R}^{3 \times M}$, so that $\mathbf{Q}'$ can be compared directly to $\mathbf{Q}$. Here, we made an assumption that the saliency uncertainties remain unaffected after the transformation, \ie, $\Sigma' = \tilde{\Sigma}$. The objectives of detecting keypoints that are highly repeatable and accurately localized from 3D point clouds under arbitrary transformations can now be achieved by formulating a loss function that minimizes the difference between $\mathbf{Q}$ and $\mathbf{Q}'$. To this end, we propose the loss function: $\mathcal{L} = \mathcal{L}_{c} + \lambda\mathcal{L}_p$,
where $\mathcal{L}_{c}$ is the probabilistic chamfer loss that minimizes the probabilistic distances between all correspondence pairs of keypoints in $\mathbf{Q}$ and $\mathbf{Q}'$. $\mathcal{L}_p$ is the point-to-point loss that minimizes the distance of the estimated keypoints to their respective nearest neighbor in the point cloud. This constraints the estimated keypoints to be close to the point cloud. $\lambda$ is a hyperparameter that adjust the relative contribution of $\mathcal{L}_{c}$ and $\mathcal{L}_{p}$ to the total loss. More specifically:
\vspace{-0.4cm}
\paragraph{Probabilistic Chamfer Loss} \label{sec_loss_prob_chamfer}
A straightforward way to minimize the difference between $\mathbf{Q}$ and $\mathbf{Q}'$ is to use the chamfer loss:
\begin{equation} \label{equ_chamfer_vanilla}
    \sum_{i=1}^{M}\underset{Q'_j \in \mathbf{Q}'}{\text{min}}\|Q_i - Q'_j\|^2_2 + \sum_{j=1}^{M}\underset{Q_i \in \mathbf{Q}}{\text{min}}\|Q_i - Q'_j\|^2_2,
\end{equation}

\noindent that minimizes the distance of each point in one point cloud with its nearest neighbor in the other point cloud. However, the $M$ proposals are not equally salient. The receptive field of a point $Q_i$ can be a featureless surface since the receptive field is limited to a small volume. In this case, it is detrimental to force the FPN to minimize the distance between  $Q_i$ and $Q'_j$, where $Q'_j$ is the nearest neighbor of $Q_i$ in $\mathbf{Q}'$. 

To mitigate the above problem, we design our FPN to learn the saliency uncertainties $\Sigma$ and $\Sigma'$ of the proposal keypoints $\mathbf{Q}$ and $\mathbf{Q}'$ with a probabilistic chamfer loss $\mathcal{L}_c$.
In particular, we propose to formulate $\mathcal{L}_c$ with an exponential distribution that measures the probabilistic distances between $\mathbf{Q}$ and $\mathbf{Q}'$ with the saliency uncertainties $\Sigma$ and $\Sigma'$.
More formally, the probability distribution between $Q_i$ and $Q'_{j}$ for $i=1, \cdots, M$ is given by:
\begin{equation} \label{equ_exponential_dist}
\begin{split}
    p(d_{ij} \mid \sigma_{ij}) = \frac{1}{\sigma_{ij}} \exp{\bigg(-\frac{d_{ij}}{\sigma_{ij}}\bigg)},~~~~~~~~\text{where} \\
    \sigma_{ij}=\frac{\sigma_i + \sigma'_j}{2} > 0,~~~d_{ij}=\underset{Q'_j \in \mathbf{Q}'}{\text{min}}\|Q_i-Q'_j\|_2 \geq 0.
\end{split}
\end{equation}

\noindent $p(d_{ij} \mid \sigma_{ij})$ is a valid probability distribution since it integrates to 1. A shorter distance $d_{ij}$ between the proposal keypoints $Q_i$ and $Q'_j$ gives a higher probability that $Q_i$ and $Q'_j$ are highly repeatable and accurately localized keypoints in the point clouds $\mathbf{X}$ and $\tilde{\mathbf{X}}$. 
Assuming i.i.d for all $d_{ij} \in D_{ij}$, the joint distribution between $\mathbf{Q}$ and $\mathbf{Q}'$ is given by:
\begin{equation} \label{eq:jointDist_ij}
p(D_{ij} \mid \Sigma_{ij}) = \prod_{i=1}^{M} p(d_{ij} \mid \sigma_{ij}).
\end{equation}

\noindent It is important to note that the probability distribution is not symmetrical when the order of the point cloud is swapped, \ie, $\mathbf{Q}'$ and $\mathbf{Q}$, due to a different set of nearest neighbors, \ie, $d_{ij} \neq d_{ji}$ and $\sigma_{ij} \neq \sigma_{ji}$. Hence, the joint distribution between $\mathbf{Q}'$ and $\mathbf{Q}$ is given by:
\begin{equation} \label{eq:jointDist_ji}
\begin{split}
p(D_{ji} \mid \Sigma_{ji}) = \prod_{j=1}^{M} p(d_{ji} \mid \sigma_{ji}),~~~~~~~~\text{where} \\
    \sigma_{ji}=\frac{\sigma'_j + \sigma_i}{2} > 0,~~~d_{ji}=\underset{Q_i \in \mathbf{Q}}{\text{min}}\|Q_i-Q'_j\|_2 \geq 0.
\end{split}
\end{equation}


Finally, the probabilistic chamfer loss $\mathcal{L}_c$ between $\mathbf{Q}'$ and $\mathbf{Q}$ is given by the negative log-likelihood of the joint distributions defined in Eq.~\ref{eq:jointDist_ij} and \ref{eq:jointDist_ji}:  
\begin{align} \label{equ_prob_chamfer}
    \mathcal{L}_c & = \sum_{i=1}^{M}-\ln p(d_{ij} \mid \sigma_{ij}) + \sum_{j=1}^{M}-\ln p(d_{ji} \mid \sigma_{ji}) \nonumber\\
        & = \sum_{i=1}^{M} \bigg(\ln \sigma_{ij} + \frac{d_{ij}}{\sigma_{ij}}\bigg) + \sum_{j=1}^{M} \bigg(\ln \sigma_{ji} + \frac{d_{ji}}{\sigma_{ji}}\bigg).
\end{align}

\noindent We further analyze the physical meaning of $\sigma_{ij}$ or $\sigma_{ji}$ by computing the extrema of Eq.~\ref{equ_exponential_dist} from its first derivative over $\sigma_{ij}$: 
\begin{equation}
    \frac{\partial p(d_{ij} \mid \sigma_{ij})}{\partial \sigma_{ij}} = \frac{d_{ij}\exp(-d_{ij}/\sigma_{ij})}{\sigma_{ij}^3} - \frac{\exp(-d_{ij}/\sigma_{ij})}{\sigma_{ij}^2},
\end{equation}
\noindent and solve for the stationary points:
\begin{equation}
    \frac{\partial p(d_{ij} \mid \sigma_{ij})}{\partial \sigma_{ij}} = 0 \Rightarrow \sigma_{ij}=d_{ij}.
\end{equation}

\noindent Furthermore, the second derivative $p''(d_{ij}\mid\sigma_{ij}) |_{\sigma_{ij}=d_{ij}}<0$ means that given a fixed $d_{ij} \neq 0$, the highest probability $p(d_{ij} \mid \sigma_{ij})$ is achieved at $\sigma_{ij}=d_{ij}$. Consider any triplet of proposal keypoints $\{Q_i, Q'_j, Q'_k\}$, where $d_{ij}$ and $d_{ki}$ are the distances between the nearest neighbors $\{Q_i, Q'_j\}$ and $\{Q'_k, Q_i\}$ ($Q_i$ can be the nearest neighbor in both orders of $\mathbf{Q}$ and $\mathbf{Q}'$ since chamfer distance is not bijective). $\sigma'_k$ has to take a large value when $d_{ij} \to 0$ and $d_{kj}$ is large because we have shown that $\sigma_{ij}=d_{ij}$ and $\sigma_{ki}=d_{ki}$ at optimum. Furthermore, $d_{ij} \to 0$ and $d_{kj}$ is large implies that $\{Q_i, Q'_j\}$ are repeatable and accurately localized keypoints while $Q'_k$ is not. Hence, a large saliency uncertainty $\sigma'_k$ for a bad proposal keypoint $Q'_k$ at optimum shows that our probabilistic chamfer loss is guiding the FPN to learn correctly.

\paragraph{Point-to-Point Loss}\label{sec_loss_faeture_on_pc}
To avoid quantization error in the positions of the keypoints, we design the FPN such that it is not necessary that the proposal keypoints $\mathbf{Q}$ to be any of the points in $\mathbf{X}$. However, this can cause the FPN to give erroneous proposal keypoints $\mathbf{Q}$ that are far away from the point cloud $\mathbf{X}$. We circumvent this problem by adding a loss function $\mathcal{L}_p$ that penalizes $Q_m \in \mathbf{Q}$ for being too far from $\mathbf{X}$. We also apply similar penalty on $\tilde{\mathbf{Q}}$ and $\tilde{\mathbf{X}}$. 
This loss can be formulated as either the point-to-point loss \cite{besl1992method}:
\begin{equation} \label{equ_onpc_point}
\mathcal{L}_{\text{point}} =
    \sum_{i=1}^{M}\underset{X_j \in \mathbf{X}}{\text{min}}\|Q_i - X_j\|_2^2 + \sum_{i=1}^{M}\underset{\tilde{X}_j \in \tilde{\mathbf{X}}}{\text{min}}\|\tilde{Q}_i - \tilde{X}_j\|_2^2,
\end{equation}
\noindent where $X_j \in \mathbf{X}$ is the nearest neighbor of $Q_i$ or the point-to-plane loss \cite{rusinkiewicz2001efficient,chen1992object}: 
\begin{equation}\label{equ_onpc_plane}
    \mathcal{L}_{\text{plane}} =
    \sum_{i=1}^{M} \mathcal{N}_j^T(Q_i - X_j) + \sum_{i=1}^{M} \tilde{\mathcal{N}}_j^T(\tilde{Q}_i - \tilde{X}_j),
\end{equation} 
where $\mathcal{N}_j$ and $\tilde{\mathcal{N}}_j$ are the nearest surface normal in $\mathbf{X}$ to $Q_i$ and $\tilde{\mathbf{X}}$ to $\tilde{Q}_i$, respectively.
We set $\mathcal{L}_p = \mathcal{L}_{\text{point}}$ by default since we found experimentally that both loss functions give similar performances.

\section{Feature Proposal Network} \label{sec_det_fpn}
The network architecture of our FPN is shown in Fig.~\ref{fig_dec_fpn}.
We first sample $M$ nodes denoted as $\mathbf{S} = [S_1, \cdots, S_M] \in \mathbb{R}^{3 \times M}$ with Farthest Point Sampling (FPS) from a given input point cloud $\mathbf{X}\in \mathbb{R}^{3\times N}$. 
A neighborhood of points is built for each node $S_m \in \mathbf{S}$ using point-to-node grouping \cite{li2018so,kohonen1998self}, which is denoted as $\{\{X_1^1|S_1, ..., X_1^{K_1}|S_1\},~\cdots,~\{X_M^1|S_M,~...,~X_M^{K_M}|S_M\}\}$. $K_1, \cdots, K_M$ represents the number of points associated with the each of the nodes in $\mathbf{S}$.
The advantage of point-to-node association over node-to-point $k$NN search or radius-based ball-search is two-fold: (1) Every point in $\mathbf{X}$ is associated with one node, while some points may be left out
in node-to-point $k$NN search and ball-search. (2) Point-to-node grouping automatically adapts to various scale and point density, while $k$NN search and ball-search are vulnerable to density variation and varying scales, respectively. To make FPN \textit{translation equivariant}, we normalize each neighborhood point $\{X_m^{1}|S_m, \cdots, X_m^{K_m}|S_m\}$ into  $\{\hat{X}_m^{1}|S_m, \cdots, \hat{X}_m^{K_m}|S_m\}$ by subtracting from its respective node $S_m$, \ie, $\hat{X}_m^{k_m}=X_m^{k_m} - S_m$. Each cluster of normalized local neighborhood points is then fed into a PointNet-like network \cite{qi2016pointnet} shown in Fig.~\ref{fig_dec_fpn} to get a local feature vector $G_m$ associated with $S_m$. A $k$NN grouping layer is applied on the set of local feature vectors $\{G_1|S_1, \cdots, G_M|S_M\}$ to achieve hierarchical information aggregation. Specifically,
the $k$ nearest neighbors of each pair of $(G_m | S_m)$ are retrieved as $\{(G_m^1 | S_m^1) | S_m, \cdots, (G_m^K | S_m^K) | S_m\}$. 
These $k$NN local feature vectors are then normalized by subtracting with its respective $S_m$ to get a position-independent neighborhood denoted as $\{G_m^1 | \hat{S}_m^K) | S_m, \cdots, (G_m^K | \hat{S}_m^K) | S_m\}$, where $\hat{S}_m^K = S_m^K - S_m$,
before feeding into another network to get a set of feature vectors $\{H_1, \cdots, H_M\}$. 
A simple Multi-Layer Perceptron (MLP) is then used to estimate
$M$ proposal keypoints $\{\hat{Q}_1 | S_1, \cdots, \hat{Q}_M | S_M\}$, where $\hat{Q}_m \in \mathbb{R}^3$, and saliency uncertainties $\{\sigma_1, \cdots, \sigma_M\}$, where $\sigma_m \in \mathbb{R}^+$ from $\{H_1, \cdots, H_M\}$. Finally, we un-normalize each $\hat{Q}_m$ with $S_m$, \ie, $Q_m = \hat{Q}_m + S_m$ to get the final proposal keypoints $\{Q_1, \cdots, Q_M\}$.   
It is important to note that the size of the receptive field is controlled by the number of proposals $M$ and $K$ in $k$NN layers and it determines the level-of-detail for each feature. Large receptive field leads to features that are salient on a large-scale and vice versa.

\section{Degeneracy Analysis} \label{sec_degeneration}
Let us denote the FPN as $f(\mathbf{Y}):\mathbf{Y}\rightarrow \mathbb{R}^{3 \times M}$, where $\mathbf{Y} = [Y_1, \cdots, Y_N] \in \mathbb{R}^{3 \times N}$ is the input of the network. We further denote a transformation matrix $T \in \text{SE}(3)$, where $R \in \text{SO}(3)$ and $t \in \mathbb{R}^3$ are the rotation matrix and translation vector in $T$. 
We get 
$\mathbf{Y}' = R\mathbf{Y} \oplus t$, where $\oplus$ is the operator to denote the addition of $t$ to every $3 \times 1$ entries of the other term. We say that the network is degenerate when it outputs \textbf{\emph{trivial solutions}} where $f(\mathbf{Y}') \equiv Rf(\mathbf{Y}) \oplus t$ is satisfied for all $R$ and $t$.

\begin{lemma}
$f(\mathbf{Y}') \equiv Rf(\mathbf{Y}) \oplus t$ when $f(.)$ outputs the \textbf{centroid} of the input point cloud, \ie, $f(\mathbf{Y}) = \frac{1}{N}\sum_n Y_n$ and $f(\mathbf{Y}') = \frac{1}{N}\sum_n Y'_n$.
\end{lemma}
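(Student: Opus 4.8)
The plan is to verify the identity $f(\mathbf{Y}') \equiv Rf(\mathbf{Y}) \oplus t$ by direct substitution, exploiting the fact that the centroid is an affine-equivariant statistic of the point set. First I would write out $\mathbf{Y}' = R\mathbf{Y} \oplus t$ componentwise, so that $Y'_n = RY_n + t$ for each $n = 1, \cdots, N$. Then I would substitute this into the definition $f(\mathbf{Y}') = \frac{1}{N}\sum_n Y'_n$ and simplify.

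The key computation is
\begin{equation}
f(\mathbf{Y}') = \frac{1}{N}\sum_{n=1}^{N} Y'_n = \frac{1}{N}\sum_{n=1}^{N}(RY_n + t) = R\bigg(\frac{1}{N}\sum_{n=1}^{N} Y_n\bigg) + t,
\end{equation}
where I have used linearity of matrix multiplication to pull $R$ out of the sum, and the fact that $\frac{1}{N}\sum_{n=1}^{N} t = t$. Recognizing $\frac{1}{N}\sum_n Y_n = f(\mathbf{Y})$, this is exactly $Rf(\mathbf{Y}) + t$, which in the paper's $\oplus$ notation (addition of $t$ to the single $3\times 1$ output column) is $Rf(\mathbf{Y}) \oplus t$. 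Since $R$ and $t$ were arbitrary, the identity holds for all $R \in \text{SO}(3)$ and $t \in \mathbb{R}^3$, so the centroid map is a trivial solution in the sense of the degeneracy definition.

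There is really no substantive obstacle here — the statement is a one-line consequence of the affine equivariance of the mean. The only thing requiring a modicum of care is bookkeeping with the $\oplus$ operator: when $f$ outputs a single point ($M = 1$ in the $\mathbb{R}^{3\times M}$ codomain, or equivalently one is comparing a single centroid to a single centroid), $\oplus$ just reduces to ordinary vector addition $+$, so the notational overhead collapses. I would state this reduction explicitly to avoid any ambiguity, and note that $R$ being orthogonal plays no role — the argument works for any linear map, which is worth flagging since it foreshadows why this degeneracy is genuinely dangerous and motivates the fixes proposed later in the section.
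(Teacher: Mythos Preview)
Your proposal is correct and follows essentially the same direct-substitution argument as the paper: write $Y'_n = RY_n + t$, plug into the centroid definition, use linearity to pull out $R$ and average the constant $t$, and identify the result as $Rf(\mathbf{Y}) \oplus t$. Your additional remarks on the $\oplus$ notation collapsing to ordinary $+$ and on orthogonality of $R$ being unnecessary are sound elaborations but not part of the paper's proof.
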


\begin{proof}
Putting $Y'_n = RY_n + t$ into $f(\mathbf{Y}') = \frac{1}{N}\sum_n Y'_n$, we get $f(\mathbf{Y}') = \frac{1}{N}\sum_n (RY_n + t) = R(\frac{1}{N}\sum_n Y_n) + t = Rf(\mathbf{Y}) \oplus t $. Hence, $f(Y') \equiv Rf(Y) \oplus t$ which completes our proof that the network degenerates when it outputs the centroid of the input point cloud.
\end{proof}

\begin{lemma}
$f(\mathbf{Y}') \equiv Rf(\mathbf{Y}) \oplus t$ when $f(.)$ is translational equivariant, \ie, $f(\cdot) \oplus t = f(\cdot \oplus t)$, and outputs points that are in the linear subspace of any \textbf{principal axis} from the input point cloud denoted as $\mathbf{U} = [U_1, U_2, U_3] \in \mathbb{R}^{3 \times 3}$, \ie, $f(\mathbf{Y}) = [c_1U_i^T, \cdots, c_MU_i^T]^T$ and
\begin{align}
\begin{split}
f(\mathbf{Y}') &= f(R\mathbf{Y} \oplus t) \\ &= f(R\mathbf{Y}) \oplus t~~~~~~~~~~~~~~~\text{\footnotesize{(translation equivariance)}} \\
&= [c_1{U'_i}^T, \cdots, c_M{U'_i}^T]^T \oplus t,        
\end{split}
\end{align}
where $U_i$ can be any principal axis in $\mathbf{U}$ and $c_1, \cdots, c_M$ are scalar coefficients in $\mathbb{R}$.
\end{lemma}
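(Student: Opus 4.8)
The plan is to reduce the statement to the single geometric fact that the principal axes of a point cloud rotate covariantly with it, \ie $U'_i = R U_i$, and then push this through the chain of equalities already displayed in the lemma. Since the displayed chain uses translation equivariance to get as far as $f(\mathbf{Y}') = [c_1{U'_i}^T,\cdots,c_M{U'_i}^T]^T \oplus t$, the only remaining gap is to identify $[c_1{U'_i}^T,\cdots,c_M{U'_i}^T]^T$ with $R\,f(\mathbf{Y}) = R[c_1 U_i^T,\cdots,c_M U_i^T]^T$.

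First I would recall that the principal axes $\mathbf{U}=[U_1,U_2,U_3]$ of $\mathbf{Y}$ are, by definition, an orthonormal eigenbasis of the sample covariance $C=\frac{1}{N}\sum_n (Y_n-\bar{Y})(Y_n-\bar{Y})^T$ with $\bar{Y}=\frac{1}{N}\sum_n Y_n$. For $\mathbf{Y}'=R\mathbf{Y}\oplus t$ a direct computation gives $\bar{Y}'=R\bar{Y}+t$ and hence $C'=\frac{1}{N}\sum_n (Y'_n-\bar{Y}')(Y'_n-\bar{Y}')^T = RCR^T$, the translation cancelling because the covariance is taken about the centroid; the same computation with $t=0$ shows $R\mathbf{Y}$ alone also has covariance $RCR^T$. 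Then I would transfer the eigendecomposition: if $CU_i=\lambda_i U_i$ then $(RCR^T)(RU_i)=RCU_i=\lambda_i(RU_i)$, so $RU_i$ is a unit eigenvector of $C'$ with the same eigenvalue, whence $U'_i=RU_i$. The scalar coefficients $c_1,\cdots,c_M$ encode the signed positions of the outputs along the shape-intrinsic principal axis and are therefore unchanged by the rigid motion — which is exactly what the lemma's displayed chain already assumes when it writes $f(R\mathbf{Y})=[c_1{U'_i}^T,\cdots,c_M{U'_i}^T]^T$.

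Finally I would substitute $U'_i=RU_i$ into the last line of the displayed chain: column $m$ of $[c_1{U'_i}^T,\cdots,c_M{U'_i}^T]^T$ is $c_m U'_i = c_m RU_i = R(c_m U_i)$, which is precisely column $m$ of $R[c_1 U_i^T,\cdots,c_M U_i^T]^T = R\,f(\mathbf{Y})$. Therefore $f(\mathbf{Y}')=R\,f(\mathbf{Y})\oplus t$ for every $R\in\text{SO}(3)$ and $t\in\mathbb{R}^3$, so the network is degenerate, completing the proof.

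The main obstacle is the justification of $U'_i=RU_i$: PCA axes are unique only up to sign, and if two principal eigenvalues coincide the corresponding axis is defined only up to a rotation within the eigenplane. So the argument must be read as ``for a consistent choice of principal axes'' — the same implicit regularity assumption that underlies the rest of the degeneracy analysis — and I would state this caveat explicitly rather than claim genuine uniqueness.
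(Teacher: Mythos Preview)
Your proof is correct and follows essentially the same route as the paper: both establish $C'=RCR^T$ for the centered covariance and deduce $U'_i=RU_i$, then substitute into the displayed chain. The only cosmetic difference is that the paper phrases the eigenvector step via SVD (writing $V'=(R\mathbf{U})\mathbf{D}(R\mathbf{U})^T$ and reading off $\mathbf{U}'=R\mathbf{U}$), whereas you argue directly that $RU_i$ is an eigenvector of $RCR^T$; for a symmetric matrix these are the same observation. Your explicit caveat about sign and eigenvalue-multiplicity ambiguities is a worthwhile addition that the paper leaves implicit.
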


\begin{proof}
Let $V = \frac{1}{N}\sum_n (Y_n - \Bar{Y}) (Y_n - \Bar{Y})^T$ and $V' = \frac{1}{N}\sum_n (Y'_n - \Bar{Y}') (Y'_n - \Bar{Y}')^T$ denote the covariance matrices of $\mathbf{Y}$ and $\mathbf{Y}'$, respectively. $\Bar{Y} = \frac{1}{N}\sum_n Y_n$ and $\Bar{Y}' = \frac{1}{N}\sum_n Y'_n$ are the centroids of $\mathbf{Y}$ and $\mathbf{Y}'$, respectively. Putting $Y'_n = RY_n + t$ into $\Bar{Y}'$ and $V'$, we get:
\begin{equation}
    V' 
       = R \frac{1}{N}\sum_n (Y_n - \Bar{Y}) (Y_n - \Bar{Y})^T R^T = RVR^T.
\label{eq:covariance}
\end{equation}
\noindent Taking the Singular Value Decomposition (SVD) of $V$ and $V'$, we get $V=\mathbf{U}\mathbf{D}\mathbf{U}^T$ and $V'=\mathbf{U}'\mathbf{D}'\mathbf{U}'^T$, where $\mathbf{D}$ and $\mathbf{D}'$ are the $3 \times 3$ diagonal matrices of singular values, and $\mathbf{U}$ and $\mathbf{U}'$ are the $3 \times 3$ Eigenvectors that are also the principal axes of $\mathbf{Y}$ and $\mathbf{Y}'$, respectively. Putting the SVD of $V$ and $V'$ into Eq.~\ref{eq:covariance}, we get:
\begin{align}
\begin{split}
    V' &= RVR^T = R\mathbf{U}\mathbf{D}\mathbf{U}^TR^T = (R\mathbf{U})\mathbf{D}(R\mathbf{U})^T \\
    &\equiv \mathbf{U}'\mathbf{D}'\mathbf{U}'^T~~~\Rightarrow \mathbf{U}' = R\mathbf{U}.
\end{split} \label{eq:uRelation}
\end{align}
Putting the relationship from Eq.~\ref{eq:uRelation} into $f(\mathbf{Y}') = [c_1 {U'_i}^T, \cdots, c_M {U'_i}^T]^T \oplus t$, we get:
\begin{equation}
f(\mathbf{Y}') = R[c_1U_i^T, \cdots, c_MU_i^T]^T \oplus t \equiv Rf(\mathbf{Y}) \oplus t,
\label{eq:principalDegeneracy}
\end{equation}
which completes our proof that the network degenerates when it outputs a set of points on any principal axis. 
\end{proof}

\paragraph{Discussions} We note that the network requires sufficient global semantic information of the input point cloud, \eg, the input is the whole point cloud or clusters of local neighbor points that contain large receptive fields, to learn the trivial solutions of centroid or set of points on the principal axes.
Hence, the degeneracies can be easily prevented by limiting the receptive fields of the FPN. We achieve this by setting the number of clusters $M$ and $K$ nearest neighbors of the clusters in the FPN (refer to Sec.~\ref{sec_det_fpn} for the definitions of $M$ and $K$) to reasonable values. Small values for $M$ or high values for $K$ increases the receptive field and causes the FPN to degenerate. Fig.~\ref{fig_wine_bottle_3d_2d} show some examples of the degeneracies with different $K$ values at $M=64$. It is interesting to note that the principal axis degeneracy occurs when $K$ is set to a mid-range value, and centroid degeneracy occurs when $K$ is set to a high value. This implies that larger receptive fields, \ie, a higher global semantic information is needed for the network to learn the centroid. We also notice experimentally that the degeneracies (both centroid and principal axis) occur in point clouds with more regular shapes, \eg objects from ModelNet40 where the centroid and principal axes are more well-defined.
\begin{figure}[h!] \centering
\subfigure{\includegraphics[width=0.15\textwidth]{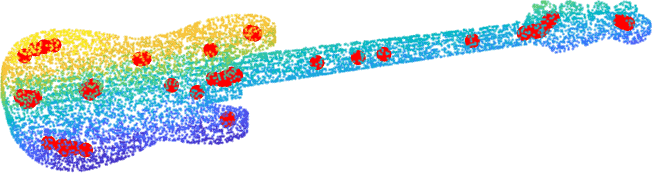}}
\subfigure{\includegraphics[width=0.15\textwidth]{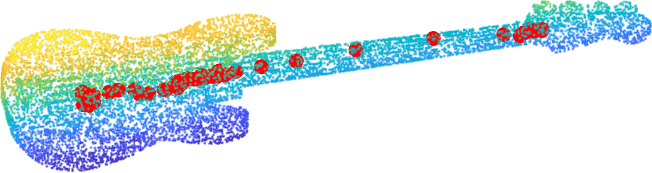}}
\subfigure{\includegraphics[width=0.15\textwidth]{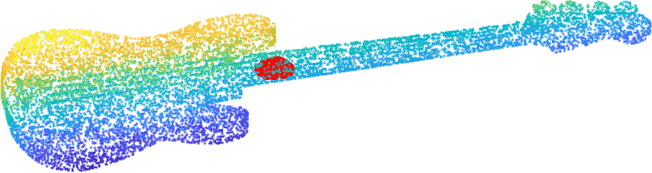}}
\setcounter{subfigure}{0}
\subfigure[]{\includegraphics[width=0.15\textwidth]{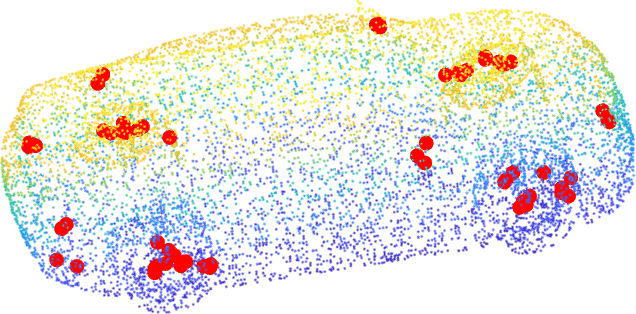}}
\subfigure[]{\includegraphics[width=0.15\textwidth]{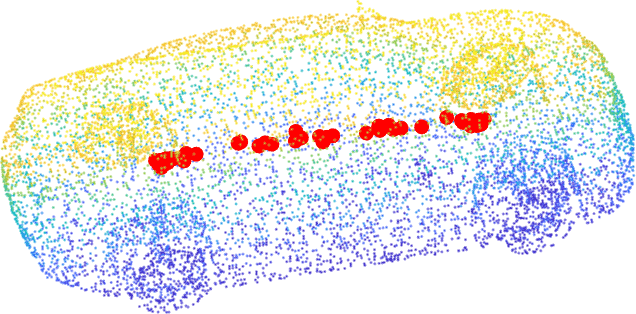}}
\subfigure[]{\includegraphics[width=0.15\textwidth]{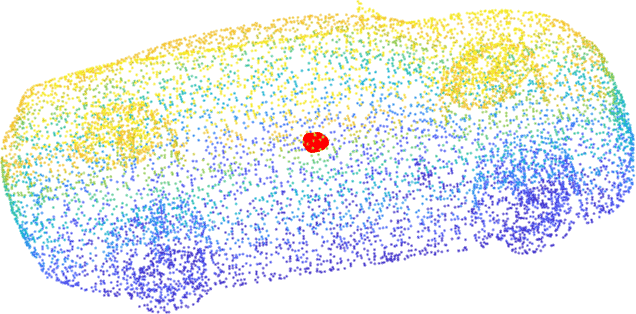}}
\caption{Increasing $K$ values in FPN causes degeneracies ($M=64$). (a) No degeneracy with $K=9$ (low value). (b) Principal axis degeneracy with $K=24$ (mid-range value). (c) Centroid degeneracy with $K=64$ (high value).} \label{fig_wine_bottle_3d_2d}
\vspace{-8pt}
\end{figure}

\section{Experiments} \label{sec_experiments}
\begin{figure*}[t!] \centering
    \includegraphics[width=0.24\textwidth]{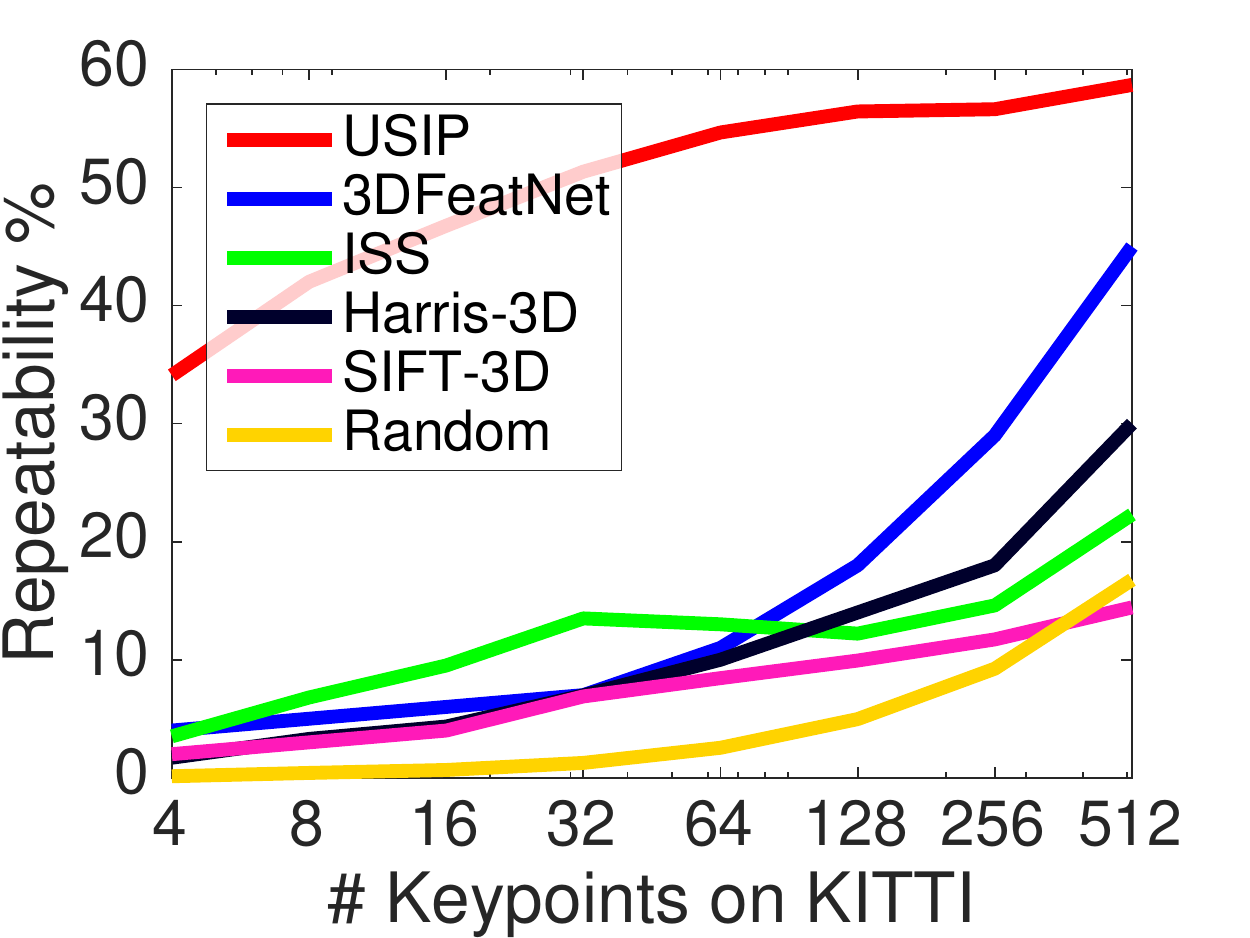}
    \includegraphics[width=0.24\textwidth]{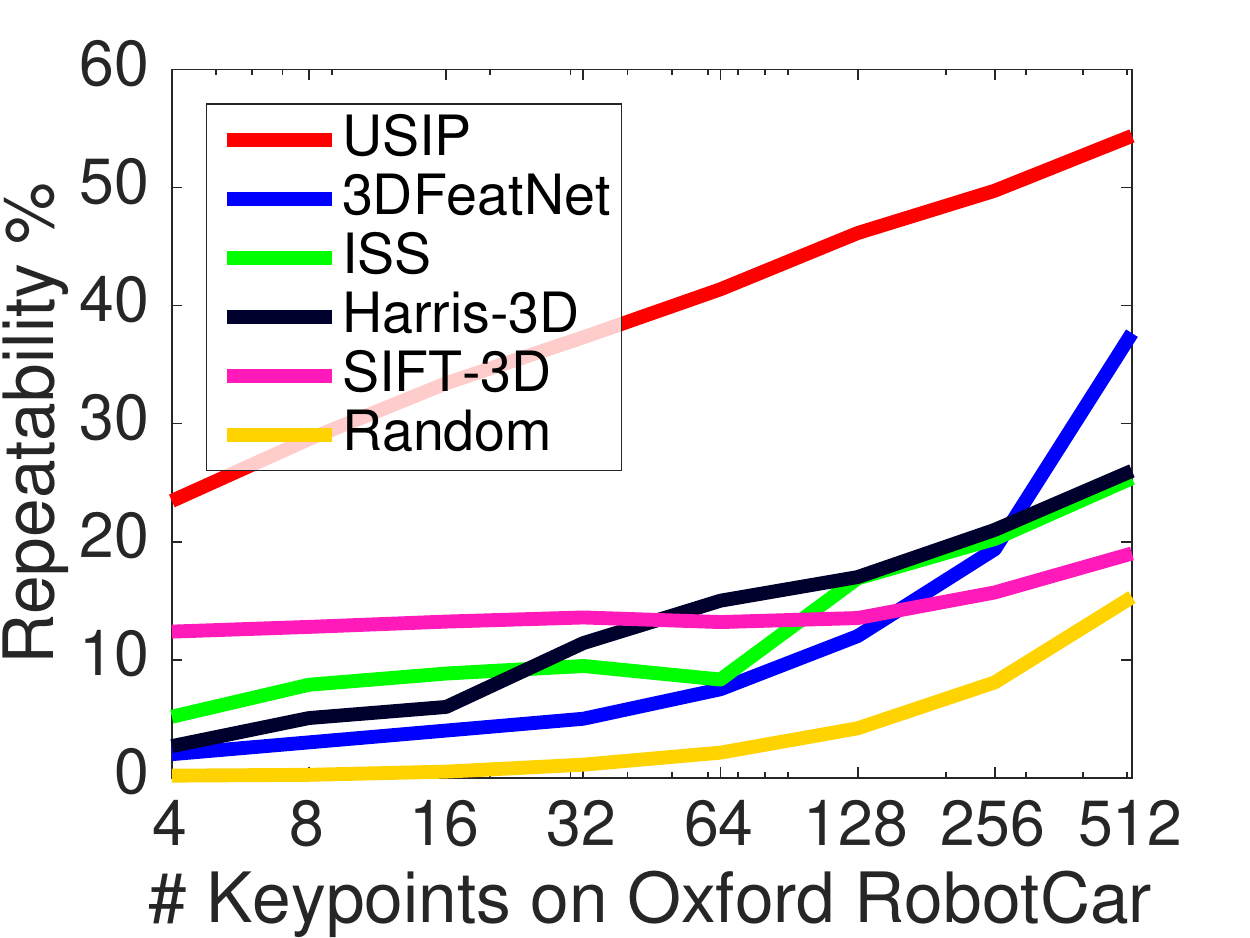}
    \includegraphics[width=0.24\textwidth]{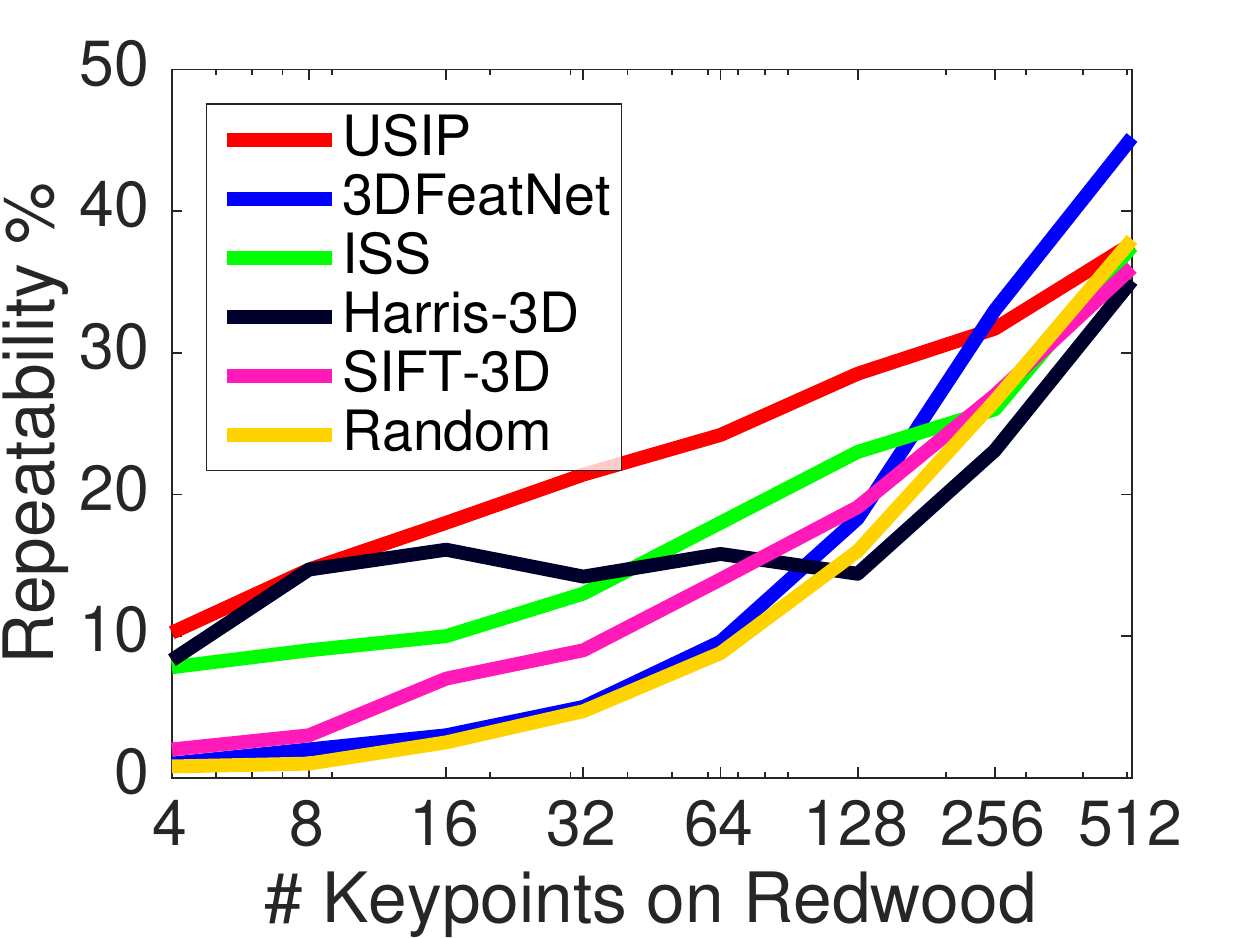}
    \includegraphics[width=0.24\textwidth]{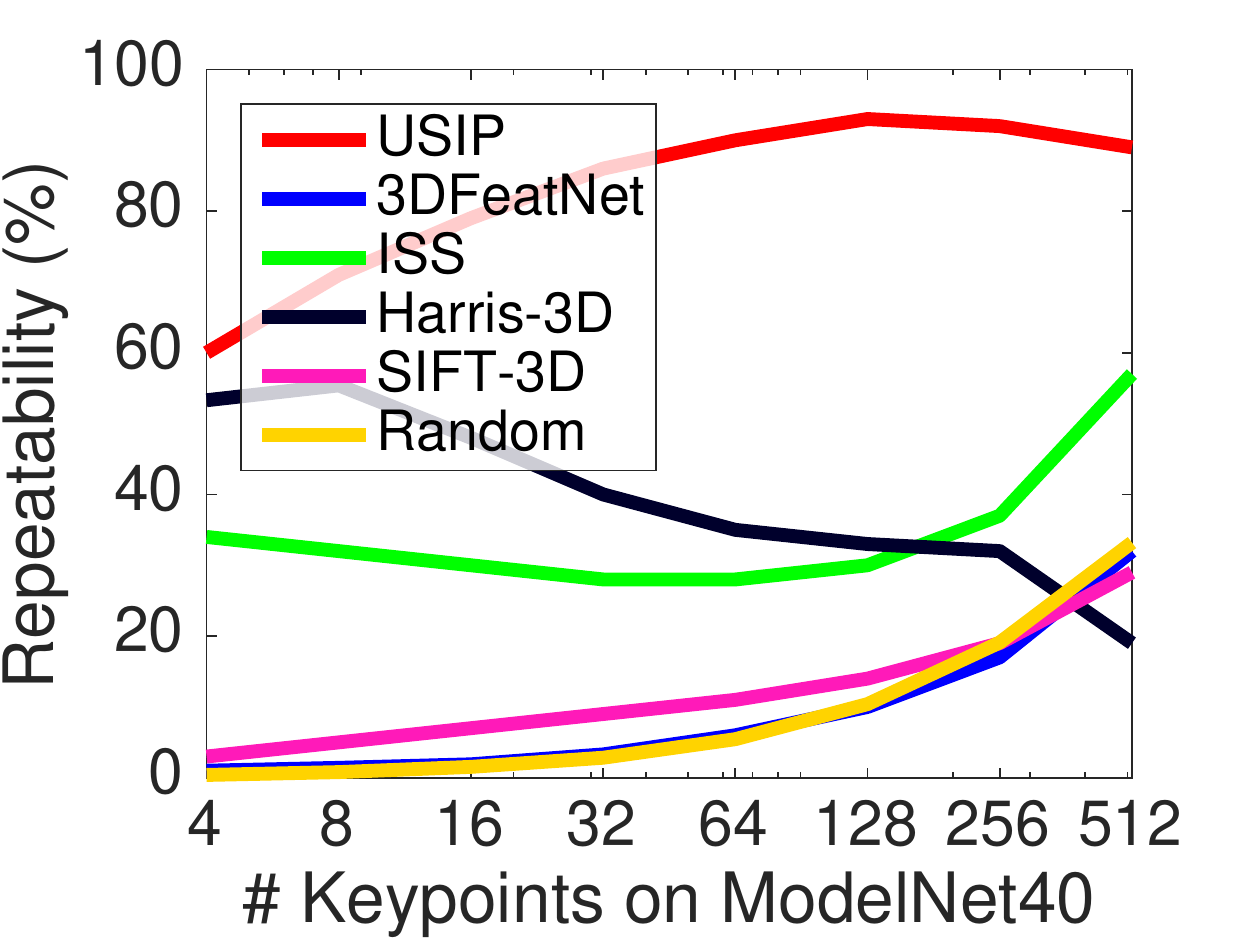}
    \caption{Relative repeatability when different number of keypoints are detected. Left to right: KITTI, Oxford, Redwood, ModelNet40.} \label{fig_repeatability}
    \vspace{-8pt}
\end{figure*} 
\begin{figure*}[t!] \centering
    \includegraphics[width=0.24\textwidth]{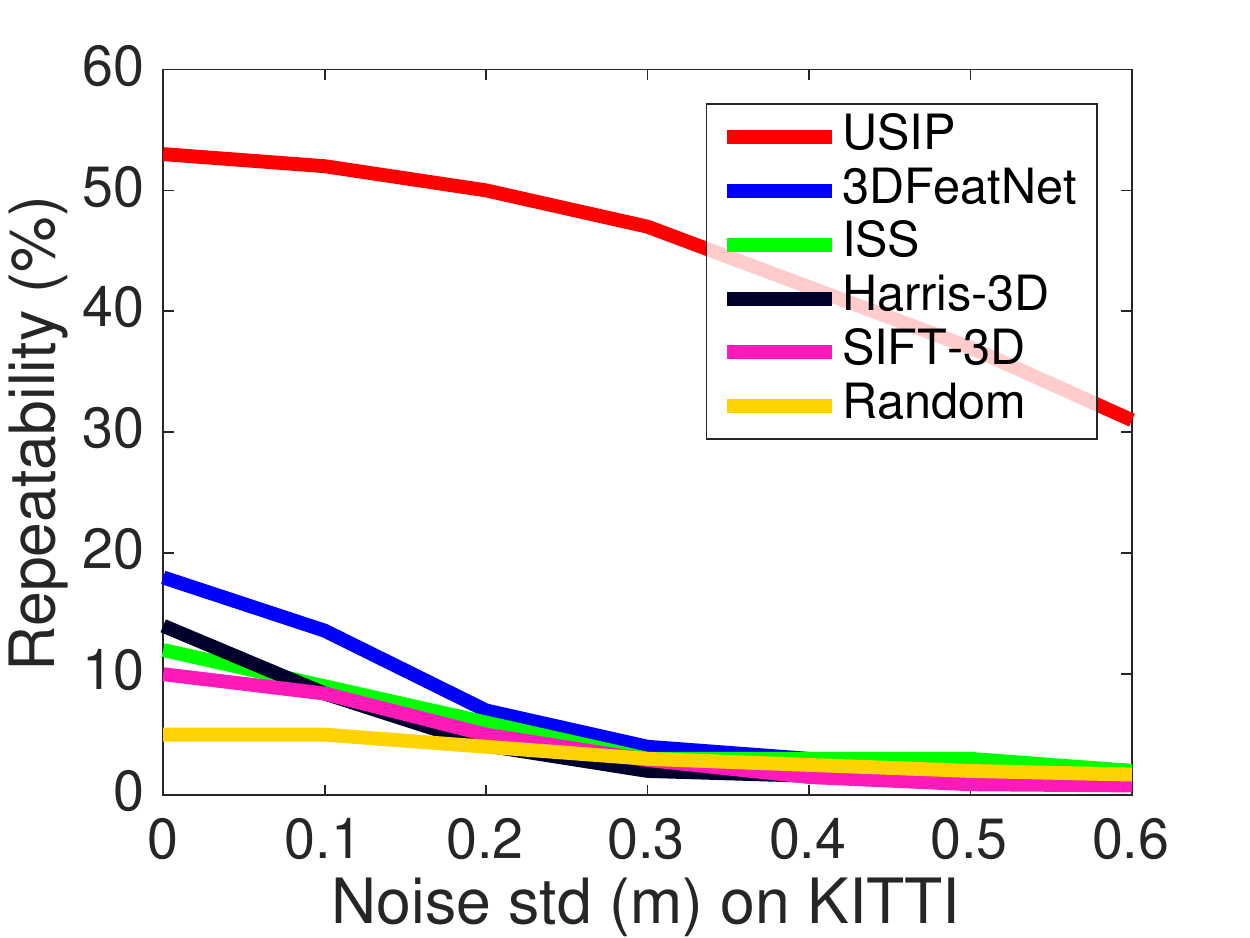}
    \includegraphics[width=0.24\textwidth]{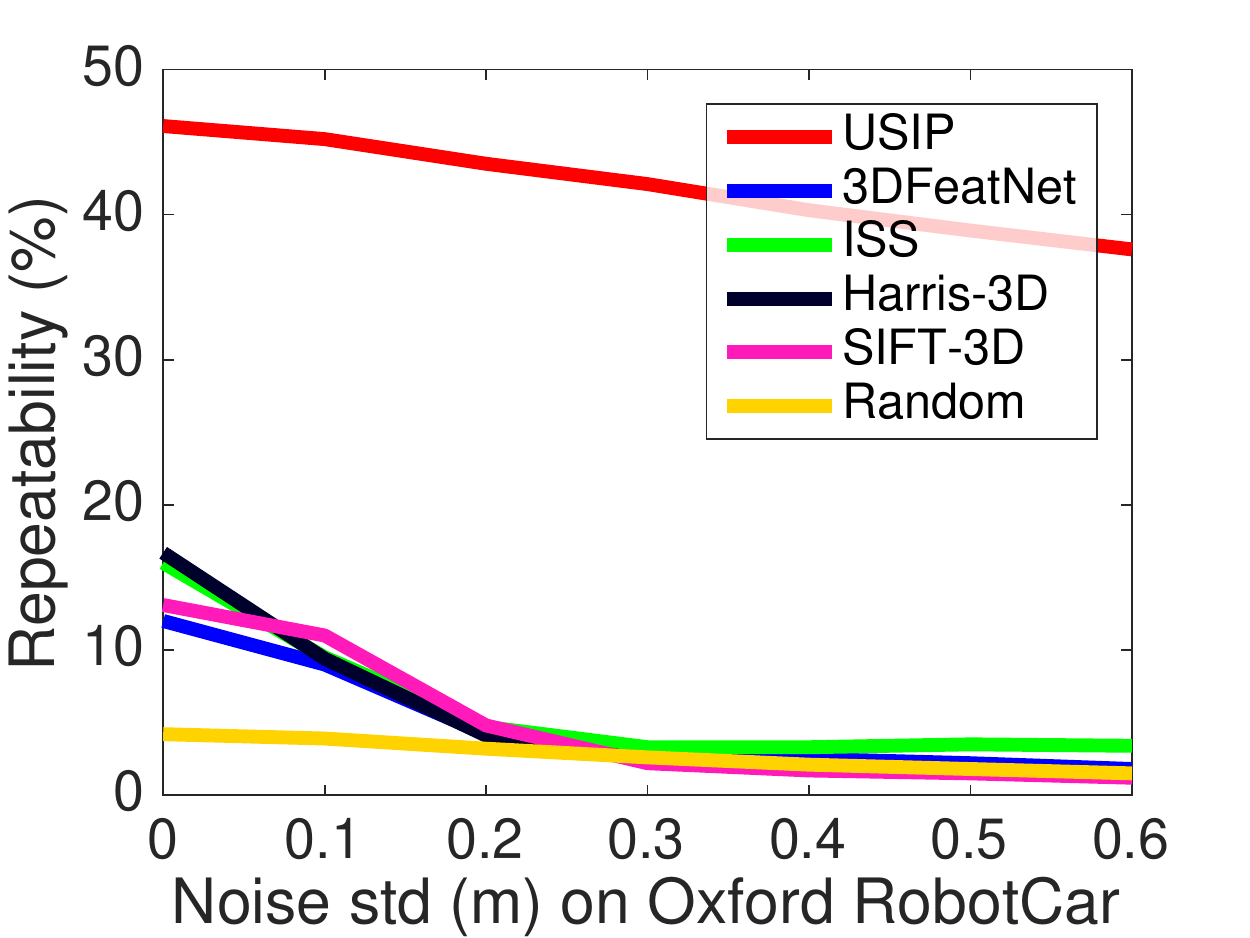}
    \includegraphics[width=0.24\textwidth]{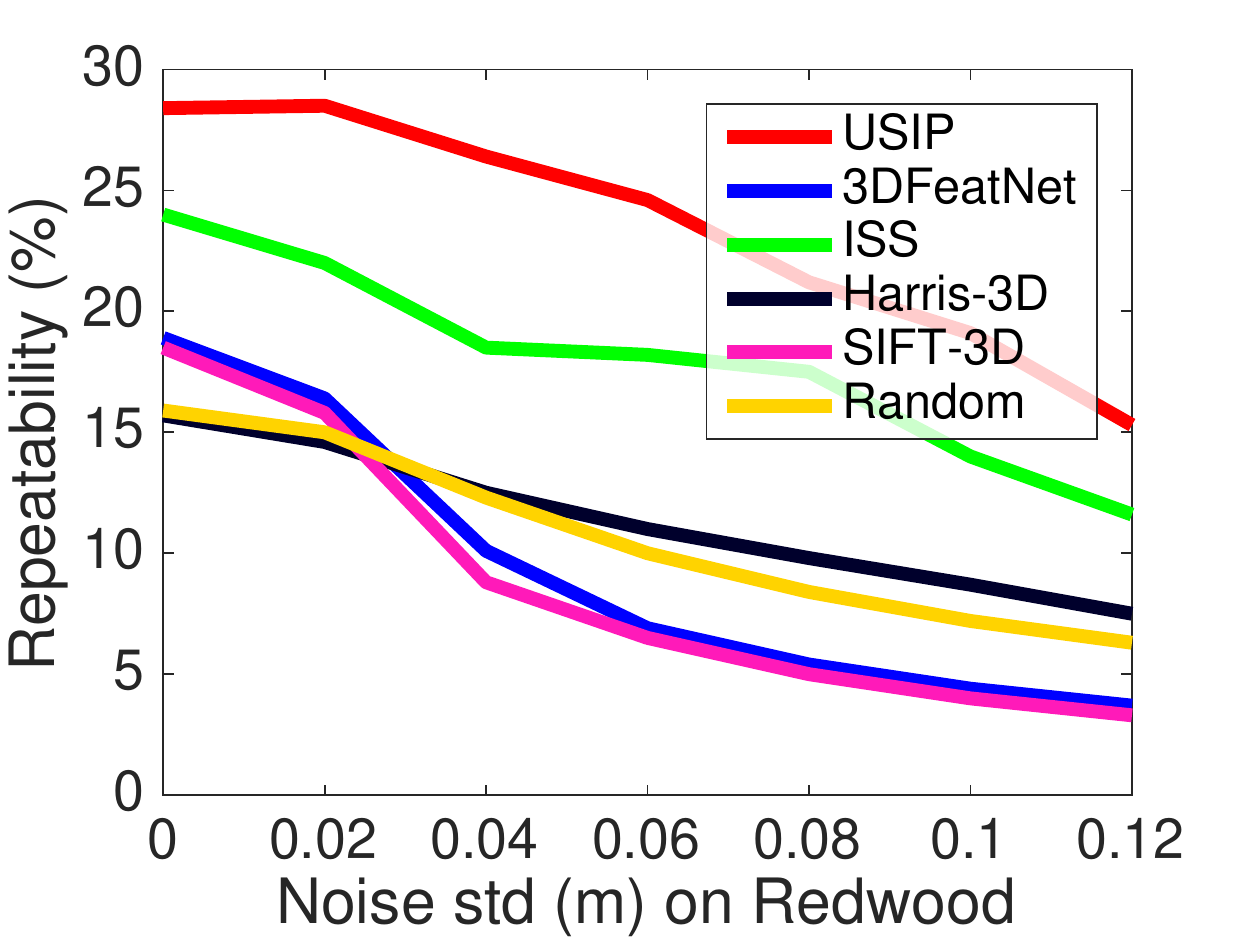}
    \includegraphics[width=0.24\textwidth]{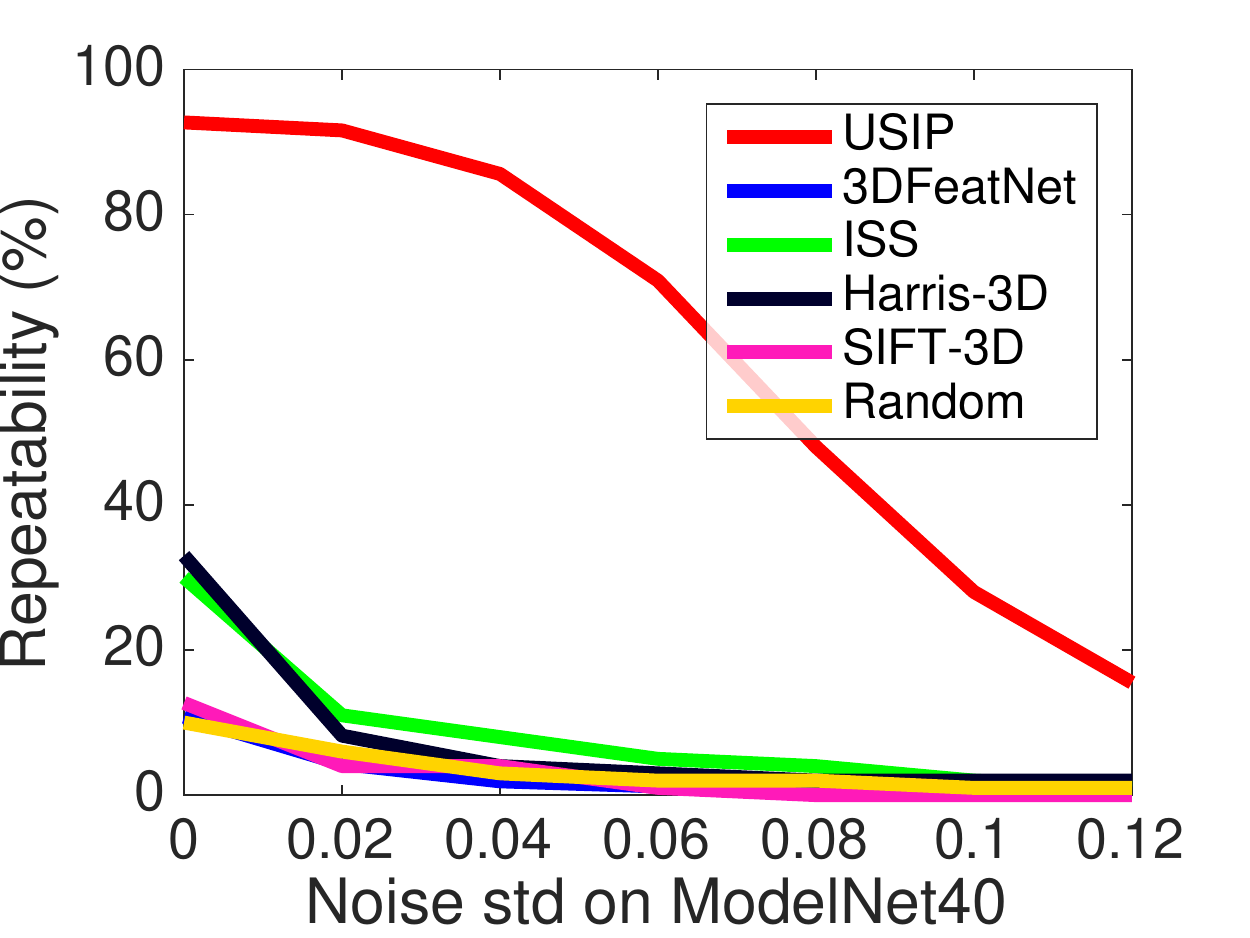}
    \caption{Relative repeatability when Gaussian noise $\mathcal{N}(0, \sigma_{noise})$ is added to the input point clouds. Keypoint number is fixed to 128. } \label{fig_repeatability_noisy}
    \vspace{-8pt}
\end{figure*} 
\begin{figure*}[t!] \centering
    \includegraphics[width=0.24\textwidth]{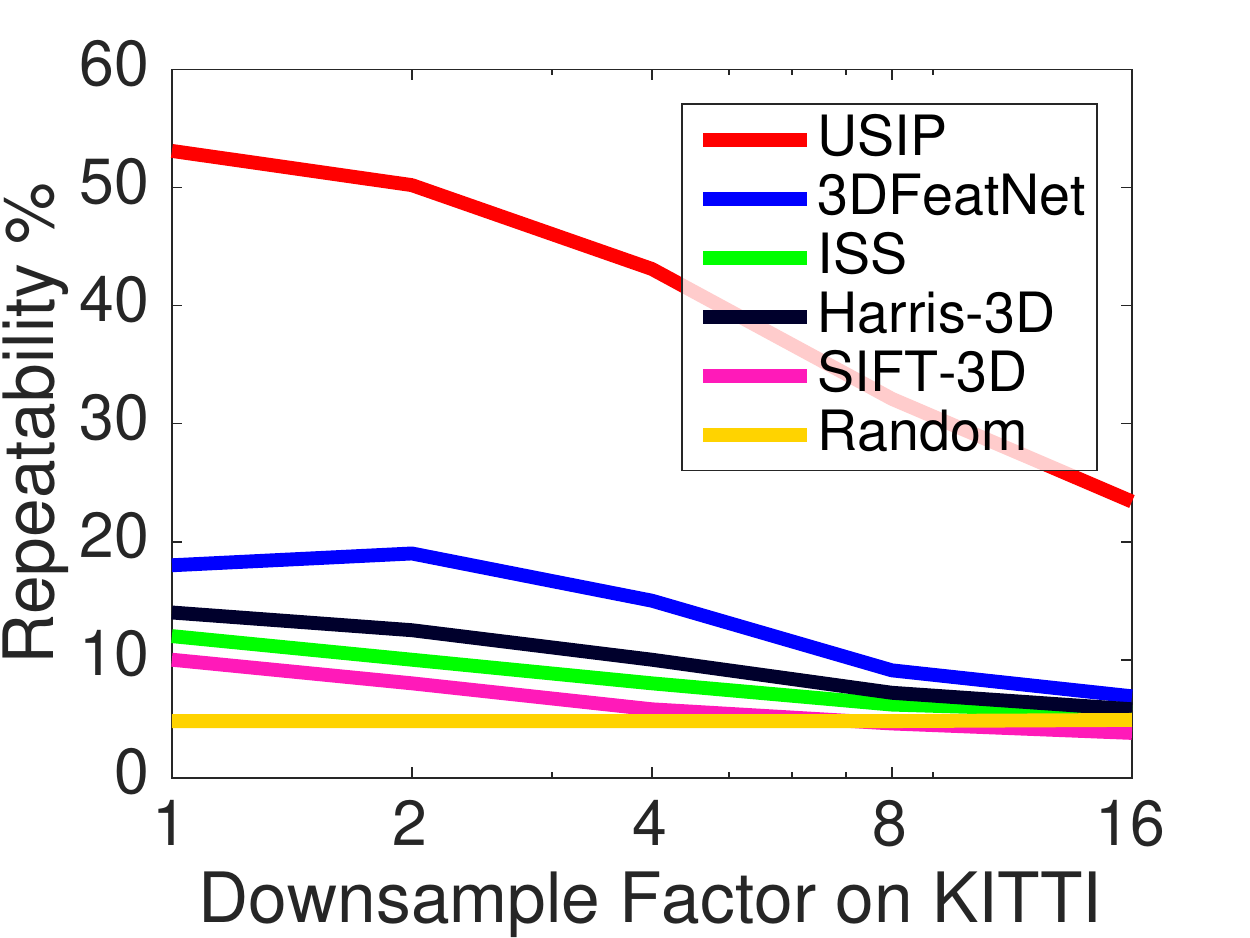}
    \includegraphics[width=0.24\textwidth]{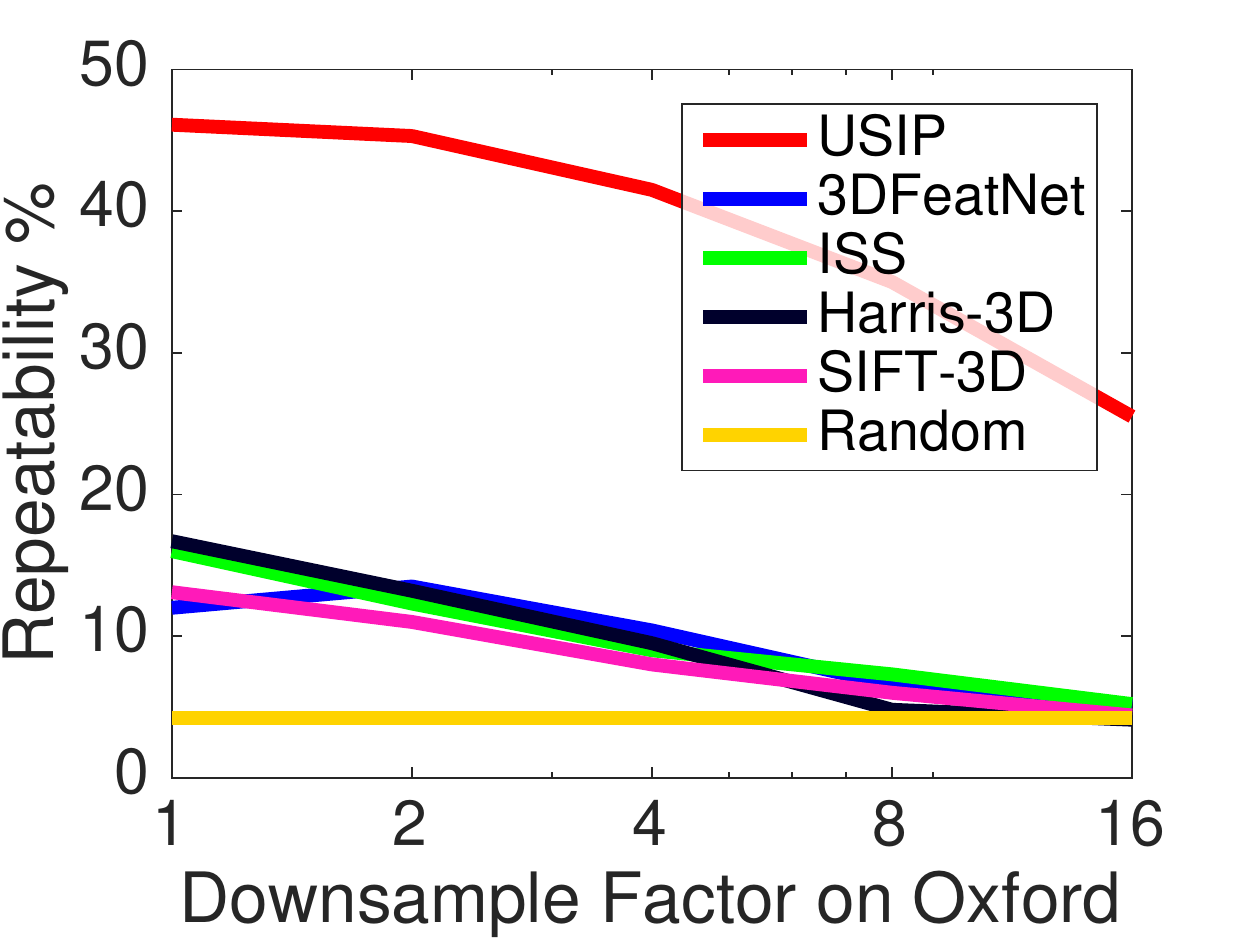}
    \includegraphics[width=0.24\textwidth]{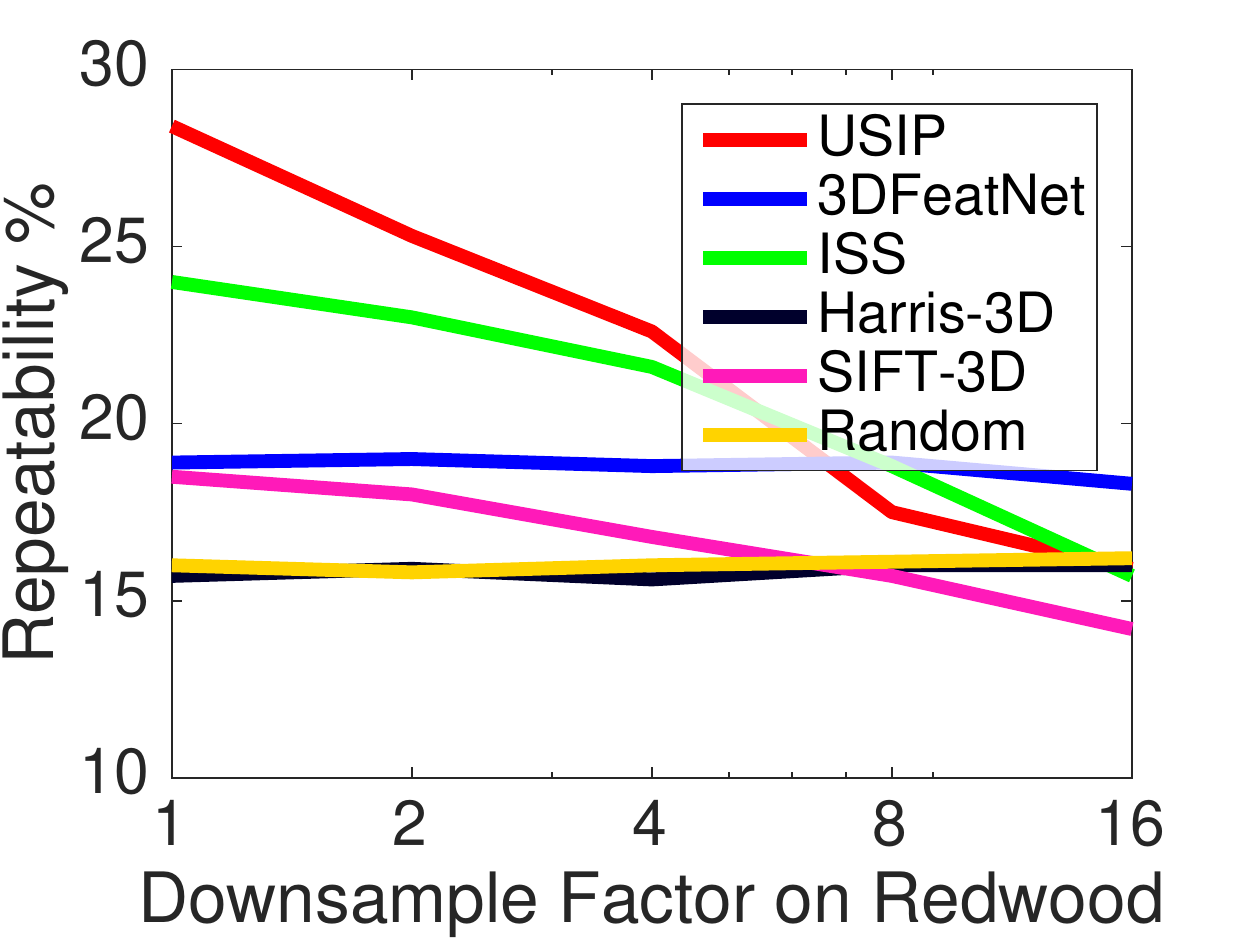}
    \includegraphics[width=0.24\textwidth]{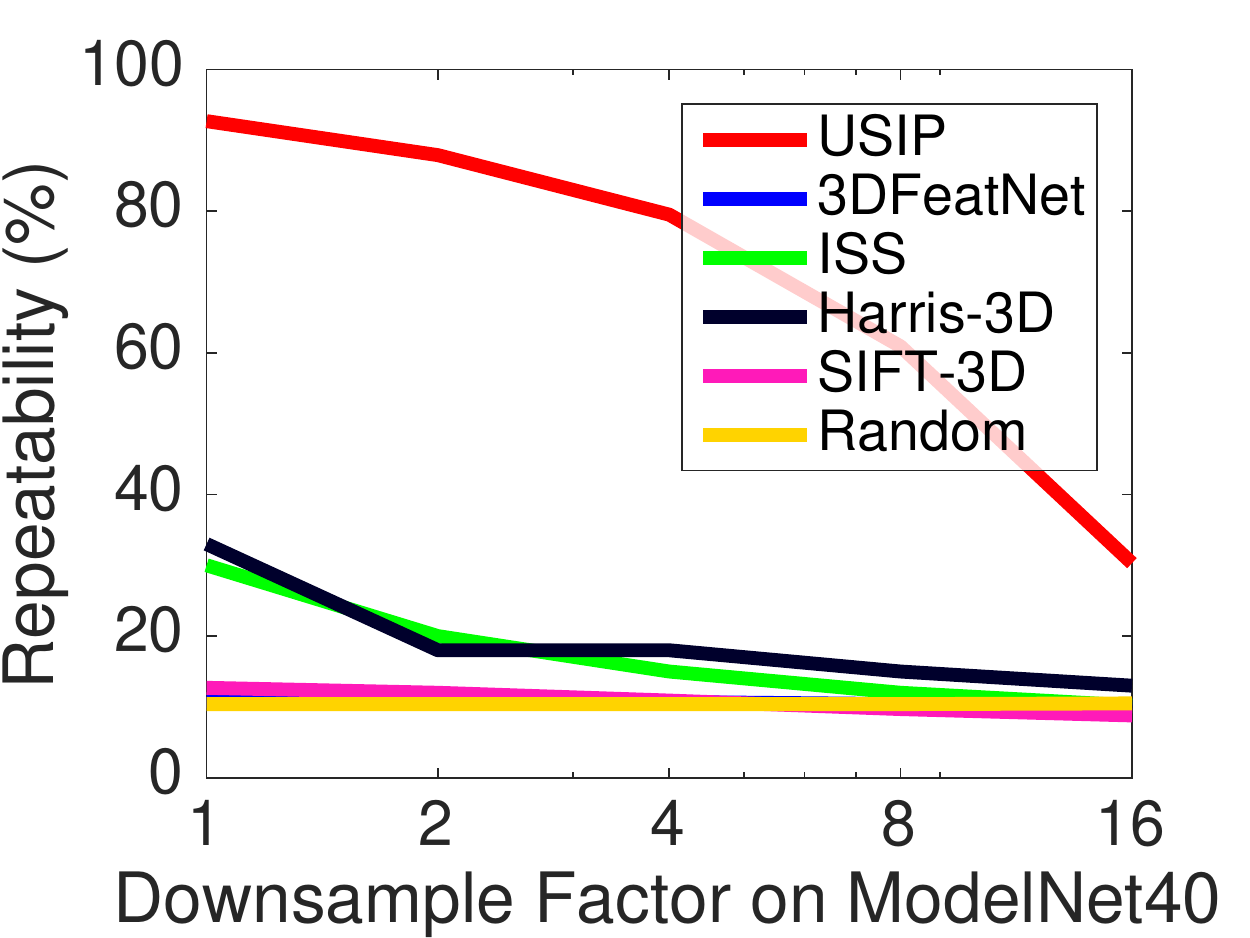}
    \caption{Relative repeatability when the input point cloud is randomly downsampled by some factors. Keypoint number is fixed to 128.} \label{fig_repeatability_downsample}
    \vspace{-8pt}
\end{figure*} 
Following \cite{tombari2013performance}, we evaluate the \textit{repeatability} (Sec.~\ref{sec_exp_repeatability}), \textit{distinctiveness} (Sec.~\ref{sec_exp_registration}) and \textit{computational efficiency} (Sec.~\ref{sec_exp_timing}) of our USIP detector on 4 datasets from object models, outdoor Lidar and indoor RGB-D scans. Additionally, we compare our evaluations to existing detectors - ISS \cite{zhong2009intrinsic}, Harris-3D \cite{harris1988combined}, SIFT-3D \cite{lowe2004distinctive} and 3DFeat-Net \cite{jian20183dfeat} . 

\vspace{-0.2cm}
\paragraph{Implementation Details}
Three USIP detectors are respectively trained for outdoor Lidars, RGB-D scans and object models. Specifically, we use the Oxford \cite{RobotCarDatasetIJRR} for outdoor Lidar, ``RGB-D reconstruction dataset" \cite{zeng20173dmatch} for RGB-D, and ModelNet40 \cite{wu20153d} for object models.
The PCL \cite{pcl} implementations of the classical detectors, \ie, ISS, Harris-3D and SIFT-3D are used for the comparisons. We take the pretrained models of 3DFeat-Net \cite{jian20183dfeat} for KITTI \cite{Geiger2012kitti} and Oxford, and train separate models for Redwood and ModelNet40 using the codes provided by 3DFeat-Net.

\vspace{-0.3cm}
\paragraph{Qualitative Visualization} 
Fig.~\ref{fig_exp_modelnet40} shows some results from our USIP detector on ModelNet40. We can clearly see that our USIP learns keypoints on corners, edges, center of small surfaces, etc. Keypoints in the first row of Fig.~\ref{fig_exp_modelnet40} are selected with Non-Maximum Suppression (NMS) and thresholding on the saliency uncertainty $\sigma$. In the second row, keypoints are selected with only NMS. Keypoints with small $\sigma$ are shown in bright red and get darker with larger $\sigma$.

\begin{figure}[h!] \centering
    \includegraphics[width=0.12\textwidth, height=0.12\textwidth, keepaspectratio]{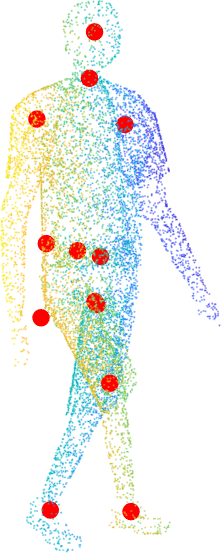}
    \includegraphics[width=0.12\textwidth, height=0.12\textwidth, keepaspectratio]{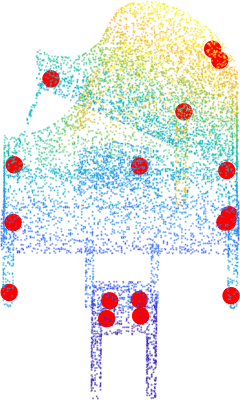}
    \includegraphics[width=0.12\textwidth, height=0.12\textwidth, keepaspectratio]{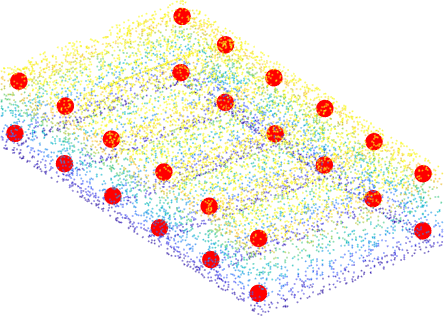}
    \includegraphics[width=0.12\textwidth, height=0.12\textwidth, keepaspectratio]{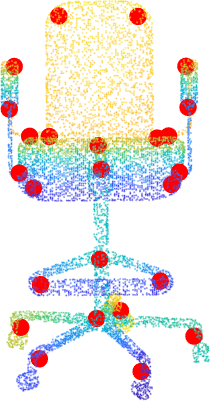}
    \includegraphics[width=0.12\textwidth, height=0.12\textwidth, keepaspectratio]{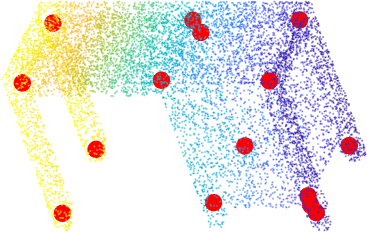}

    \includegraphics[width=0.12\textwidth, height=0.12\textwidth, keepaspectratio]{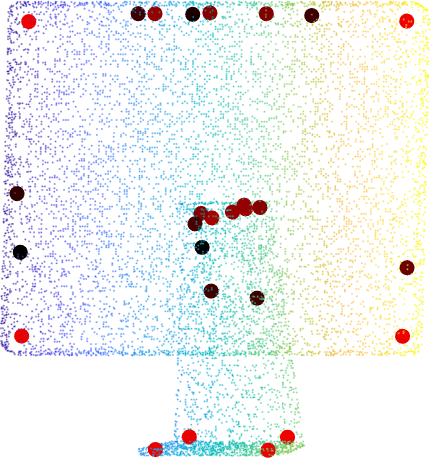}
    \includegraphics[width=0.12\textwidth, height=0.12\textwidth, keepaspectratio]{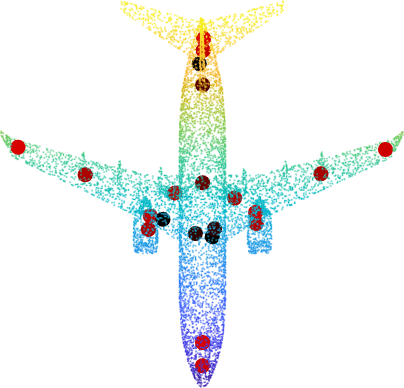}
    \includegraphics[width=0.12\textwidth, height=0.12\textwidth, keepaspectratio]{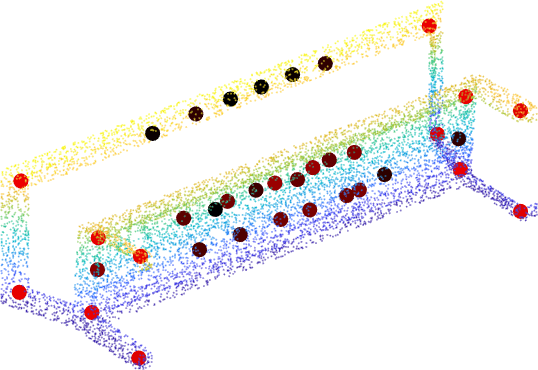}
    \includegraphics[width=0.12\textwidth, height=0.12\textwidth, keepaspectratio]{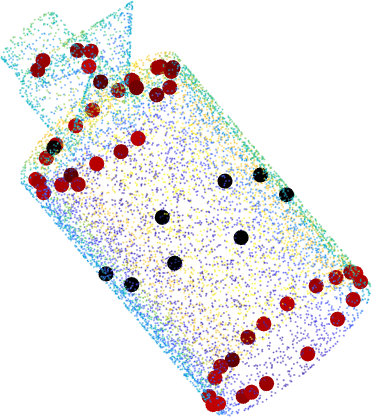}
    
    \caption{Examples of keypoints from our USIP on ModelNet40.} \label{fig_exp_modelnet40}
    \vspace{-8pt}
\end{figure}

\begin{table}[t]
\centering
\resizebox{0.48\textwidth}{!}
{%
\setlength\tabcolsep{2pt} 
\begin{tabular}{l|llll}
\hline
              & KITTI  & Oxford & Redwood  & ModelNet40 \\ \hline
Type          & Velodyne lidar & SICK lidar & RGB-D  & CAD Model \\
Scale         & 200m   & 60m    & 10m      & 2         \\
\# point       & 16,384 & 16,384 & 10,240   & 5,000      \\
$\epsilon$ in Eq.~\ref{equ_keypoint_repeated_def}    & 0.5m   & 0.5m   & 0.1m     & 0.03      \\
Rotation      & 2D     & 2D     & 3D       & 3D         \\
Noise         & Sensor & Sensor & Gaussian & Gaussian   \\
Occlusion     & Yes    & Yes    & Yes      & No         \\
Density Variation & Yes  & No   & No       & No         \\
Missing Parts & Yes    & Yes    & Yes      & No         \\ \hline
\end{tabular}%
}
\caption{Datasets used in evaluating keypoint repeatability.}
\label{tbl_datasets}
\vspace{-8pt}
\end{table}
\begin{table*}[t]
\centering
\begin{tabular}{l|llll|llll}
\hline
\multirow{2}{*}{} & \multicolumn{4}{c|}{Registration Failure Rate (\%)} & \multicolumn{4}{c}{Inlier Ratio (\%)} \\ \cline{2-9} 
 & Our Desc. & 3DFeatNet\cite{jian20183dfeat} & FPFH\cite{rusu2009fast} & SHOT\cite{tombari2010shot} & Our Desc. & 3DFeatNet & FPFH & SHOT \\ \hline
Random & 18.83 & 42.14 & 49.95 & 68.39 & 7.47 & 4.48 & 5.45 & 4.46 \\
SIFT-3D\cite{pcl, lowe2004distinctive} & 15.44 & 42.63 & 79.72 & 84.49 & 7.36 & 5.47 & 4.24 & 4.11 \\
ISS\cite{pcl, zhong2009intrinsic} & 5.97 & 25.96 & 37.09 & 69.83 & 8.52 & 4.71 & 4.44 & 3.45 \\
Harris-3D\cite{pcl, harris1988combined} & 3.81 & 13.56 & 49.49 & 51.29 & 10.57 & 6.58 & 4.78 & 5.00 \\
3DFeatNet\cite{jian20183dfeat} & 2.61 & 2.26 & 12.15 & 11.76 & 15.66 & 10.76 & 9.55 & 8.46 \\ \hline
USIP & \textbf{1.41} & \textbf{1.55} & \textbf{8.37} & \textbf{5.40} & \textbf{32.20} & \textbf{22.48} & \textbf{18.77} & \textbf{18.21} \\ \hline
\end{tabular}
\caption{Point cloud registration results on KITTI. The number of keypoints is fixed to 256.}
\label{tbl_registration}
\vspace{-8pt}
\end{table*}

\subsection{Repeatability} \label{sec_exp_repeatability}
Repeatibility refers to the ability of a detector to detect keypoints in the same locations under various disturbances such as view-point variations, noise, missing parts, etc. It is often taken as the most important measure of keypoint detectors because it is a standalone measure that depends only on the detector (without a descriptor).
Given two point clouds $\{\mathbf{X}, \tilde{\mathbf{X}}\}$ of a scene captured from different view-points such that $\{\mathbf{X}, \tilde{\mathbf{X}}\}$ are related by a rotation matrix $R \in \text{SO}(3)$ and a translational vector $t\in \mathbb{R}^{3}$. A keypoint detector detects a set of keypoints $\mathbf{Q} = [Q_1, \cdots, Q_M]$ and $\tilde{\mathbf{Q}} = [\tilde{Q}_1, \cdots, \tilde{Q}_M]$ from $\{\mathbf{X}, \tilde{\mathbf{X}}\}$, respectively.
A keypoint $Q_i \in \mathbf{Q}$ is repeatable if the distance between $RQ_i+t$ and its nearest neighbor $\tilde{Q}_j \in \tilde{\mathbf{Q}}$ is less than a threshold $\epsilon$, \ie, 
\begin{equation} \label{equ_keypoint_repeated_def}
    \|RQ_i+t - \tilde{Q}_j\|_2 < \epsilon.
\end{equation}

\vspace{-0.4cm}
\paragraph{Test Datasets} We evaluate repeatability on four test datasets - KITTI, Oxford, Redwood and ModelNet40. Note that our USIP is not trained on KITTI nor Redwood. We use the KITTI and Oxford test datasets prepared by 3DFeat-Net \cite{jian20183dfeat}. Each pair of point clouds $\{\mathbf{X}, \tilde{\mathbf{X}}\}$ are captured from nearby locations of within 10m and manually augmented with random 2D rotations. $\{\mathbf{X}, \tilde{\mathbf{X}}\}$ in Redwood are from simulated RGB-D cameras with 3D rotations / translations and Gaussian noise. The overlapped areas between $\{\mathbf{X}, \tilde{\mathbf{X}}\}$ are as low as 30\%. In ModelNet40, $\tilde{\mathbf{X}}$ is obtained by augmenting $\mathbf{X}$ with random 3D rotations. Points in KITTI, Oxford and Redwood are in its original scale while points in ModelNet40 are normalized to $[-1, 1]$. Details of the datasets are shown in Tab.~\ref{tbl_datasets}. The scale refers to the diameter of the point clouds. 

\vspace{-0.3cm}
\paragraph{Relative Repeatability}
We use relative repeatability that normalizes over the total number of detected keypoints $|\mathbf{Q}|$ for fair comparisons, \ie, $\text{repeatability} = |\mathbf{Q}_{\text{rep}}| / |\mathbf{Q}|$,
where $\mathbf{Q}_{\text{rep}}$ is the number of keypoints that passed the repeatability test in Eq.~\ref{equ_keypoint_repeated_def}.
We set the parameters of each keypoint detector in each dataset to generate 4, 8, 16, 32, 64, 128, 256 and 512 keypoints or close to these numbers when it is not possible to set the detectors (SIFT-3D, Harris-3D and ISS) to generate exact number of keypoints. Note that in general the repeatability should be proportional to the number of keypoints. In the extreme case that $\mathbf{Q}=\mathbf{X}$, \ie, each point is regarded as a keypoint, the repeatability is the same as the percentage of overlap between $\{\mathbf{X}, \tilde{\mathbf{X}}\}$. 
As shown in Fig.~\ref{fig_repeatability}, our USIP generally outperforms other detectors by a significant margin on the 4 datasets over 8 different number of keypoints. In the extremely hard case that only 4 keypoints are detected, our method achieved relative repeatability of 34\%, 23\%, 10\% and 60\% for KITTI, Oxford, Redwood and ModelNet40, respectively. In the case of 64 keypoints, our performance is roughly 4.2x, 2.8x, 1.3x and 2.6x higher than the second best detector.

\vspace{-0.35cm}
\paragraph{Robustness to Noise}
The original points in KITTI and Oxford are already corrupted with sensor noise. We further augment the point clouds in the 4 datasets with Gaussian noise $\mathcal{N}(0, \sigma_{noise})$, where $\sigma_{noise}$ is up to $0.6$m for KITTI and Oxford, $0.12$m for Redwood and $0.12$ (no unit) for ModelNet40. The number of keypoints is fixed to 128.
Our USIP is a lot more robust than other detectors as shown in Fig.~\ref{fig_repeatability_noisy}. In KITTI and Oxford, the performances of other detectors fall to the level of random sampling when $\sigma_{noise} \geq 0.2$m, while our USIP does not show significant drop in performance even with $\sigma_{noise}\geq0.6m$. In Redwood, other methods except USIP and ISS deteriorate to random sampling with $\sigma_{noise}\geq0.02$m. In ModelNet40, our method maintain high repeatability of 91\% with $\sigma_{noise}=0.02$, while all other methods drop below 8\%.

\vspace{-0.35cm}
\paragraph{Robustness to Downsampling}
We evaluate the repeatability of the detectors on input point clouds downsampled by some factors using random selection.
The results are shown in Fig.~\ref{fig_repeatability_downsample}, where the down-sample factor denoted as $\alpha$ means the number of points is reduced to $\frac{1}{\alpha}$ of the original number shown in Tab.~\ref{tbl_datasets}. We can see that the repeatability of our USIP remains satisfactory even with a $16\times$ downsampling on KITTI, Oxford and ModelNet40. The only exception is the Redwood dataset, where almost all detectors perform poorly on high downsample factors. Indoor RGB-D scans in Redwood consist of many large and flat surface like wall, ceiling, etc. Furthermore, there are very few distinguishable and non-occluded structures, which are further aggrevated by severe downsampling. Hence, it is difficult to detect repeatable keypoints with these RGB-D scans.

\subsection{Distinctiveness: Point Cloud Registration} \label{sec_exp_registration}
Distinctiveness is a measure of the performance of keypoint detectors and descriptors for finding correspondences in point cloud registration. Hence, distinctiveness is not as good as repeatability as an evaluation criterion on keypoint detectors because it is confounded with the performance of the descriptor. We mitigate this limitation by evaluating point cloud registration over several existing keypoint descriptors. We also use the results to show that our USIP detector works with different existing keypoint descriptors.

\vspace{-0.3cm}
\paragraph{Experiment Setup}
We follow the point cloud registration pipeline from 3DFeat-Net \cite{jian20183dfeat} on their KITTI test dataset. Four descriptors are used to perform keypoint description, \ie, three off-the-shelf descriptors: 3DFeatNet, FPFH \cite{rusu2010fast}, SHOT\cite{tombari2010shot}, and our own descriptor inspired by 3DFeat-Net with minor modifications, which is denoted as ``Our Desc.'' (details are in our supplementary material). Registration of a pair of point clouds involves 4 steps: (a) Extract keypoints and their corresponding descriptor vectors from each point cloud. (b) Establish keypoint-to-keypoint correspondences by nearest neighbor search of the descriptor vectors. (c) Perform RANSAC on the two matched keypoint sets to find the rotation and translation that have the most inliers. (d) Compare the resulted rotation and translation with the ground truth. A pair of point cloud is regarded as successfully registered if $\text{Relative Translational Error (RTE)} < 2$m, and $\text{Relative Rotation Error (RRE)} < 5\degree$.

\vspace{-0.3cm}
\paragraph{Registration Results} We perform registration evaluations over the combination of 6 keypoint detectors and 4 descriptors. The registration failure rate and keypoint inlier ratio are shown in Tab.~\ref{tbl_registration}. Compared to other detectors, our USIP achieves the lowest registration failure rate and the highest inlier ratio with a considerable margin on all the 4 descriptors. The significance of the results in Tab.~\ref{tbl_registration} is two fold. First, our USIP works well with
various hand-crafted (FPFH and SHOT) and deep learning-based (our desc. and 3DFeat-Net) descriptors. 
Second, our USIP produces more distinctive keypoints since it consistently outperforms other keypoint detectors over the different descriptors on registration failure rate and keypoint inlier ratio as shown in Tab.~\ref{tbl_registration}.
The experimental configurations in Tab.~\ref{tbl_registration} is not the optimal setting for our USIP detector and descriptor nor the 3DFeatNet because we have to fix the number of keypoints for fair comparison. In Tab.~\ref{tbl_registration_best_config}, we illustrate the best registration results for our USIP and 3DFeatNet on KITTI without limitation on the number of keypoints. We again achieve lower failure rate and higher inlier ratio. In addition, we show the visualization of keypoint matching results of two examples from KITTI and Oxford in Fig.~\ref{fig_exp_outdoor_vis}.

\begin{table}[h!]
\centering
\resizebox{0.48\textwidth}{!}{%
\setlength\tabcolsep{1pt} 
\begin{tabular}{l|l|llll}
\hline
Detector & Descriptor & Fail(\%) & Inlier(\%) & RTE(m) & RRE (\degree) \\ \hline
3DFeat-Net & 3DFeat-Net & 0.57 & 12.9 & $0.26 \pm 0.26$ & $0.56\pm0.46$ \\ 
USIP & Our Desc. & \textbf{0.24} & \textbf{28.0} & $\bm{0.21\pm0.24}$ & $\bm{0.42\pm0.32}$ \\ \hline
\end{tabular}%
}
\caption{Point cloud registration on KITTI from the optimal configurations of 3DFeat-Net and our USIP.}
\label{tbl_registration_best_config}
\vspace{-12pt}
\end{table}

\begin{figure}[h!]
    \centering
    \includegraphics[width=0.23\textwidth]{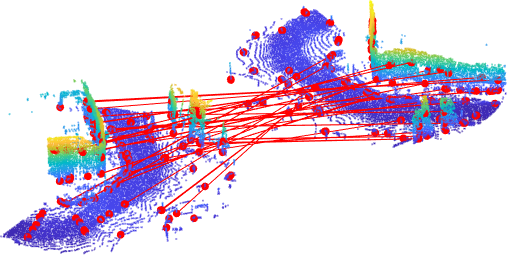}
    \includegraphics[width=0.23\textwidth]{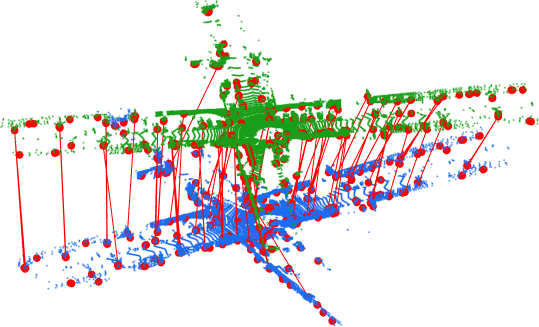}
    \caption{Keypoints and matches from our USIP detector and ``Our Desc.". Best view with color and zoom-in.} \label{fig_exp_outdoor_vis}
    \vspace{-8pt}
\end{figure}

\subsection{Computational Efficiency} \label{sec_exp_timing}
Hand-crafted detectors are deployed with single thread C++ codes on an Intel i7 6950X CPU. Our USIP and 3DFeatNet are deployed on a Nvidia 1080Ti, with PyTorch and TensorFlow, respectively. Computational efficiency is evaluated on 2,391 KITTI point clouds, where each point cloud is downsampled to 16,384 points. We record the average time taken to extract 128 keypoints from each point cloud. As shown in Tab.~\ref{tbl_timing}, our USIP is an order of magnitude faster than other detectors except random sampling.

\begin{table}[h!]
\centering
\resizebox{0.48\textwidth}{!}{%
\setlength\tabcolsep{4pt} 
\begin{tabular}{llllll}
\hline
Random & SIFT-3D & ISS & Harris-3D & 3DFeatNet & USIP \\ \hline
0.0005 & 0.163 & 0.388 & 0.150 & 0.438 & \textbf{0.011} \\ \hline
\end{tabular}
}
\caption{Average time (in seconds) to extract 128 keypoints from KITTI point clouds respectively downsampled to 16,384 points.}
\label{tbl_timing}
\vspace{-4pt}
\end{table}

\section{Conclusion} \label{sec_conclusion}
In this paper, we present the USIP detector, an unsupervised deep learning-based keypoint detector for 3D point clouds. A probabilistic chamfer loss is proposed to guide the network to learn highly repeatable keypoints. We provide mathematical analysis and solutions for network degeneracy, which are supported by experimental results. Extensive evaluations are performed with Lidar scans, RGB-D images and CAD models. Our USIP detector out-performs existing detectors by a significant margin in terms of repeatability, distinctiveness and computational efficiency.

%


{\small
\bibliographystyle{ieee}
\bibliography{ICCV2019_REF}
}




\clearpage
\appendix

\section{Overview}
We provide more details on the algorithms and experiments described in the main paper. Sec.~\ref{sec_suppl_degeneracy} presents more examples of the network degeneracy. Sec.~\ref{sec_suppl_L_p} evaluates the effect of point-to-point loss $\mathcal{L}_p$ on the keypoint repeatability. Sec.~\ref{sec_suppl_descriptor} illustrates the details of our feature descriptor design. Sec.~\ref{sec_suppl_pc_registration} gives more experiments on point cloud registration tasks. Sec. \ref{sec_suppl_visualization} presents visualizations of our USIP keypoints in various datasets.

\section{More Examples on Degeneracy} \label{sec_suppl_degeneracy}
As analyzed in Sec.~5, our FPN degenerates when the receptive field becomes sufficiently large, 
\ie, it has gained sufficient global semantic information.
The receptive field of the FPN is controlled by two parameters: number of keypoint proposals $M$ and number of neighbors $K$ in the $K$NN feature aggregation. More specifically, the receptive field size is proportional to $K$ and inversely proportional to $M$. In this section, we visualize the network degeneracy by gradually enlarging the receptive field. Fig.~\ref{fig_suppl_vis_degeneracy_m64} shows the degeneracies when $M=64$ and $K=\{9, 24, 32, 40, 48, 64\}$. Fig.~\ref{fig_suppl_vis_degeneracy_k9} shows the degeneracies when $K=9$ and $M=\{64, 24, 20, 16, 12, 9\}$.

\begin{figure}[H] \centering
    \includegraphics[width=0.15\textwidth, height=0.16\textwidth, keepaspectratio, angle=0]{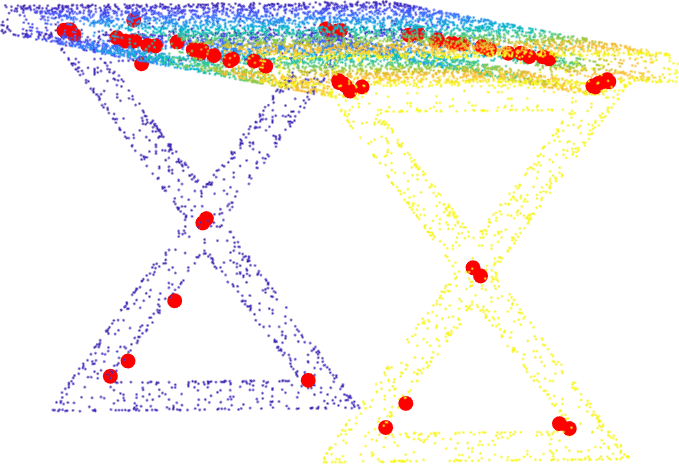}
    \includegraphics[width=0.15\textwidth, height=0.16\textwidth, keepaspectratio, angle=0]{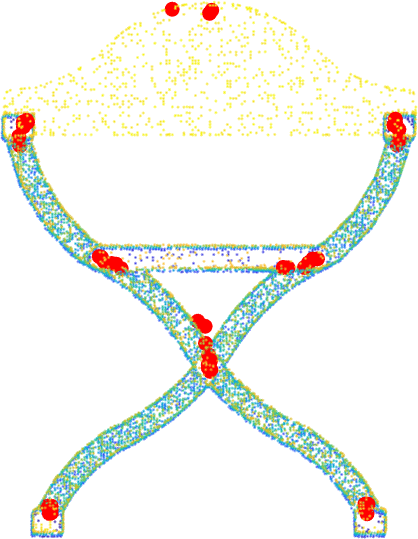}
    \includegraphics[width=0.15\textwidth, height=0.16\textwidth, keepaspectratio, angle=90]{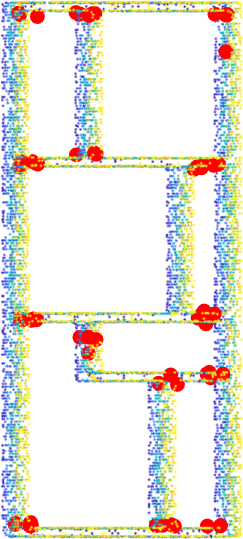}
    
    \includegraphics[width=0.15\textwidth, height=0.16\textwidth, keepaspectratio, angle=0]{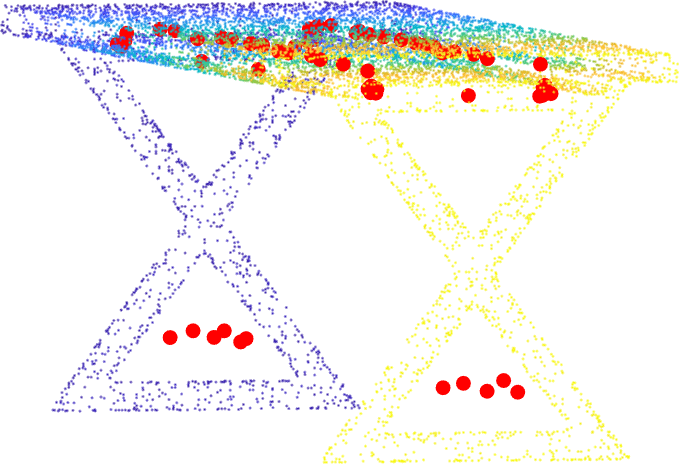}
    \includegraphics[width=0.15\textwidth, height=0.16\textwidth, keepaspectratio, angle=0]{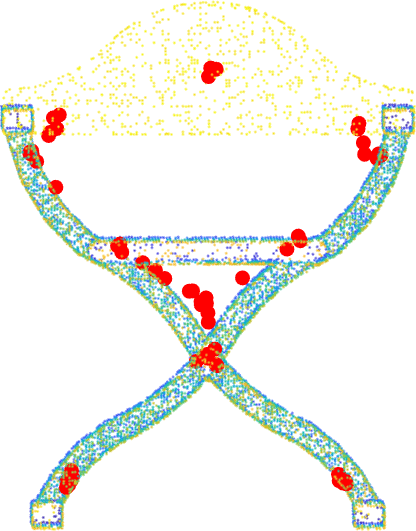}
    \includegraphics[width=0.15\textwidth, height=0.16\textwidth, keepaspectratio, angle=90]{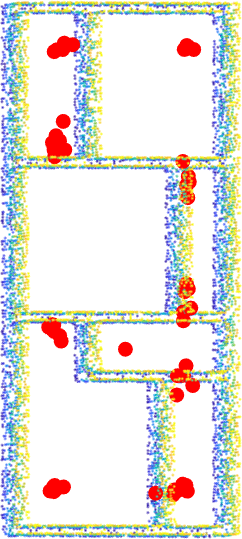}
    \caption{Visualization of USIP keypoints with different $\lambda$ in Point-to-Point loss. First row $\lambda=6$, second row $\lambda=0$.} \label{fig_suppl_lambda_effect_vis}
\end{figure}

\section{Effect of $\lambda$ in Point-to-Point Loss $\mathcal{L}_p$} \label{sec_suppl_L_p}
Sec.~3 of the main paper describes the point-to-point loss $\mathcal{L}_p$ to penalize $Q_m \in \mathbf{Q}$ for being too far from $\mathbf{X}$. The point-to-point loss $\mathcal{L}_p$ is added to the loss function with the weight $\lambda$. Here, we show that our USIP is very robust to the value of $\lambda$. Specifically, the repeatability of our USIP keypoints remains almost the same over a wide range of values for $\lambda$. Keypoint repeatability is illustrated in Fig.~\ref{fig_suppl_repeatability_lambda} with various $\lambda$. Fig.~\ref{fig_suppl_repeatability_lambda} shows that the USIP keypoints are highly repeatable even when $\lambda$ is small. This is probably because our design to limit the receptive field already guides the network to learn repeatable keypoints even without the point-to-point loss. On the other hand, the network fails to converge when $\lambda$ is too large because the point-to-point loss dominates the training process. Nonetheless, training the network without the point-to-point loss does not ensure the keypoints to be close to the input point cloud. The top row of Fig. \ref{fig_suppl_lambda_effect_vis} shows keypoints from our USIP detector trained with $\lambda=6$, \ie, with point-to-point loss. They are close to the input point cloud. In comparison, the bottom row of Fig. \ref{fig_suppl_lambda_effect_vis} shows  
from our USIP detector trained without point-to-point loss, \ie, $\lambda=0$. These are less desirable keypoints that are farther from the input point cloud.

\begin{figure*}[t!] \centering
    \includegraphics[width=0.24\textwidth]{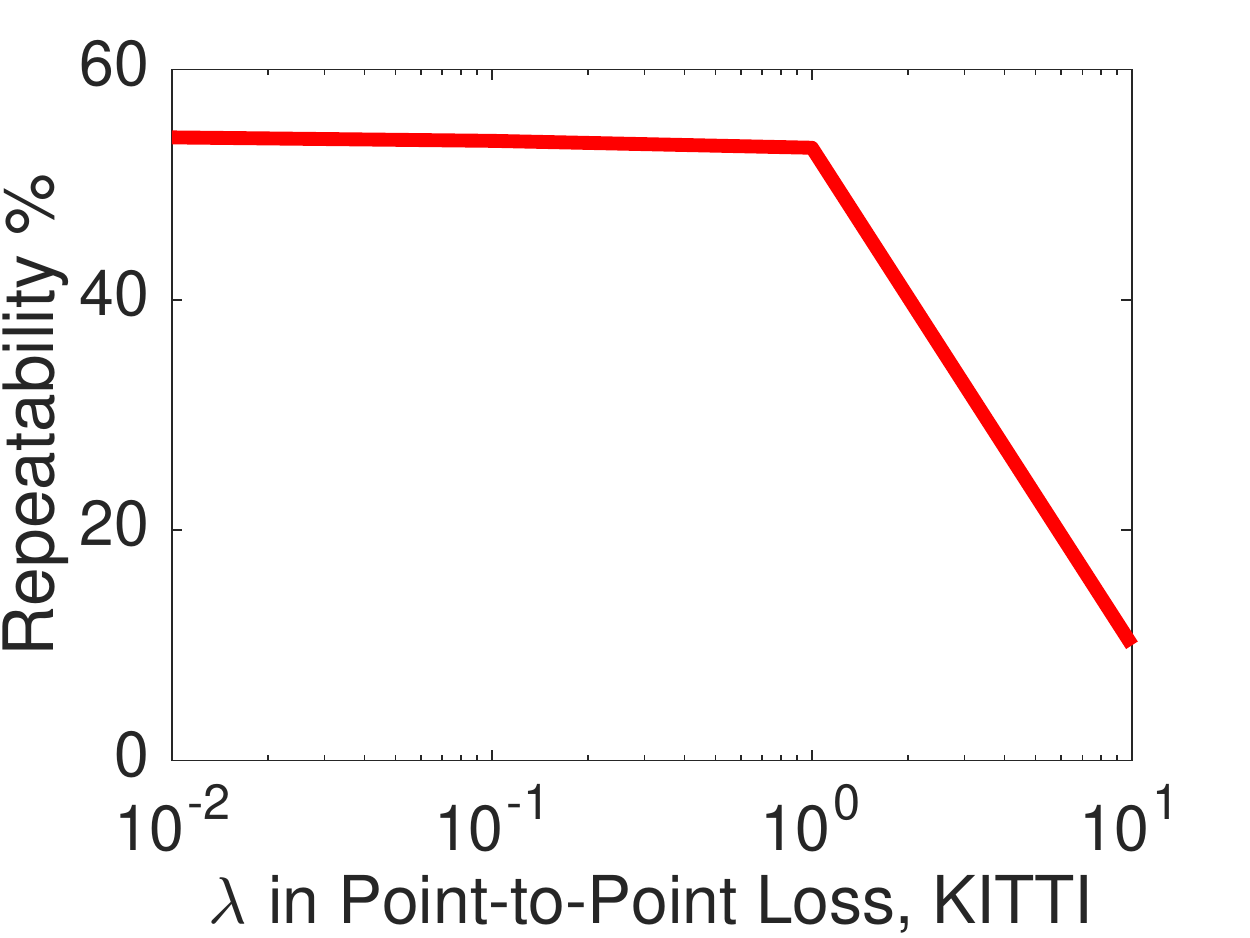}
    \includegraphics[width=0.24\textwidth]{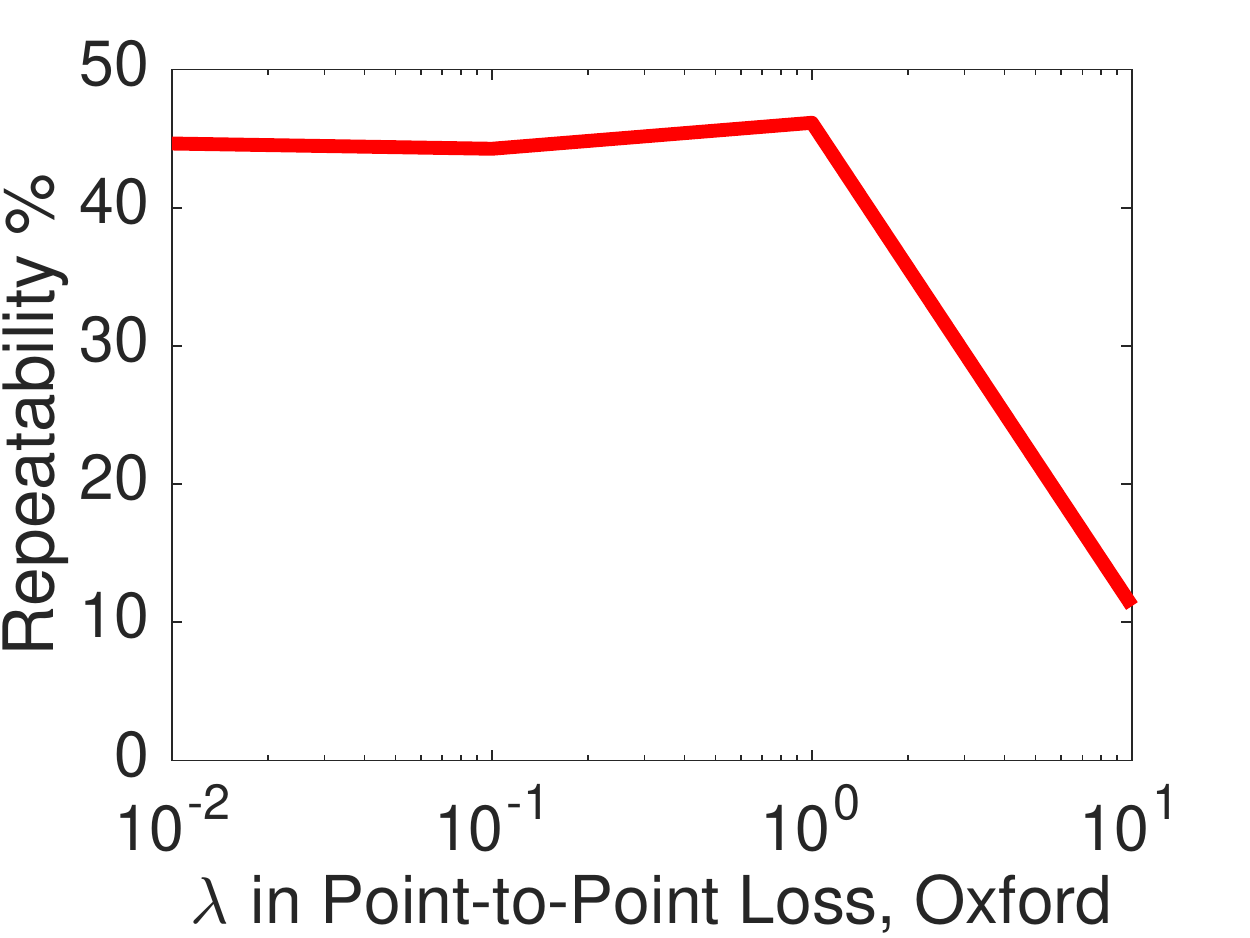}
    \includegraphics[width=0.24\textwidth]{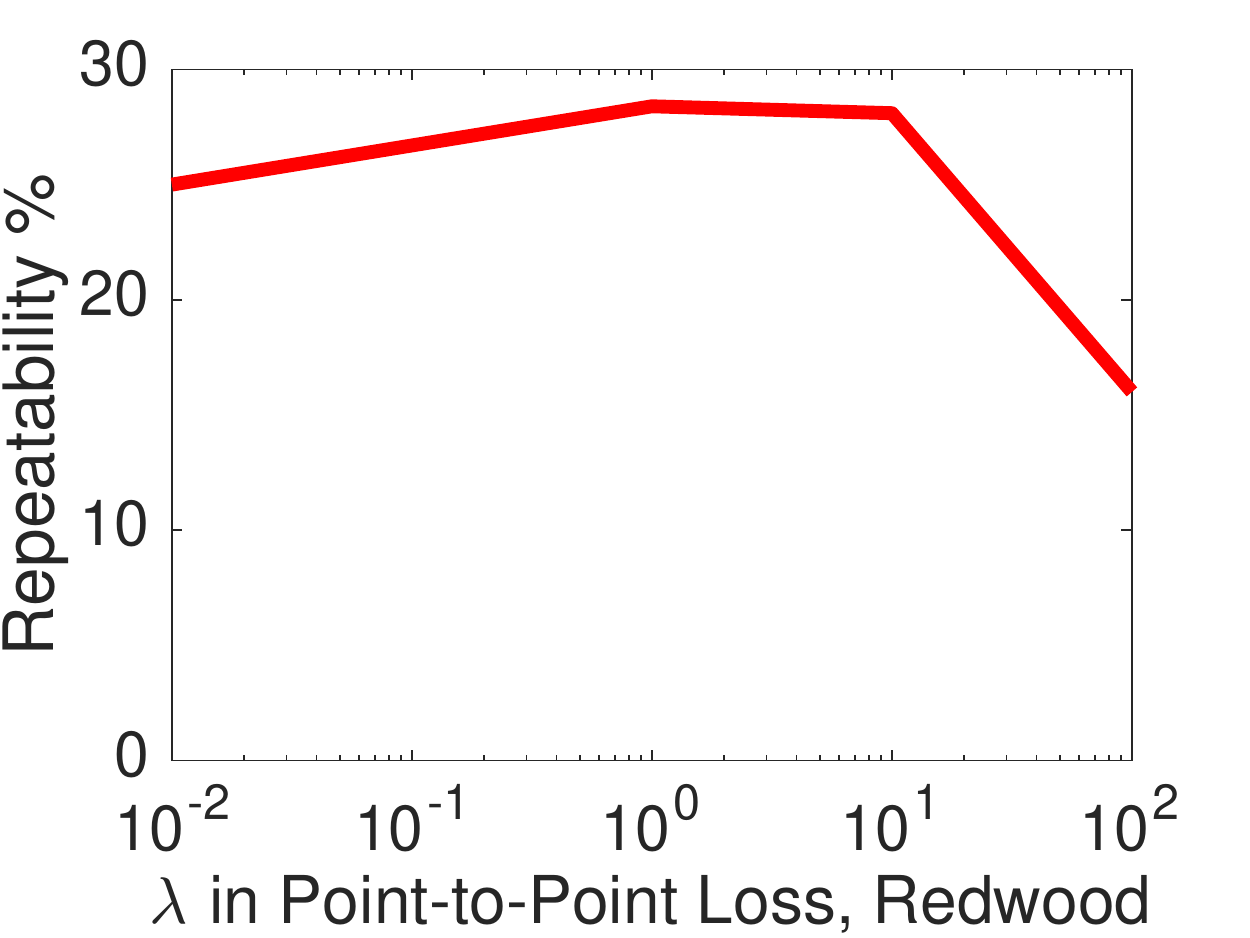}
    \includegraphics[width=0.24\textwidth]{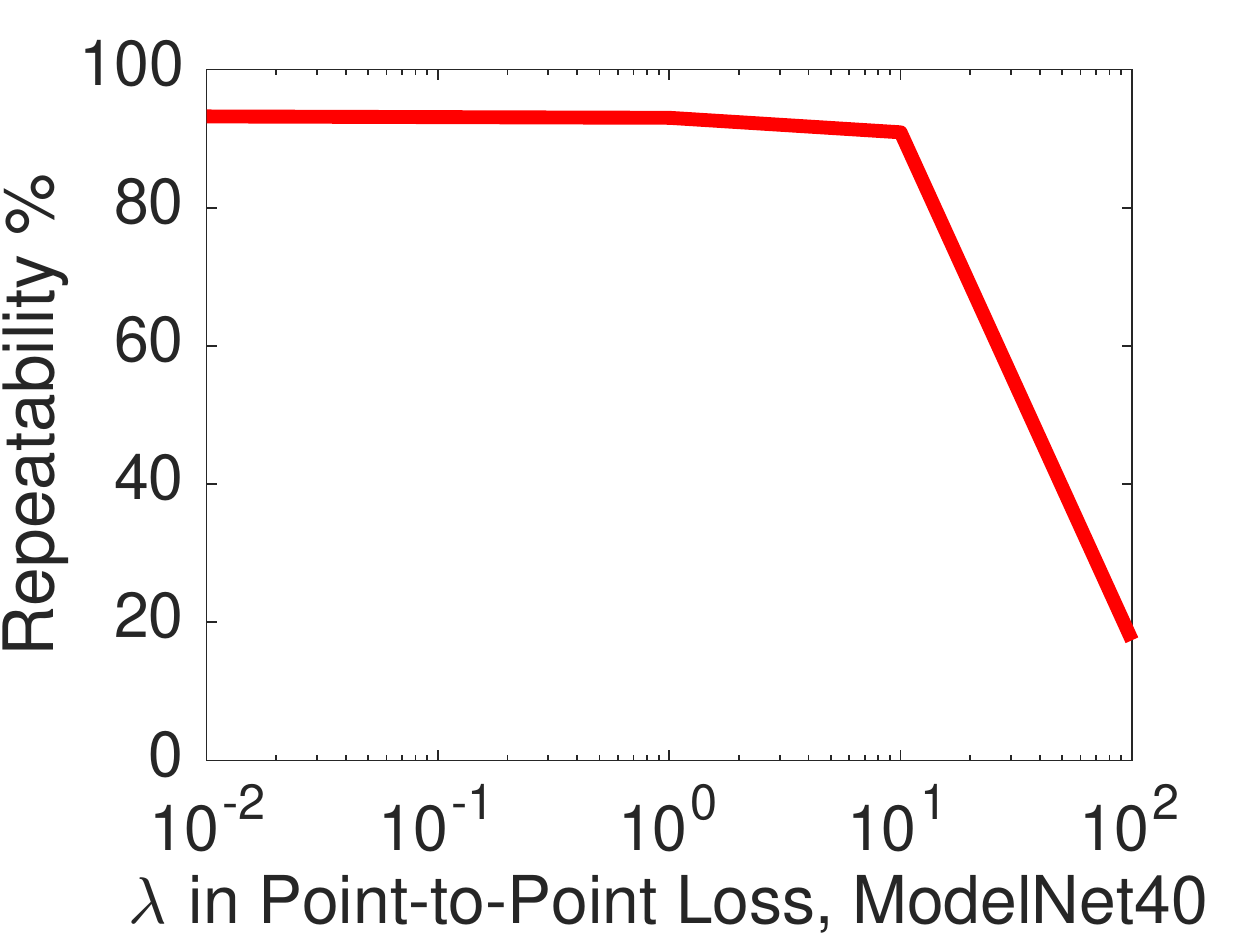}
    \caption{Relative repeatability with different weight $\lambda$ for the Point-to-Point Loss $\mathcal{L}_p$. Number of keypoints is fixed to 128. Left to right: KITTI, Oxford, Redwood, ModelNet40.} \label{fig_suppl_repeatability_lambda}
\end{figure*}

\section{Our Descriptor a.k.a ``Our Desc."} \label{sec_suppl_descriptor}
\begin{figure}[th] \centering
    \includegraphics[width=0.46\textwidth]{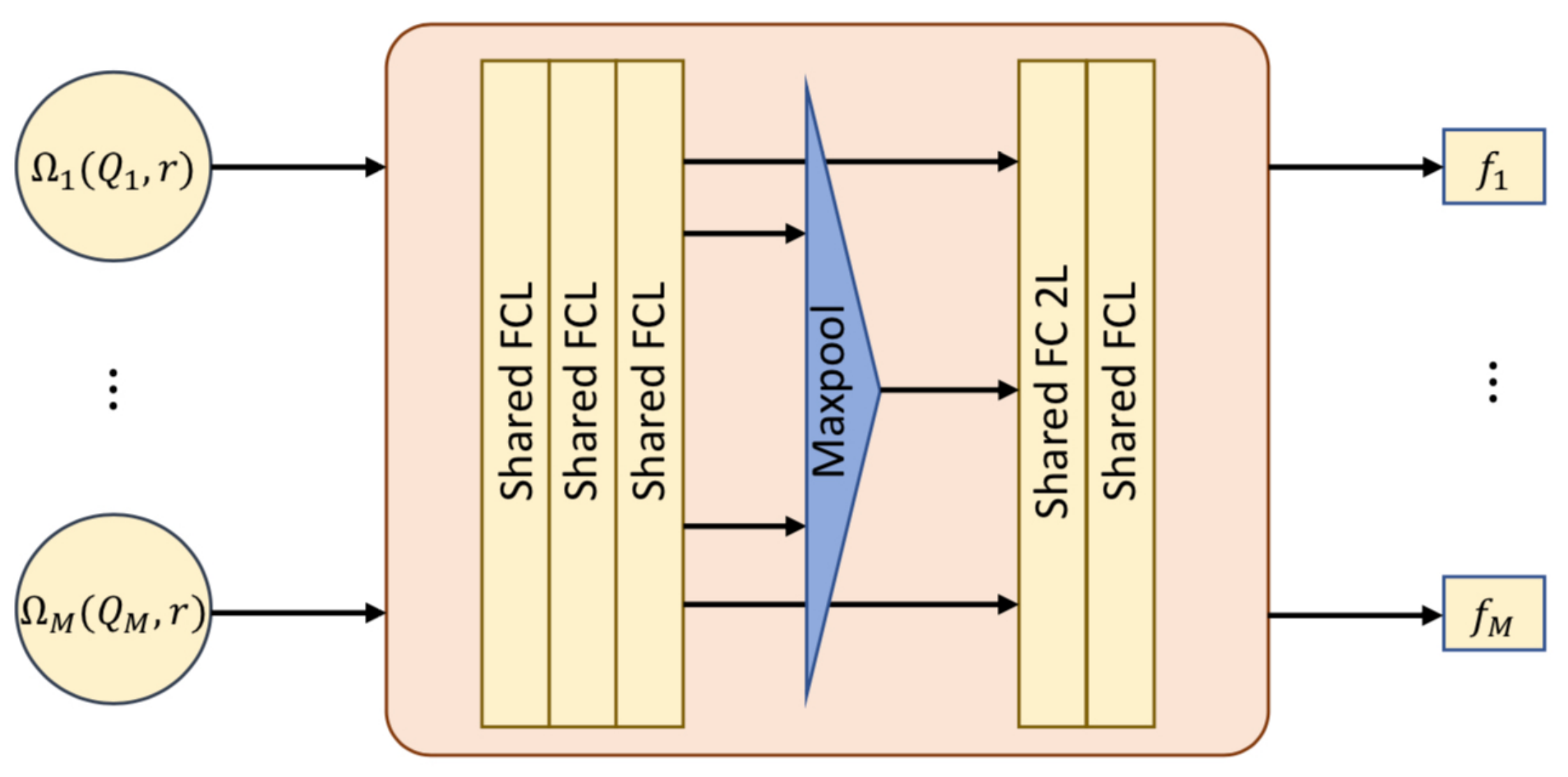}
    \caption{Network architecture of ``Our Desc.".} \label{fig_descriptor}
    \vspace{-4pt}
\end{figure} 

Fig~\ref{fig_descriptor} shows the network design of ``Our Desc." inspired by 3DFeat-Net \cite{jian20183dfeat} as mentioned in Sec.~6.2 of the main paper. Given the output $(\mathbf{Q},\mathbf{\Sigma})$ from FPN, a ball $\Omega_m(Q_m, r)$ of points from the point cloud $\mathbf{X}$ within a radius $r$ is built around each $Q_m \in \mathbf{Q}$. A keypoint descriptor $f_m\in \mathbb{R}^L$ is extracted for each $\Omega_m$. The descriptor can be trained with either weak \cite{jian20183dfeat} or strong supervision \cite{zeng20173dmatch, khoury2017learning}. We improve the keypoint descriptor training by utilizing the keypoint saliency uncertainty $\mathbf{\Sigma}$ in Sec.~\ref{sec_suppl_desc_weakly}, \ref{sec_suppl_desc_strong}, and \ref{sec_suppl_pc_registration}.

\subsection{Weak Supervision} \label{sec_suppl_desc_weakly}
Weak supervision of the descriptor is based on a triplet loss and the ground truth coarse registrations of the point clouds in the training dataset. Similar to \cite{jian20183dfeat}, point clouds from the dataset are selected as the anchor samples during training. All overlapping pairs of point clouds to the anchor are defined as positive samples, while non-overlapping pairs of point clouds are defined as the negative samples. We denote the sets of keypoint descriptors extracted from the anchor, positive and negative samples as $F_{\text{anc}}$, $F_{\text{pos}}$ and $F_{\text{neg}}$, respectively.
We generate these training samples from the Oxford RobotCar and KITTI datasets. More formally, the triplet loss is given by:
\begin{equation} \label{equ_desc_weak_loss}
    \mathcal{L}_{\text{dc}}^w = \sum_{m=1}^{M} w_m \bigg\lfloor \underset{f_i\in F_{\text{pos}}}{\text{min}}\|f_m-f_i\|_2 - \underset{f_j\in F_{\text{neg}}}{\text{min}}\|f_m-f_j\|_2 + \gamma \bigg\rfloor_+,
\end{equation}
where $f_m \in F_{\text{anc}}$ is a descriptor from the anchor sample. For each descriptor $f_m \in F_{\text{anc}}$, we minimize the Euclidean distance to its nearest neighbor $f_i \in F_{\text{pos}}$ and maximize the Euclidean distance to its nearest neighbor $f_j \in F_{\text{neg}}$.
In addition, a normalized weight $w_m$ is added to our triplet loss. $w_m$ is derived from our USIP keypoint saliency uncertainty $\sigma_m$ that indicates the reliability of $Q_m$ and $f_m$. More specifically:
\begin{equation} \label{equ_desc_weight}
    w_m = M \cdot \frac{\hat{w_m}}{\sum_{j=1}^{M}\hat{w_j}}, \qquad  \hat{w_m} = \lfloor\xi - \sigma_m\rfloor_+,  
\end{equation}
where $\xi$ is a threshold serves as the upper bound of $\sigma_m$.

\subsection{Strong Supervision} \label{sec_suppl_desc_strong}
We do strong supervision of the descriptor network on datasets with ground truth poses, \ie,  
SceneNN \cite{hua2016scenenn} and ``3D reconstruction dataset" \cite{zeng20173dmatch}. The loss function for strong supervision defined on a pair of overlapping point clouds $\mathbf{X}$ and $\mathbf{X}'$ with ground truth poses $G\in \text{SE}(3)$ and $G'\in \text{SE}(3)$ is given by:
\begin{equation} \label{equ_desc_strong_loss}
    \mathcal{L}_{\text{dc}}^s = \sum_{m=1}^{M} w_m \bigg\lfloor \|f_m-f'_i\|_2 - \|f_m-f'_j\|_2 + \gamma \bigg\rfloor_+.
\end{equation}
$f_m$ and $f'_i$ are keypoint descriptors from $\mathbf{X}$ and $\mathbf{X}'$, respectively. 
Additionally, $f'_i$ is a descriptor with keypoint location $Q'_i$ that is within a distance $\rho$ from the keypoint location $Q_m$ of the descriptor $f_m$, \ie, $\|Q_m - G G'^{-1}Q'_i\|_2 < \rho$. To achieve hard negative mining, we randomly select 50\% of $f'_j$ from $\mathbf{X}'$ with the distance between the keypoint locations $Q'_j$ and $Q_m$ larger than $\rho$. The other 50\% are chosen from keypoints with shortest but larger than $\rho$ keypoint distances to $Q_m$.

\section{More Point Cloud Registration Results} \label{sec_suppl_pc_registration}
\begin{table*}[t]
\centering
\resizebox{\textwidth}{!}
{%
\setlength\tabcolsep{4pt} 
\begin{tabular}{c|ccrrr|ccrrr}
\hline
\multirow{2}{*}{Method} & \multicolumn{5}{c|}{Oxford}                                                                                         & \multicolumn{5}{c}{KITTI}                                                                                         \\ \cline{2-11} 
                        & RTE (m)     & RRE (\degree)      & \multicolumn{1}{l}{Fail \%} & \multicolumn{1}{l}{Inlier \%} & \multicolumn{1}{l|}{\# Iter} & RTE (m)     & RRE (\degree)      & \multicolumn{1}{l}{Fail \%} & \multicolumn{1}{l}{Inlier \%} & \multicolumn{1}{l}{\# Iter} \\ \hline
ISS\cite{zhong2009intrinsic} + FPFH\cite{rusu2009fast}                & 0.40$\,\pm\,$0.29 & 1.60$\,\pm\,$1.02 & 7.68                          & 8.6                        & 7171                         & 0.33$\,\pm\,$0.27 & 1.04$\,\pm\,$0.77 & 39.00                         & 8.8                        & 8000                        \\
ISS\cite{zhong2009intrinsic} + SI\cite{johnson1999using}                  & 0.42$\,\pm\,$0.31 & 1.61$\,\pm\,$1.12 & 12.55                         & 4.7                        & 9888                         & 0.35$\,\pm\,$0.31 & 1.11$\,\pm\,$0.93 & 41.86                         & 4.6                        & 9401                        \\
ISS\cite{zhong2009intrinsic} + USC\cite{tombari2010usc}                 & 0.32$\,\pm\,$0.27 & 1.22$\,\pm\,$0.95 & 5.98                          & 8.6                        & 7084                         & 0.27$\,\pm\,$0.28 & 0.83$\,\pm\,$0.76 & 18.62                         & 7.7                        & 8149                        \\
ISS\cite{zhong2009intrinsic} + CGF\cite{khoury2017learning}                 & 0.43$\,\pm\,$0.32 & 1.62$\,\pm\,$1.10 & 12.64                         & 4.9                        & 9628                         & 0.23$\,\pm\,$0.25 & 0.69$\,\pm\,$0.60 & 8.90                          & 8.4                        & 7670                        \\
ISS\cite{zhong2009intrinsic} + 3DMatch\cite{zeng20173dmatch}             & 0.49$\,\pm\,$0.37 & 1.78$\,\pm\,$1.21 & 30.94                         & 5.4                        & 9131                         & 0.30$\,\pm\,$0.28 & 0.80$\,\pm\,$0.67 & 7.14                          & 8.4                        & 7165                        \\
3DFeat-Net\cite{jian20183dfeat}              & 0.30$\,\pm\,$0.26 & 1.07$\,\pm\,$0.85 & 1.90                          & 13.7                       & 2940                         & 0.26$\,\pm\,$0.26 & 0.56$\,\pm\,$0.46 & 0.57                          & 12.9                       & 3768                        \\ \hline

USIP + Our Desc.                     & \textbf{0.28$\,\pm\,$0.26} & \textbf{0.81$\,\pm\,$0.74} & \textbf{0.93}                          & \textbf{28.1}                       & \textbf{523}                          & \textbf{0.21$\,\pm\,$0.24} & \textbf{0.42$\,\pm\,$0.32} & \textbf{0.24}                          & \textbf{28.0}                       & \textbf{600}                         \\ \hline
\end{tabular}
}
\caption{Geometric registration performance on Oxford RobotCar and KITTI. The combination of our USIP keypoint detector and ``Our Desc." outperforms existing methods in all criteria with around $2\times$ inlier ratio.}
\label{tbl_suppl_pc_registration_outdoor}
\end{table*}
\begin{figure*}[ht!]
    \centering
    \includegraphics[width=0.49\textwidth]{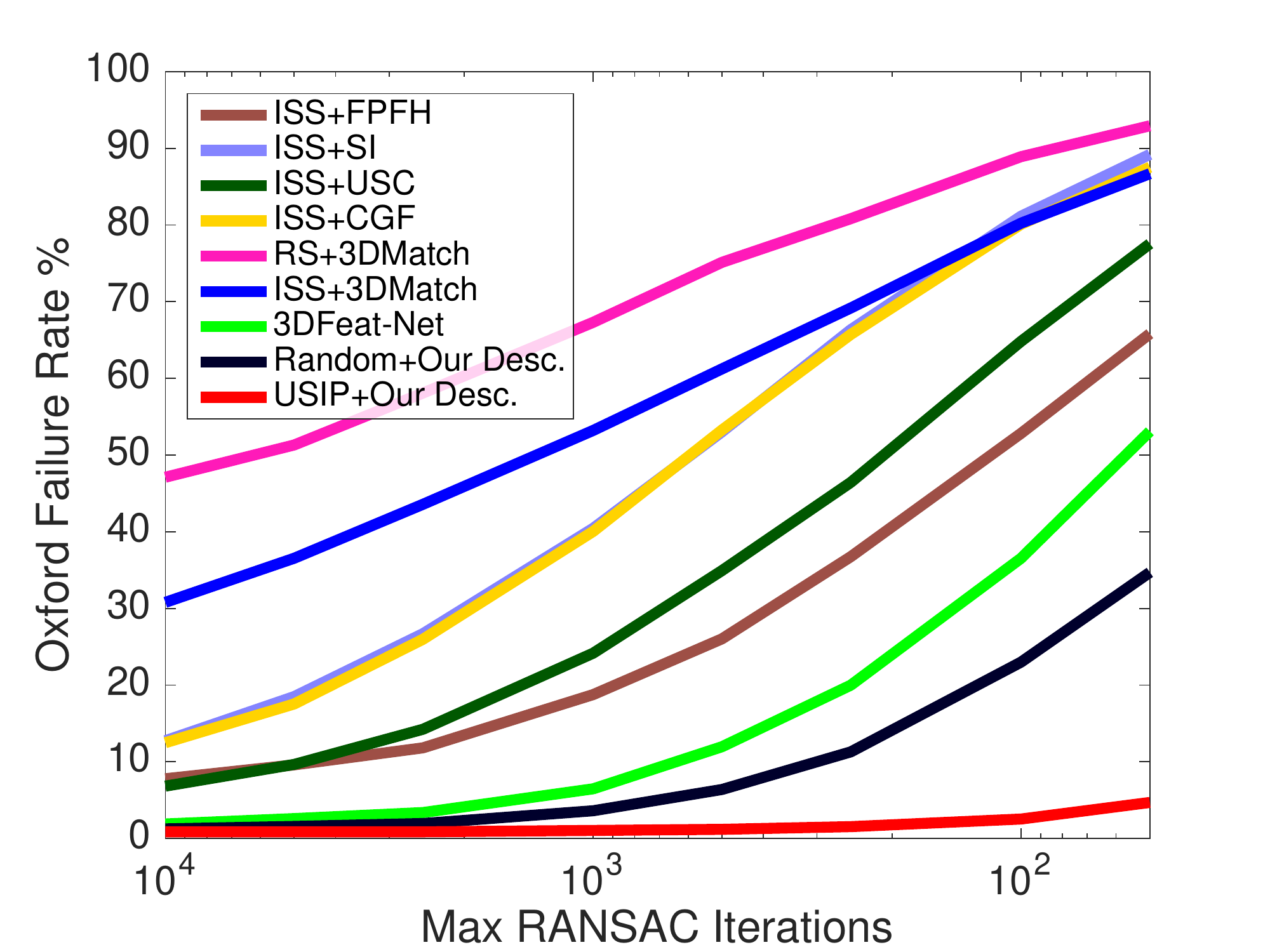}
    \includegraphics[width=0.49\textwidth]{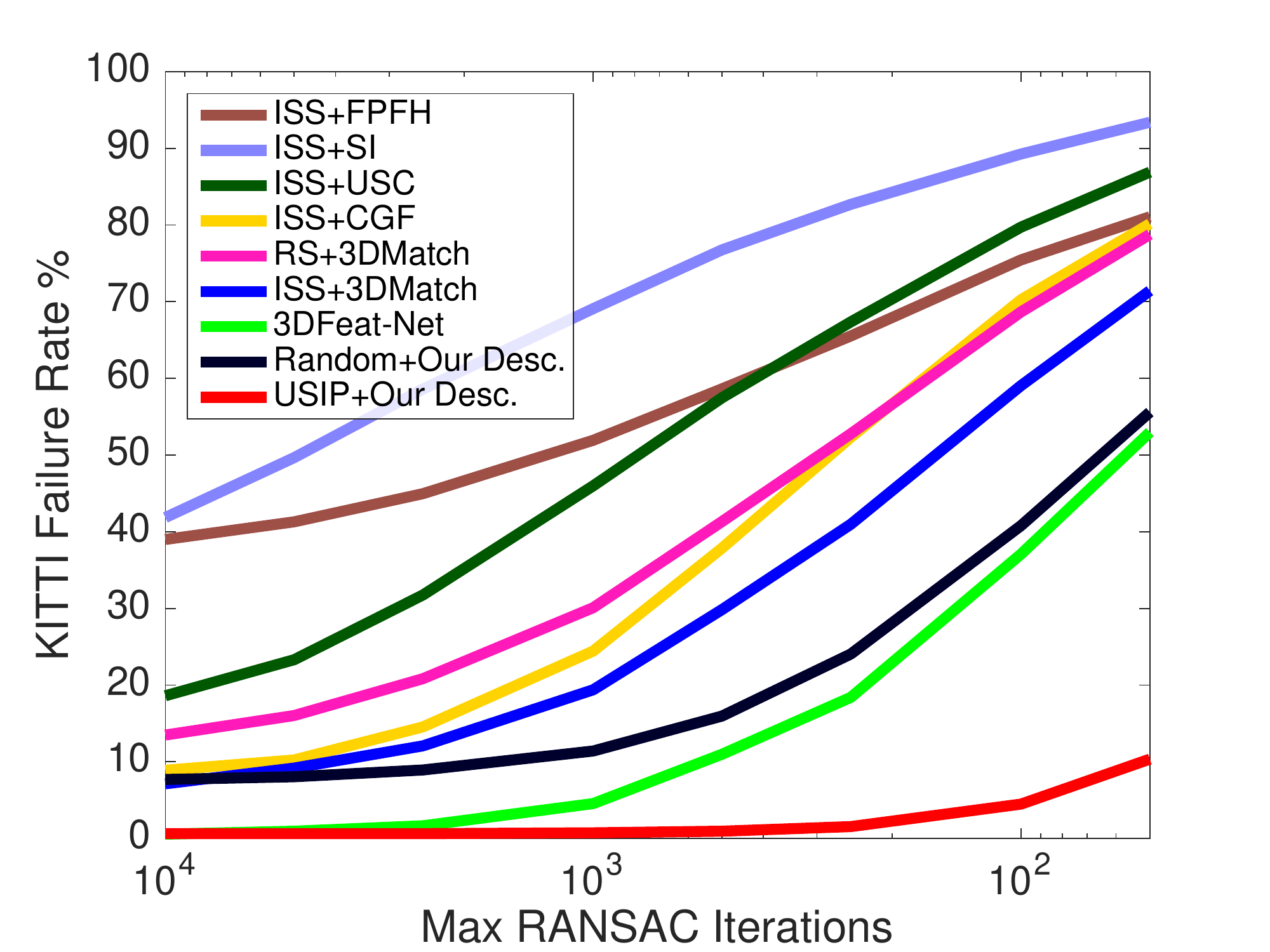}
    \caption{Registration failure rate versus maximum RANSAC iterations in Oxford RobotCar (left) and KITTI (right). Note that the x axis is in logarithmic scale. Our USIP detector + ``Our Desc." (red line) shows very little drop in performance with decreasing number of RANSAC iterations.} \label{fig_suppl_pc_registration_curve}
\end{figure*} 
\begin{figure*}[ht!] 
        \centering
        \subfigure[]{\includegraphics[width=0.3\textwidth]{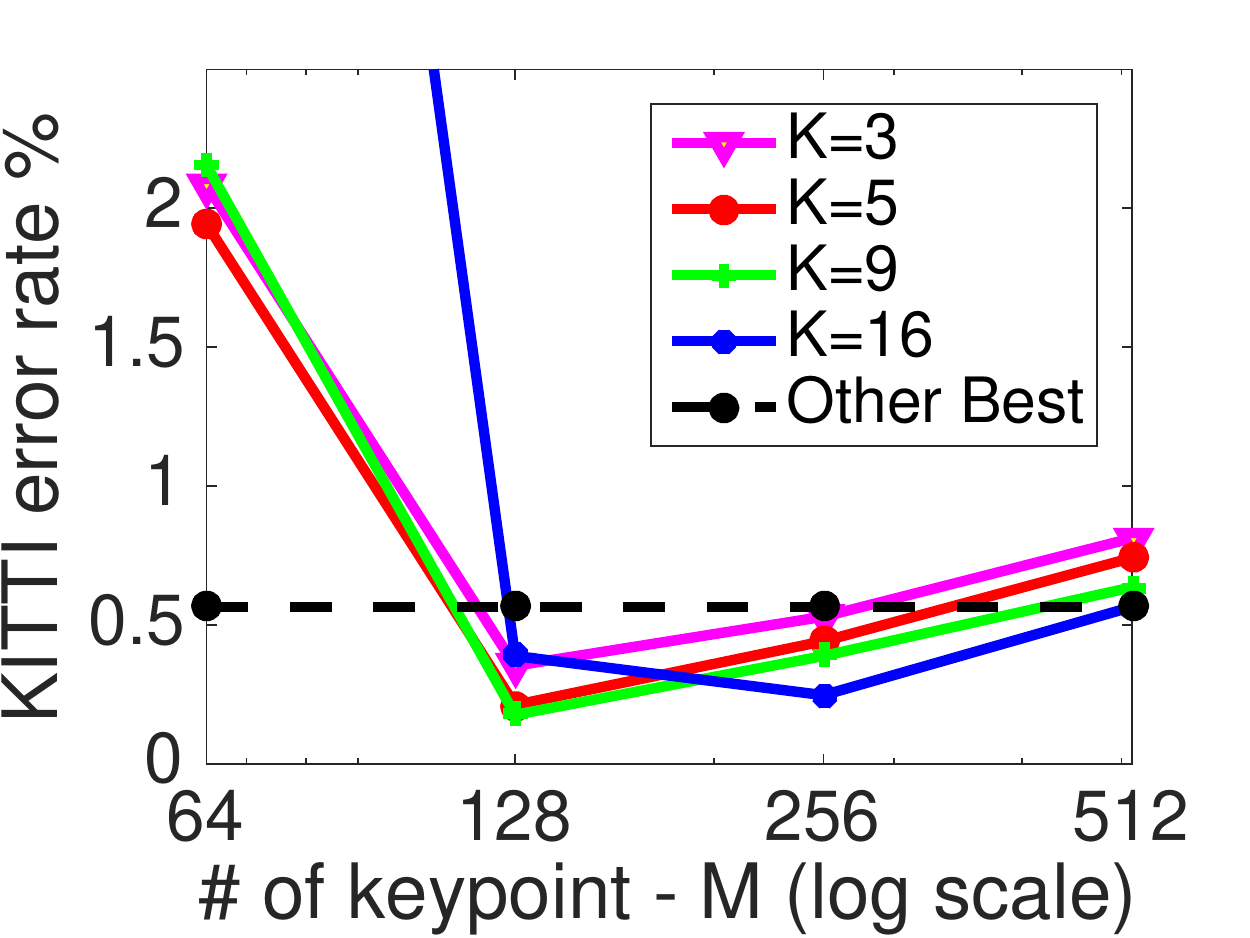}}
        \subfigure[]{\includegraphics[width=0.3\textwidth]{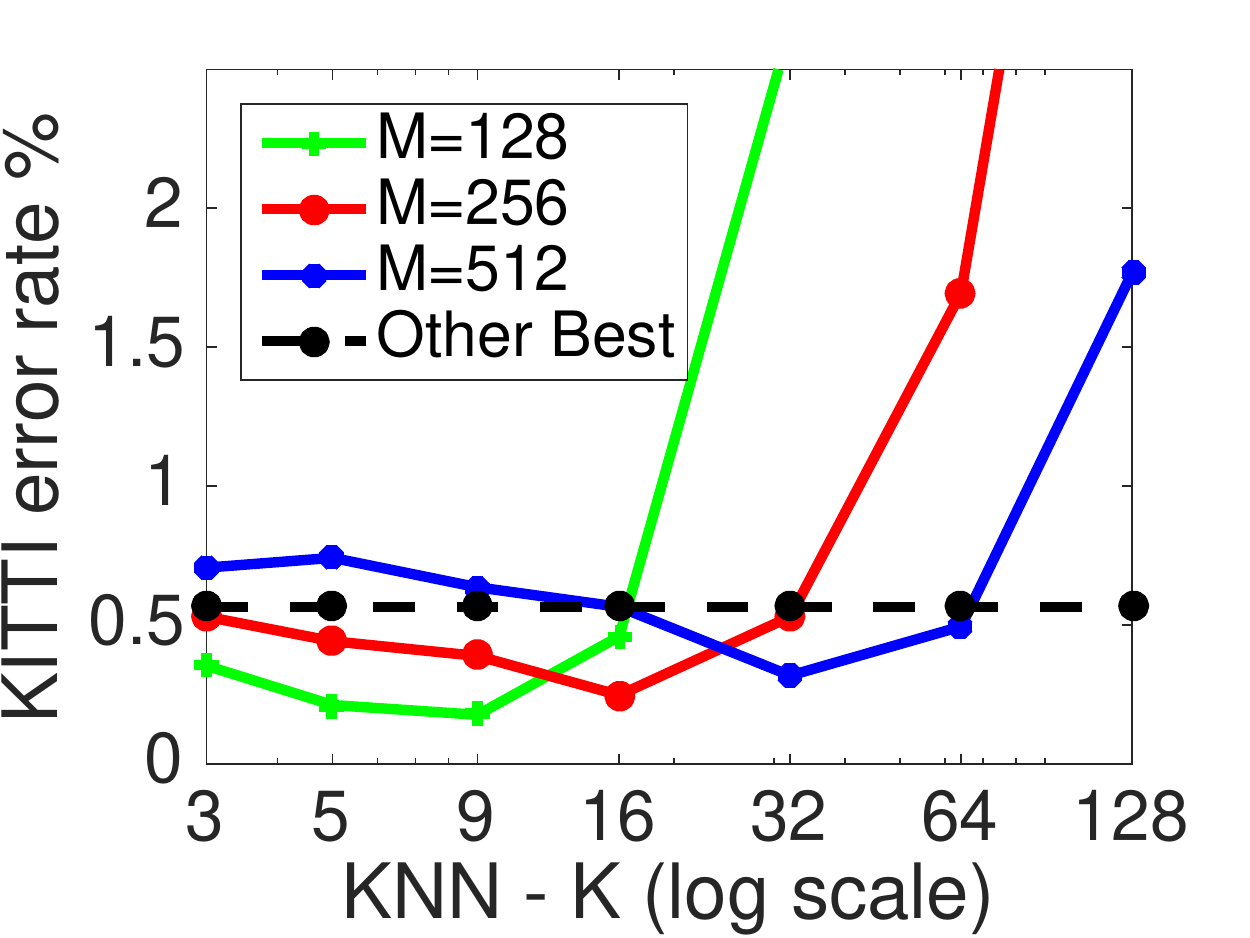}}
        \subfigure[]{\includegraphics[width=0.3\textwidth]{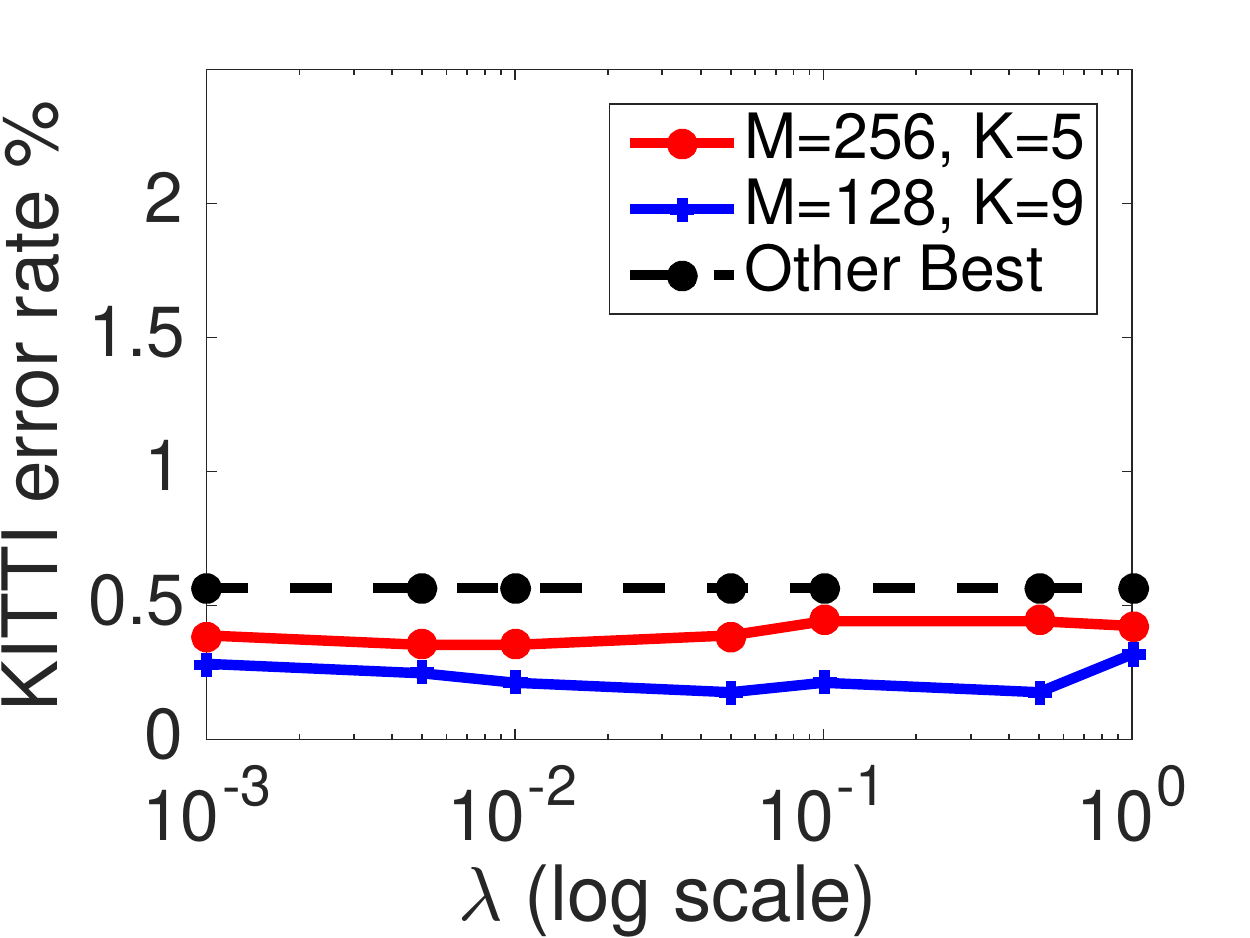}}
        \caption{Point cloud registration error rate (\%) on KITTI (trained on Oxford). Dash line is the best performance of existing methods. $\lambda=0.5$ in (a) (b).}
        \label{fig_suppl_m_k_lambda_error_rate}
\end{figure*}

We follow the experimental setup and pipeline of 3DFeat-Net \cite{jian20183dfeat} to provide more evaluation results on point cloud registration. More specifically, we compare the performance of our USIP detector and ``Our Desc." with other existing keypoint detector and descriptors. The evaluations are done on the Oxford RobotCar and KITTI datasets prepared by \cite{jian20183dfeat}. 
Refer to Sec.~6.2 of the main paper for the details of the registration steps.
A fixed number of 256 keypoints is extracted from each point cloud.
We extract the keypoints without Non-Maximum-Supression (NMS). Furthermore, keypoints with high saliency uncertainty, \ie, large $\sigma$, are filtered out.

\paragraph{Datasets}
The Oxford RobotCar consists of 40 traversals on the same route over a year. 3D point clouds are built by accumulating the 2D scans from SICK LMS-151 LiDAR with the GPS/INS readings. We use 35 traversals, \ie 21,875 point clouds for training. The remaining 5 traversals, \ie, 828 point clouds and 3,426 overlapping pairs are used for evaluation. Random rotations around the up-axis are applied to each evaluation point cloud. In KITTI, 3D point clouds are directly provided by a Velodyne HDL-64E. We use the 2,831 overlapping pairs of point clouds prepared by \cite{jian20183dfeat} for registration evaluation.

\paragraph{Performance}
Tab.~\ref{tbl_suppl_pc_registration_outdoor} shows the point cloud registration performances. Our USIP detector + ``Our Desc." outperforms previous methods with the lowest registration failure rate (Fail \%), Relative Translational Error (RTE), Relative Rotation Error (RRE), and highest inlier ratio (Inlier \%). In particular, our registration failure rate and inlier ratio are respectively 50\% and 2x of the second best keypoint detector + descriptor. 
We further analyze the performance over different number of RANSAC iterations. 
The registration failure rate versus the maximum number of RANSAC iterations is shown in Fig.~\ref{fig_suppl_pc_registration_curve}. 
Due to high repeatability, our USIP detector (red line) shows very little drop in performance with decreasing number of RANSAC iterations, while all other algorithms show rapid drops in performances. 
Additionally, we replace our USIP detector + ``Our Desc." with Random Sampling + ``Our Desc." to demonstrate the effectiveness of our USIP detector. It can be seen from Fig.~\ref{fig_suppl_pc_registration_curve} that the performance of Random Sampling + ``Our Desc." (black line) drops as quickly as other methods with decreasing number of RANSAC iterations.

\paragraph{Effect of USIP Keypoint Saliency Uncertainty $\Sigma$ on Descriptor Training}
We show that the keypoint salicency uncertainty $\Sigma$ from our USIP detector improves the performance of ``Our Desc.". To this end, we compare the performances of ``Our Desc." trained with USIP and randomly sampled keypoints, respectively.
In particular, the weight $w_m$ from Eq.~\ref{equ_desc_weak_loss} or Eq.~\ref{equ_desc_strong_loss} is set to 1 for the randomly sampled keypoints. We denote the descriptor trained with randomly sampled keypoints as ``Desc. w. RS''.
Tab.~\ref{tbl_suppl_exp_usip_on_desc} shows the registration failure rates of ``Desc. w. USIP'' and ``Desc. w. RS''. The results show that ``Desc. w. USIP'' performs better than ``Desc. w. RS'', which means that keypoints and saliency uncertainty $\Sigma$ from our USIP detector improve descriptor training. 
\begin{table}[H]
\centering
\resizebox{0.48\textwidth}{!}{%
\setlength\tabcolsep{4pt} 
\begin{tabular}{l|cc|cc}
\hline
\multirow{2}{*}{Failure \%} & \multicolumn{2}{c|}{Oxford} & \multicolumn{2}{c}{KITTI} \\ \cline{2-5} 
                         & Desc w. USIP   & Desc w. RS  & Desc w. USIP  & Desc w. RS \\ \hline
USIP                     & \textbf{0.93}          & 1.20        & \textbf{0.24}         & 1.02       \\ \hline
\end{tabular}%
}
\caption{Registration failure rate for ``Our Desc." trained keypoints from our USIP detector and randomly sampled keypoints.}
\label{tbl_suppl_exp_usip_on_desc}
\end{table}
\paragraph{Effect of Parameters $M, K, \lambda$}
We demonstrate the point cloud registration failure rate (\%) in Fig.~\ref{fig_suppl_m_k_lambda_error_rate}, when various USIP detector parameters, $M, K, \lambda$, are selected. In Fig.~\ref{fig_suppl_m_k_lambda_error_rate} we use the same descriptor mentioned in Sec.~\ref{sec_suppl_descriptor}. As shown in Fig.~\ref{fig_suppl_m_k_lambda_error_rate}, our method outperforms existing methods over a wide range of $M, K, \lambda$. We notice our network performance decreases significantly when $M$ is too small or $K$ is too large, \ie, the receptive is too large. This further verifies our design of limiting the receptive field. In addition, Fig.~\ref{fig_suppl_m_k_lambda_error_rate} shows that the registration failure rate remains satisfying when $\lambda$ is small. This is consistent with Fig.~\ref{fig_suppl_repeatability_lambda} that our USIP is able to detect repeatable keypoints even without the point-to-point loss. Nonetheless, it is still important to include the point-to-point loss to ensure that the keypoints are close to the input point cloud. 

%

\begin{figure*}[t] \centering
    \includegraphics[width=0.16\textwidth]{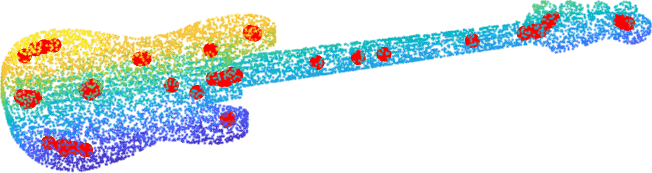}
    \includegraphics[width=0.16\textwidth]{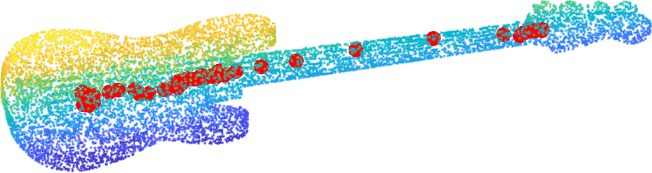}
    \includegraphics[width=0.16\textwidth]{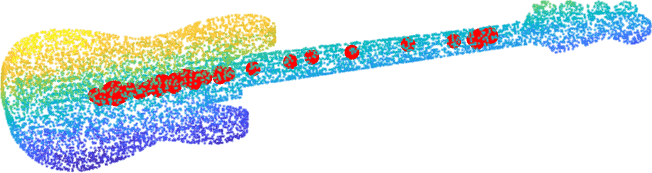}
    \includegraphics[width=0.16\textwidth]{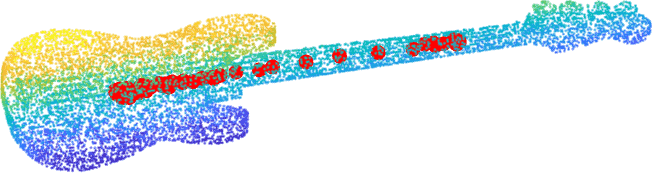}
    \includegraphics[width=0.16\textwidth]{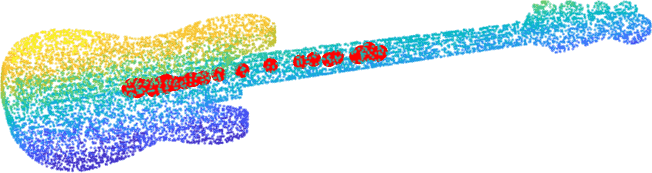}
    \includegraphics[width=0.16\textwidth]{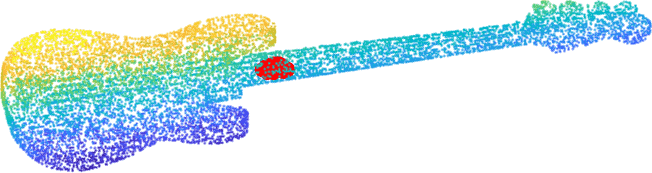}
    
    \includegraphics[width=0.16\textwidth]{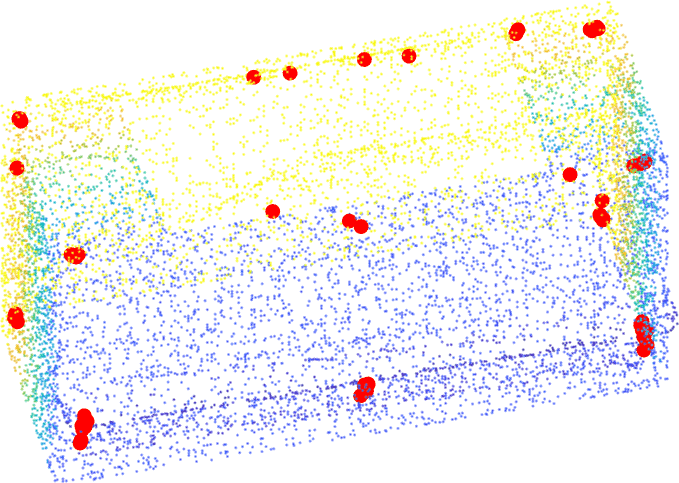}
    \includegraphics[width=0.16\textwidth]{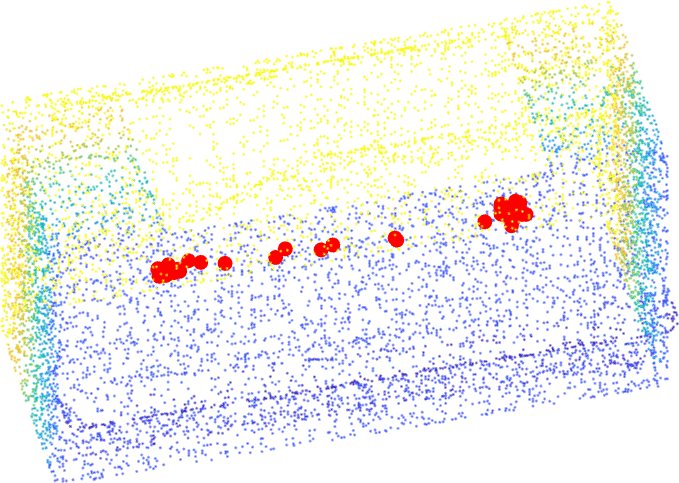}
    \includegraphics[width=0.16\textwidth]{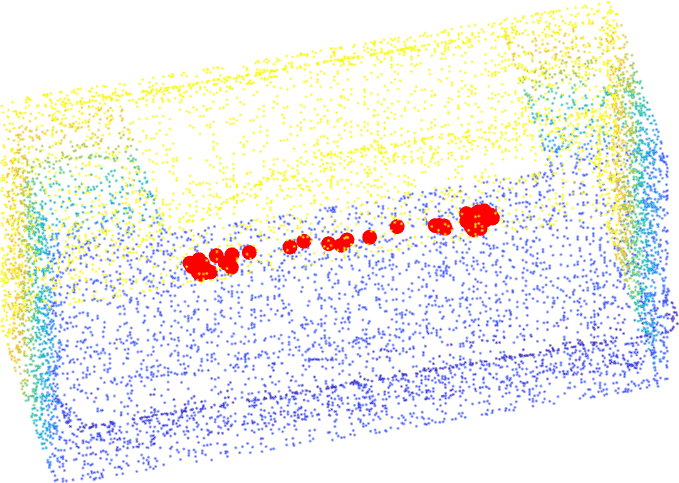}
    \includegraphics[width=0.16\textwidth]{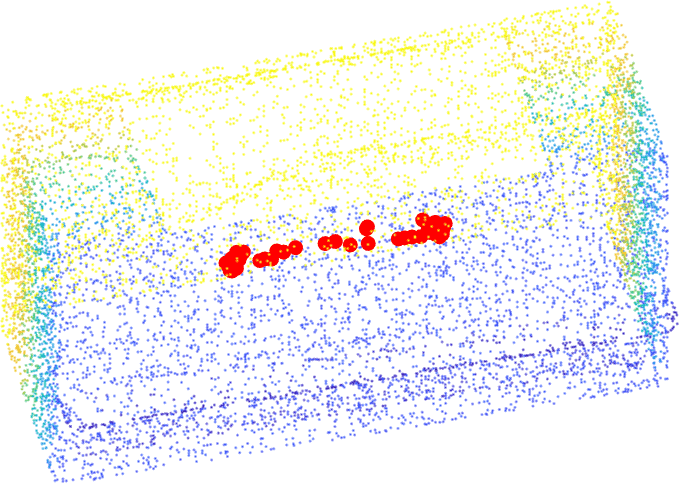}
    \includegraphics[width=0.16\textwidth]{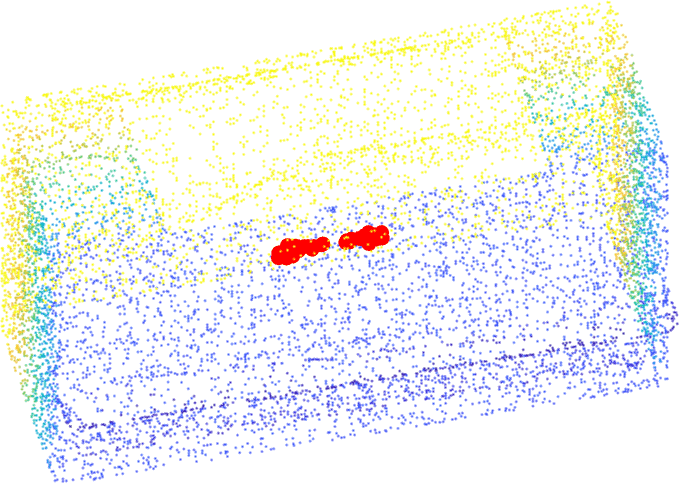}
    \includegraphics[width=0.16\textwidth]{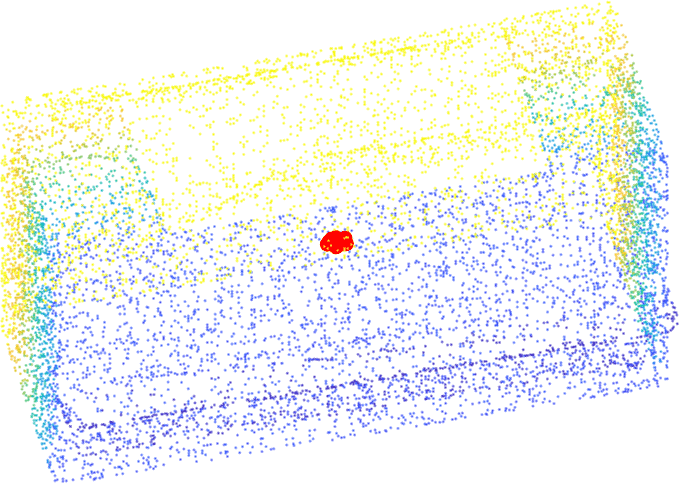}
    
    \includegraphics[width=0.16\textwidth, height=0.16\textwidth, keepaspectratio, angle=90]{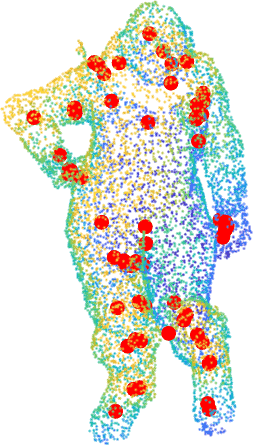}
    \includegraphics[width=0.16\textwidth, height=0.16\textwidth, keepaspectratio, angle=90]{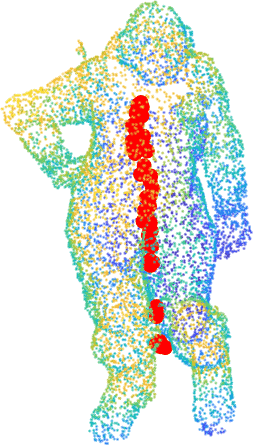}
    \includegraphics[width=0.16\textwidth, height=0.16\textwidth, keepaspectratio, angle=90]{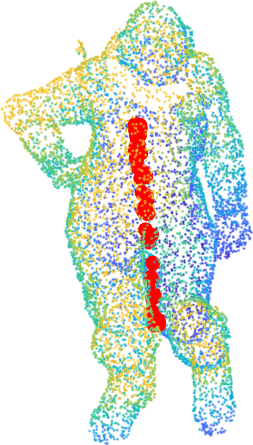}
    \includegraphics[width=0.16\textwidth, height=0.16\textwidth, keepaspectratio, angle=90]{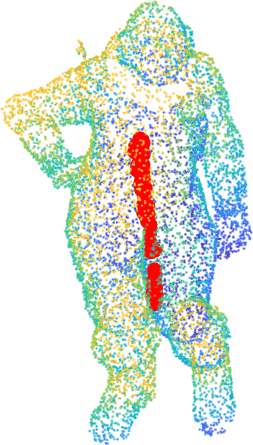}
    \includegraphics[width=0.16\textwidth, height=0.16\textwidth, keepaspectratio, angle=90]{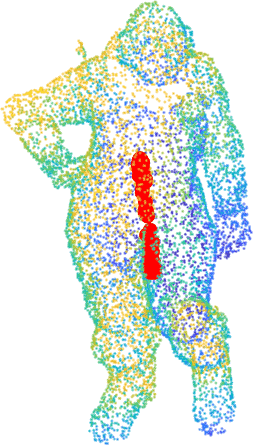}
    \includegraphics[width=0.16\textwidth, height=0.16\textwidth, keepaspectratio, angle=90]{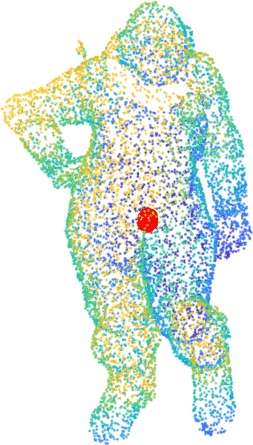}
    
    \includegraphics[width=0.16\textwidth, height=0.16\textwidth, keepaspectratio, angle=90]{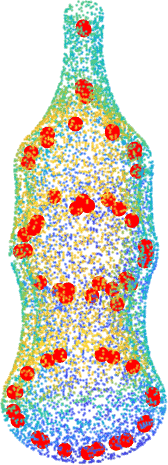}
    \includegraphics[width=0.16\textwidth, height=0.16\textwidth, keepaspectratio, angle=90]{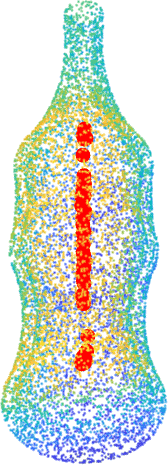}
    \includegraphics[width=0.16\textwidth, height=0.16\textwidth, keepaspectratio, angle=90]{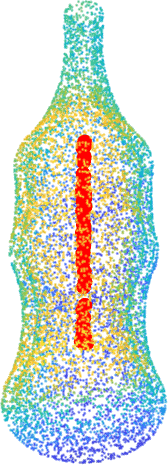}
    \includegraphics[width=0.16\textwidth, height=0.16\textwidth, keepaspectratio, angle=90]{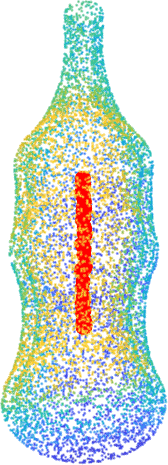}
    \includegraphics[width=0.16\textwidth, height=0.16\textwidth, keepaspectratio, angle=90]{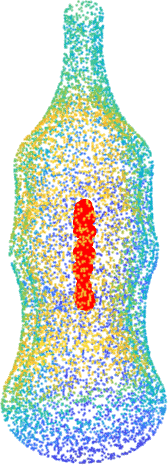}
    \includegraphics[width=0.16\textwidth, height=0.16\textwidth, keepaspectratio, angle=90]{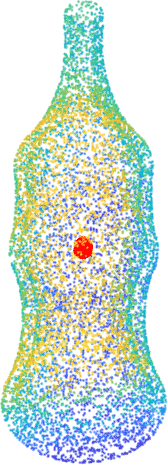}
    
    \includegraphics[width=0.16\textwidth, height=0.16\textwidth, keepaspectratio, angle=90]{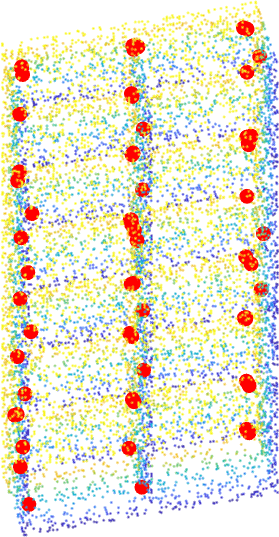}
    \includegraphics[width=0.16\textwidth, height=0.16\textwidth, keepaspectratio, angle=90]{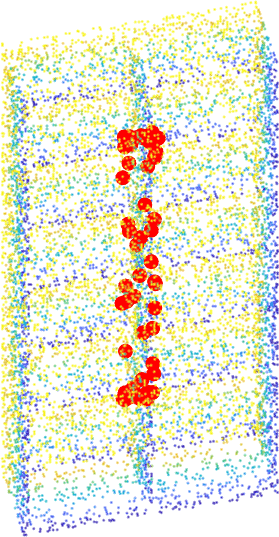}
    \includegraphics[width=0.16\textwidth, height=0.16\textwidth, keepaspectratio, angle=90]{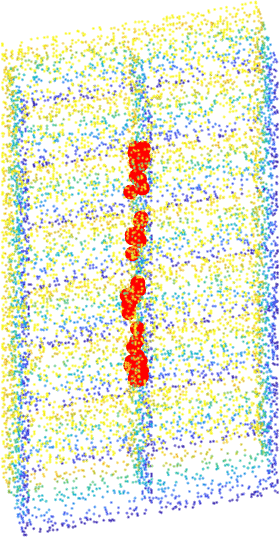}
    \includegraphics[width=0.16\textwidth, height=0.16\textwidth, keepaspectratio, angle=90]{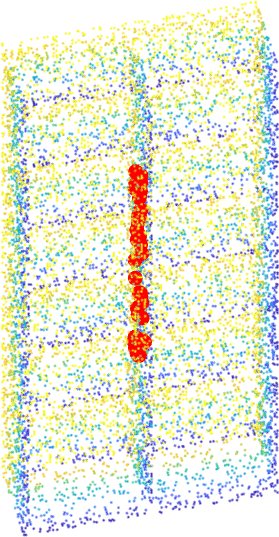}
    \includegraphics[width=0.16\textwidth, height=0.16\textwidth, keepaspectratio, angle=90]{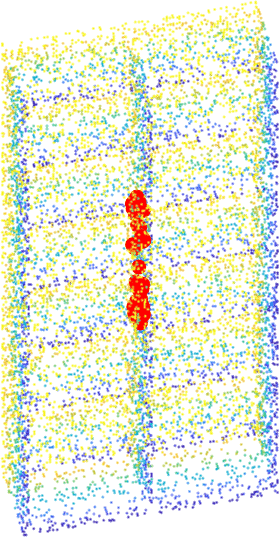}
    \includegraphics[width=0.16\textwidth, height=0.16\textwidth, keepaspectratio, angle=90]{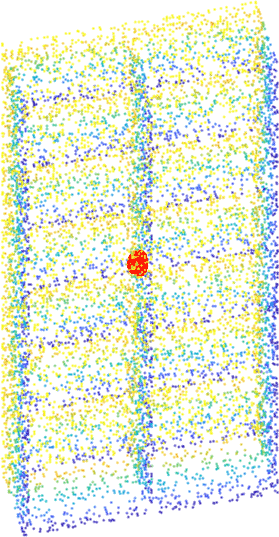}
    
    \includegraphics[width=0.16\textwidth]{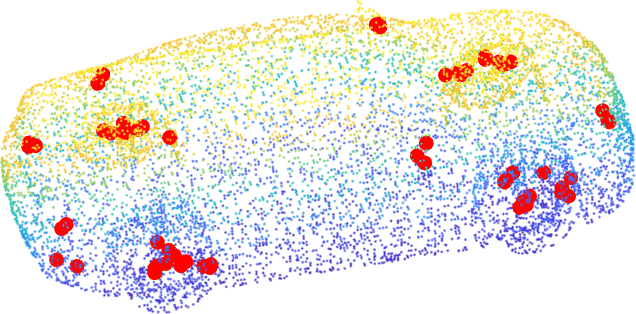}
    \includegraphics[width=0.16\textwidth]{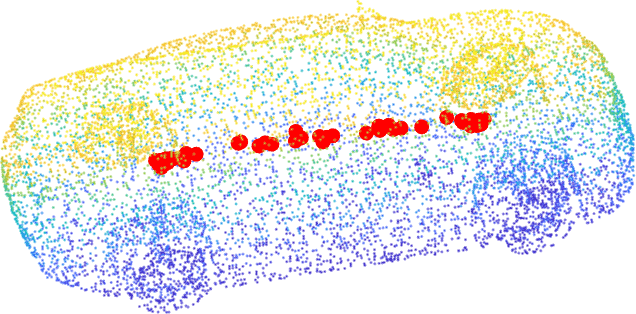}
    \includegraphics[width=0.16\textwidth]{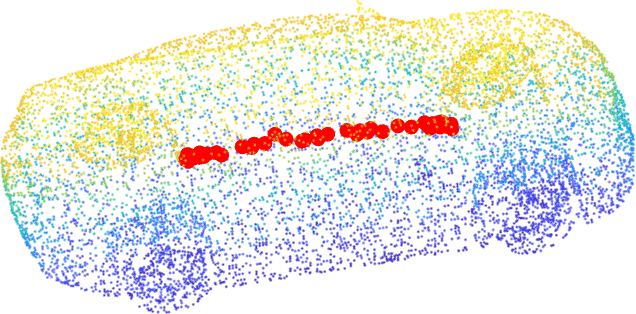}
    \includegraphics[width=0.16\textwidth]{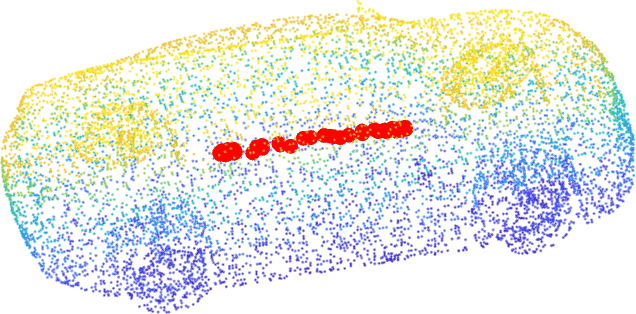}
    \includegraphics[width=0.16\textwidth]{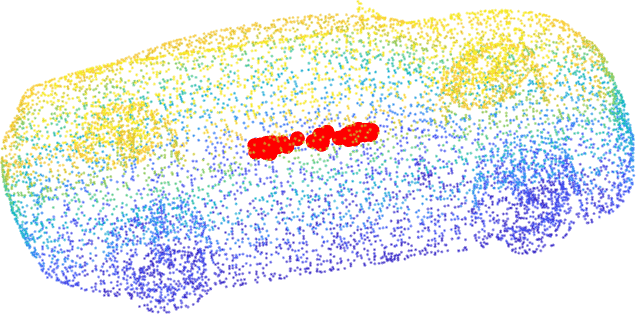}
    \includegraphics[width=0.16\textwidth]{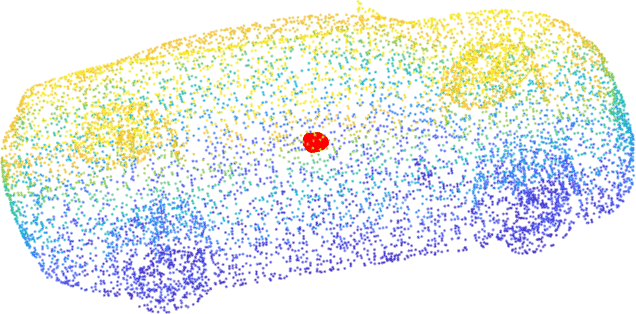}

    \caption{Visualization of FPN degeneracy. $M=64$ and from left to right: $K=9, 24, 32, 40, 48, 64$, \ie, receptive field of FPN increases from left to right.} \label{fig_suppl_vis_degeneracy_m64}
\end{figure*} 
\begin{figure*}[t] \centering
    \includegraphics[width=0.16\textwidth]{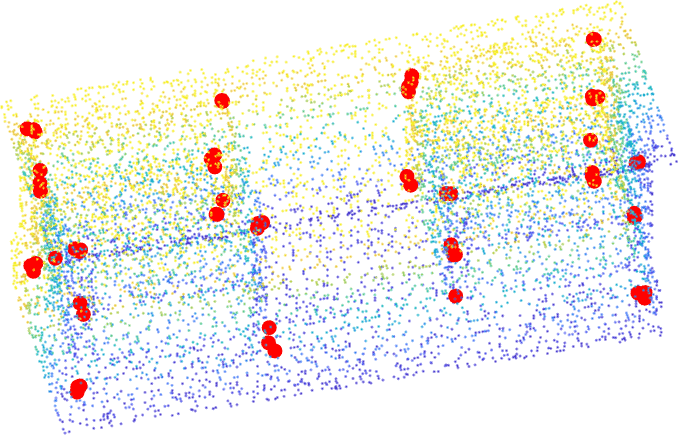}
    \includegraphics[width=0.16\textwidth]{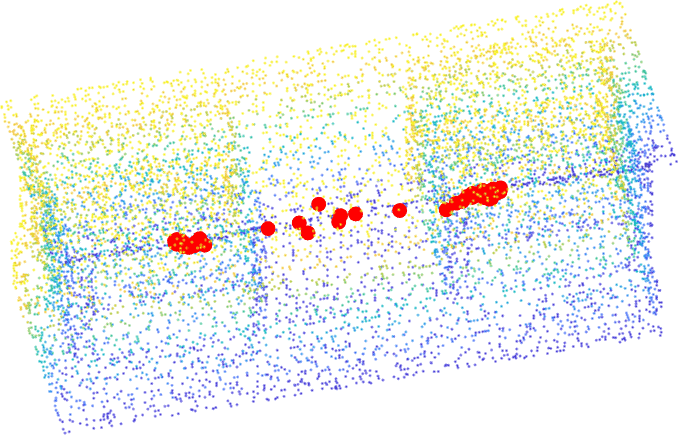}
    \includegraphics[width=0.16\textwidth]{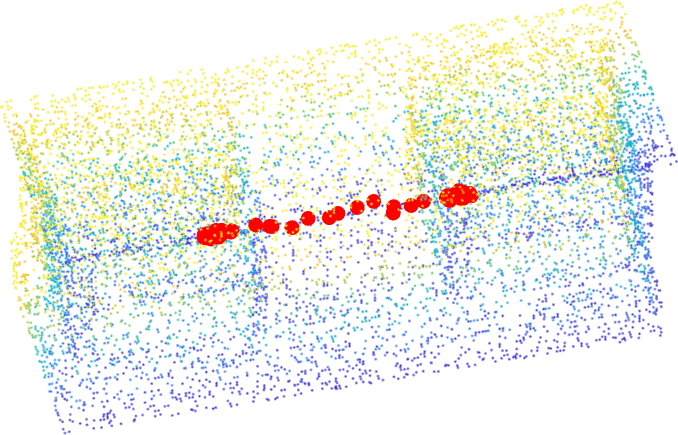}
    \includegraphics[width=0.16\textwidth]{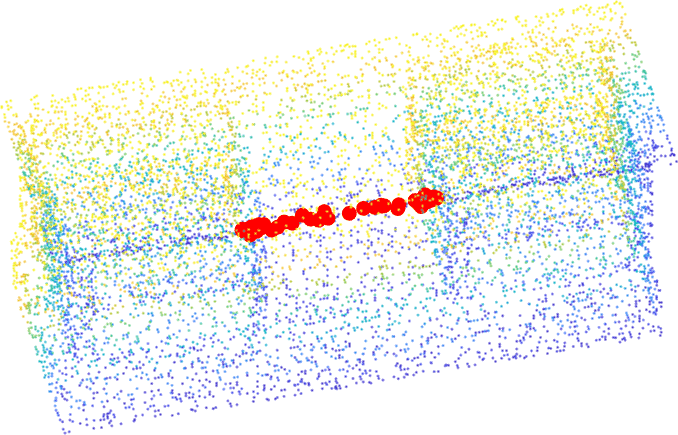}
    \includegraphics[width=0.16\textwidth]{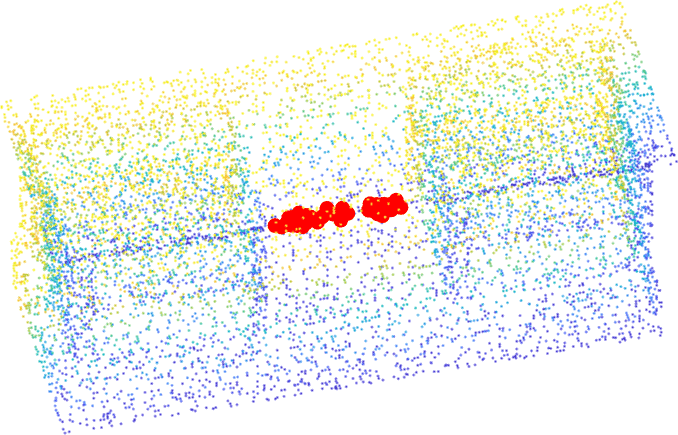}
    \includegraphics[width=0.16\textwidth]{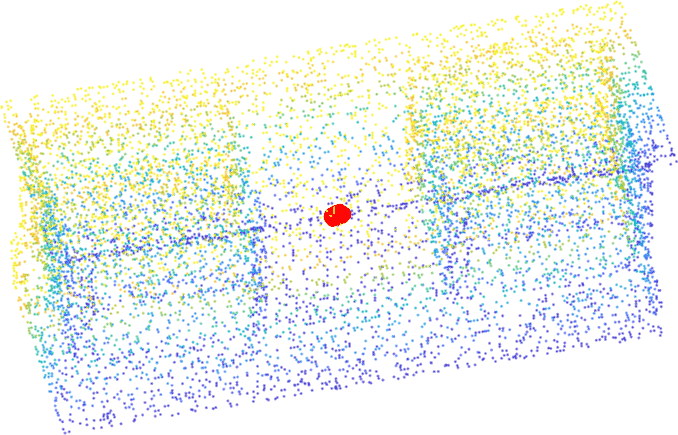}
    
    \includegraphics[width=0.16\textwidth]{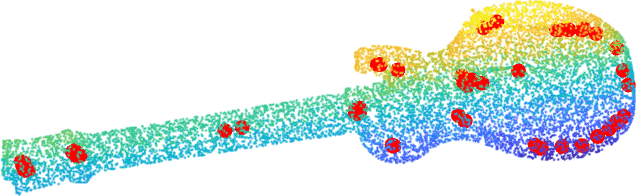}
    \includegraphics[width=0.16\textwidth]{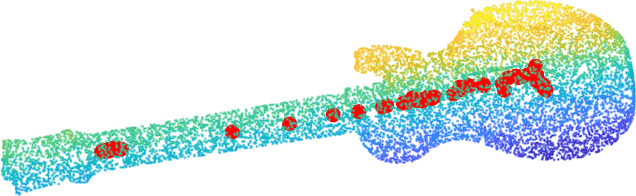}
    \includegraphics[width=0.16\textwidth]{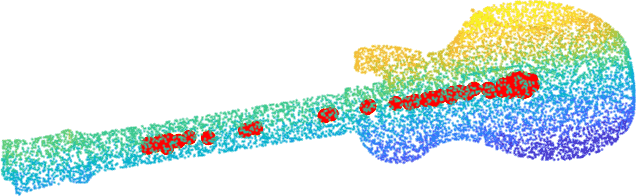}
    \includegraphics[width=0.16\textwidth]{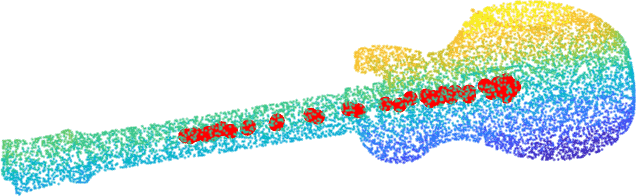}
    \includegraphics[width=0.16\textwidth]{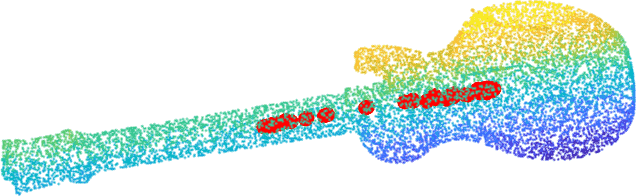}
    \includegraphics[width=0.16\textwidth]{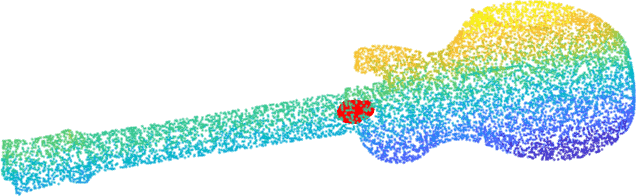}
    
    \includegraphics[width=0.16\textwidth, height=0.16\textwidth, keepaspectratio, angle=90]{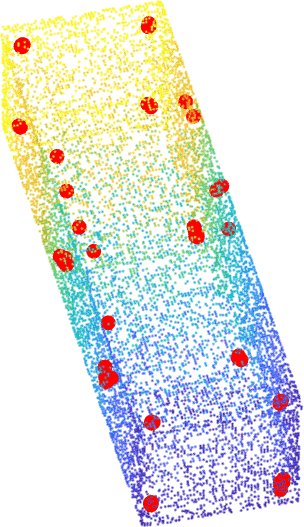}
    \includegraphics[width=0.16\textwidth, height=0.16\textwidth, keepaspectratio, angle=90]{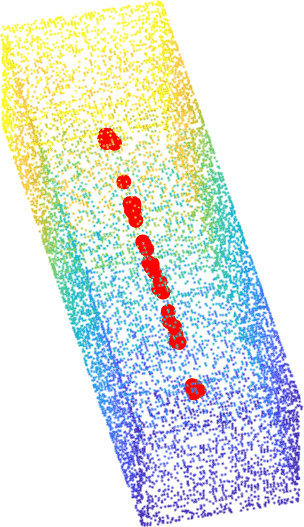}
    \includegraphics[width=0.16\textwidth, height=0.16\textwidth, keepaspectratio, angle=90]{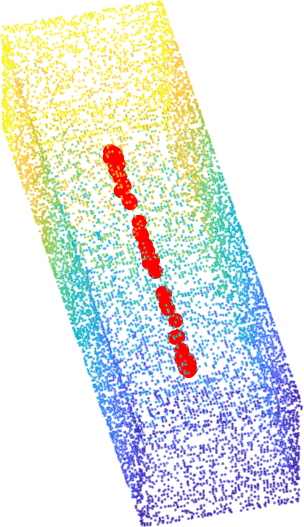}
    \includegraphics[width=0.16\textwidth, height=0.16\textwidth, keepaspectratio, angle=90]{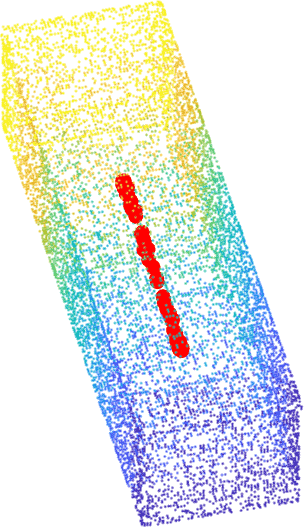}
    \includegraphics[width=0.16\textwidth, height=0.16\textwidth, keepaspectratio, angle=90]{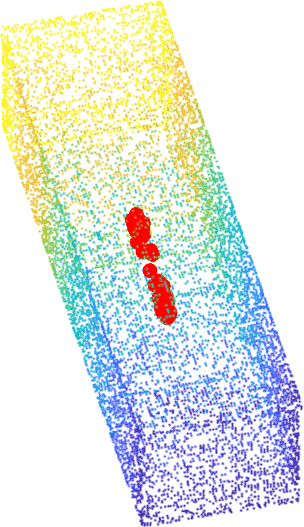}
    \includegraphics[width=0.16\textwidth, height=0.16\textwidth, keepaspectratio, angle=90]{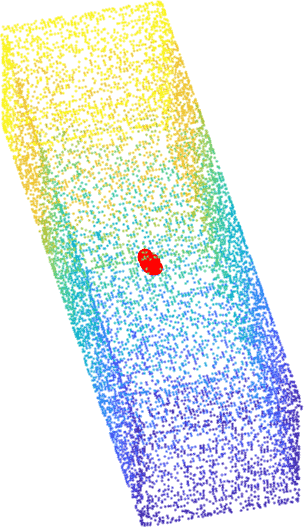}
    
    \includegraphics[width=0.16\textwidth, height=0.16\textwidth, keepaspectratio, angle=90]{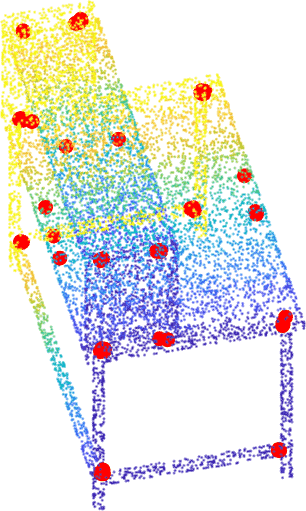}
    \includegraphics[width=0.16\textwidth, height=0.16\textwidth, keepaspectratio, angle=90]{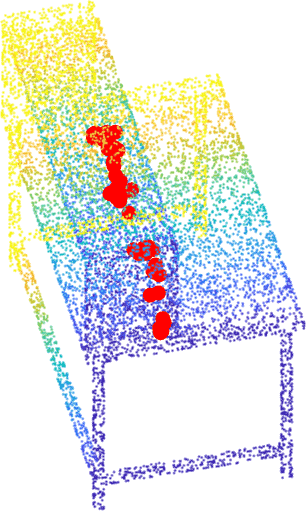}
    \includegraphics[width=0.16\textwidth, height=0.16\textwidth, keepaspectratio, angle=90]{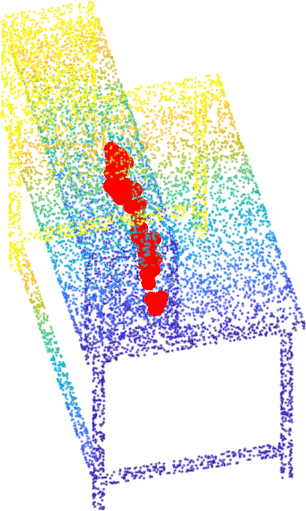}
    \includegraphics[width=0.16\textwidth, height=0.16\textwidth, keepaspectratio, angle=90]{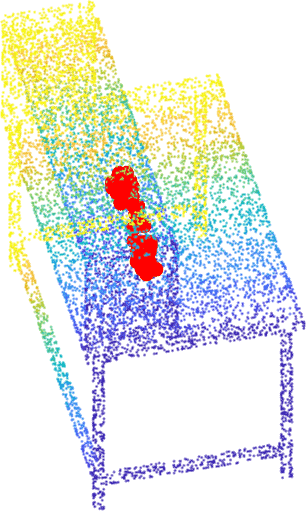}
    \includegraphics[width=0.16\textwidth, height=0.16\textwidth, keepaspectratio, angle=90]{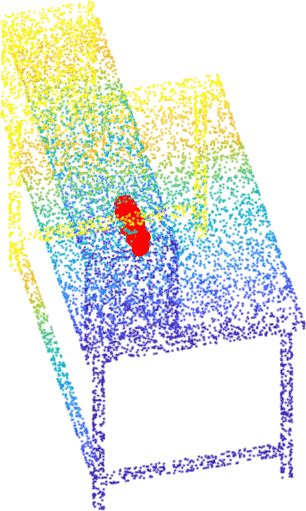}
    \includegraphics[width=0.16\textwidth, height=0.16\textwidth, keepaspectratio, angle=90]{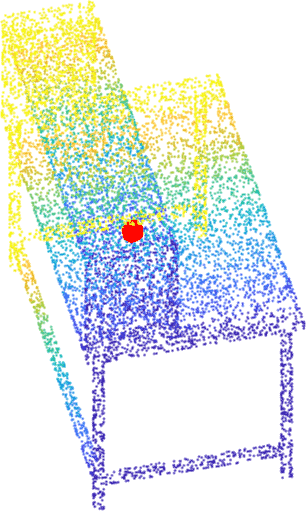}
    
    \includegraphics[width=0.16\textwidth, height=0.16\textwidth, keepaspectratio, angle=90]{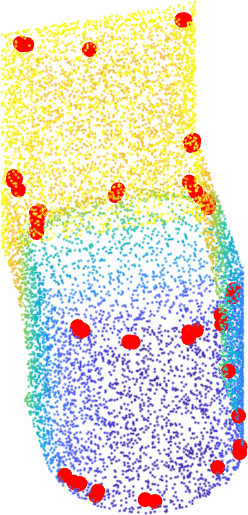}
    \includegraphics[width=0.16\textwidth, height=0.16\textwidth, keepaspectratio, angle=90]{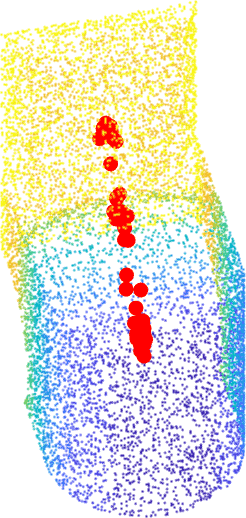}
    \includegraphics[width=0.16\textwidth, height=0.16\textwidth, keepaspectratio, angle=90]{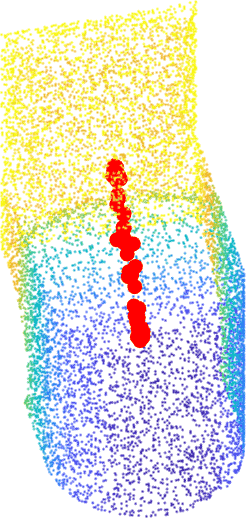}
    \includegraphics[width=0.16\textwidth, height=0.16\textwidth, keepaspectratio, angle=90]{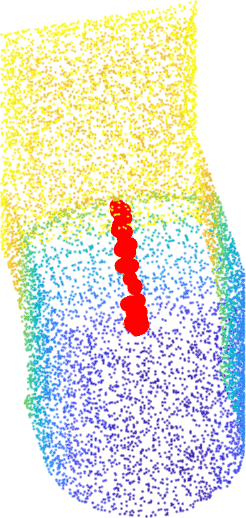}
    \includegraphics[width=0.16\textwidth, height=0.16\textwidth, keepaspectratio, angle=90]{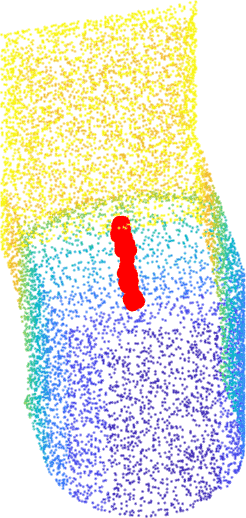}
    \includegraphics[width=0.16\textwidth, height=0.16\textwidth, keepaspectratio, angle=90]{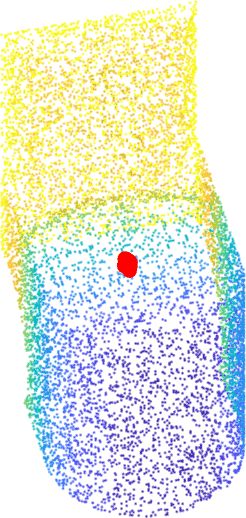}
    
    \includegraphics[width=0.16\textwidth, height=0.16\textwidth, keepaspectratio, angle=90]{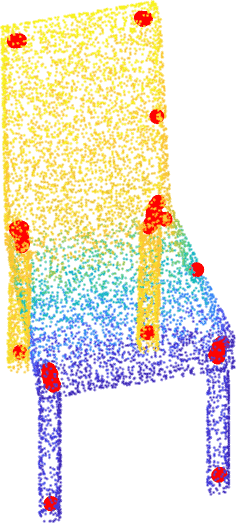}
    \includegraphics[width=0.16\textwidth, height=0.16\textwidth, keepaspectratio, angle=90]{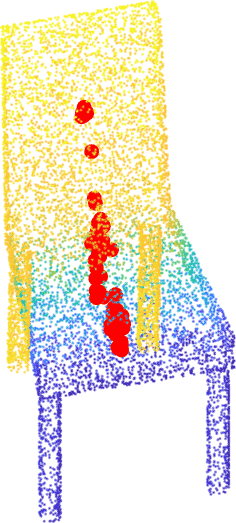}
    \includegraphics[width=0.16\textwidth, height=0.16\textwidth, keepaspectratio, angle=90]{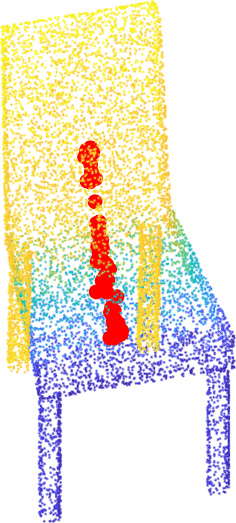}
    \includegraphics[width=0.16\textwidth, height=0.16\textwidth, keepaspectratio, angle=90]{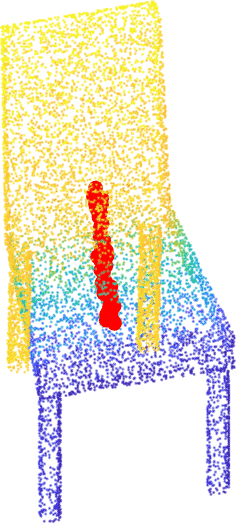}
    \includegraphics[width=0.16\textwidth, height=0.16\textwidth, keepaspectratio, angle=90]{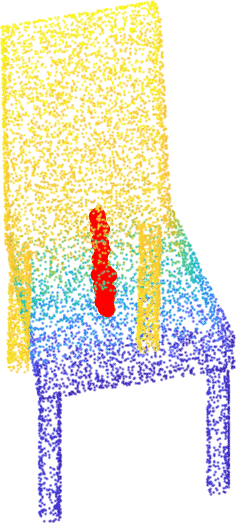}
    \includegraphics[width=0.16\textwidth, height=0.16\textwidth, keepaspectratio, angle=90]{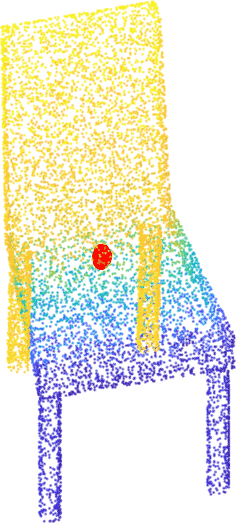}
    
    \caption{Visualization of FPN degeneracy. $K=9$ and from left to right: $M=64, 24, 20, 16, 12, 9$, \ie, receptive field of FPN increases from left to right.} \label{fig_suppl_vis_degeneracy_k9}
\end{figure*} 

\section{Qualitative Visualization of USIP Keypoints} \label{sec_suppl_visualization}
We show more visualizations of the keypoints detected from our USIP detector on ModelNet40, KITTI, Oxford RobotCar, Redwood in Fig.~\ref{fig_suppl_vis_modelnet}, \ref{fig_suppl_vis_kitti}, \ref{fig_suppl_vis_oxford}, \ref{fig_suppl_vis_redwood}, respectively. NMS and $\Sigma$ thresholding are applied here. 
A limitation of our USIP detector is shown in Fig.~\ref{fig_suppl_vis_modelnet}, where there are no or very few keypoints on objects that are highly symmetrical or with smooth surfaces. 
The saliency uncertainties $\Sigma$ of the keypoints detected on these objects are large, thus discarded by the $\Sigma$ thresholding.

\begin{figure*}[t] \centering
    \includegraphics[width=0.18\textwidth, height=0.18\textwidth, keepaspectratio]{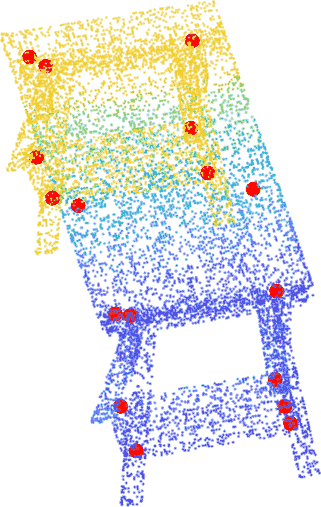}
    \includegraphics[width=0.18\textwidth, height=0.18\textwidth, keepaspectratio]{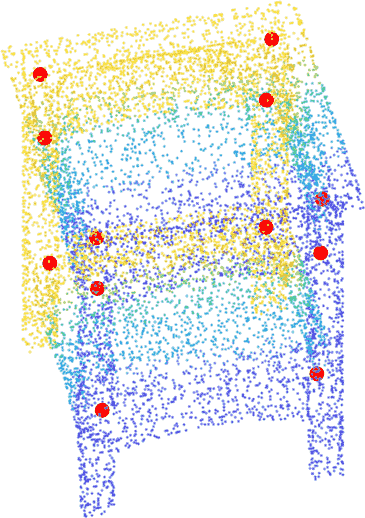}
    \includegraphics[width=0.18\textwidth, height=0.18\textwidth, keepaspectratio]{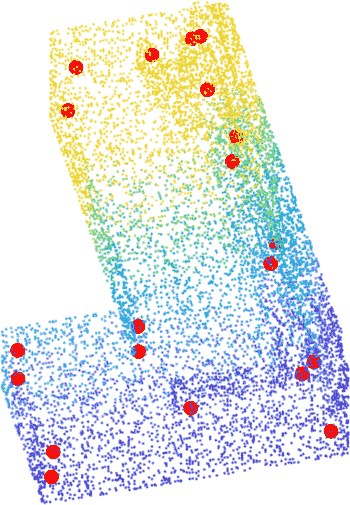}
    \includegraphics[width=0.18\textwidth, height=0.18\textwidth, keepaspectratio]{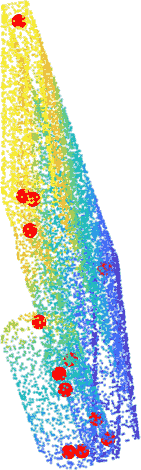}
    \includegraphics[width=0.18\textwidth, height=0.18\textwidth, keepaspectratio]{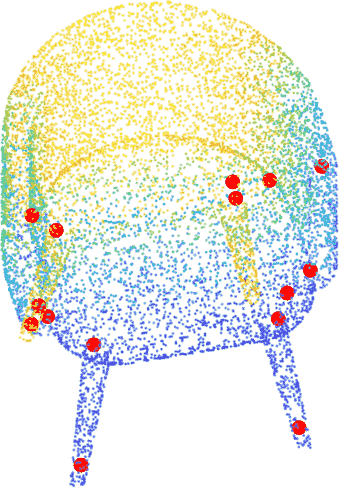}
    \includegraphics[width=0.18\textwidth, height=0.18\textwidth, keepaspectratio]{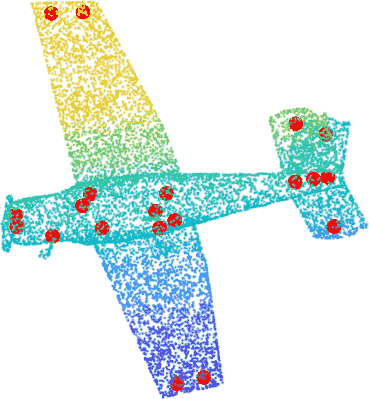}
    \includegraphics[width=0.18\textwidth, height=0.18\textwidth, keepaspectratio]{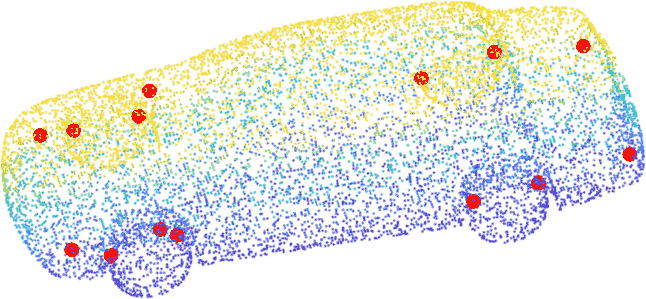}
    \includegraphics[width=0.18\textwidth, height=0.18\textwidth, keepaspectratio]{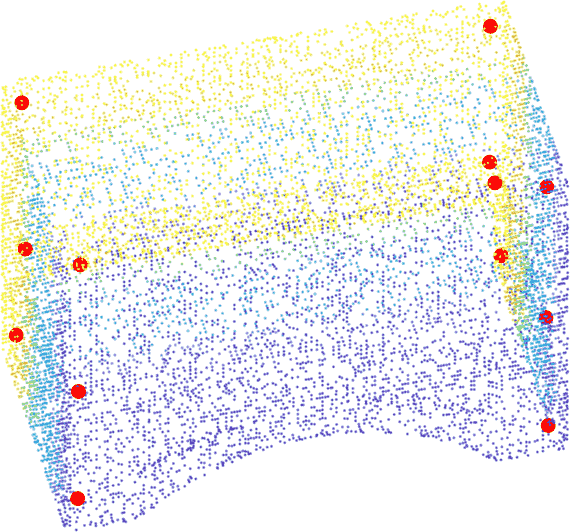}
    \includegraphics[width=0.18\textwidth, height=0.18\textwidth, keepaspectratio]{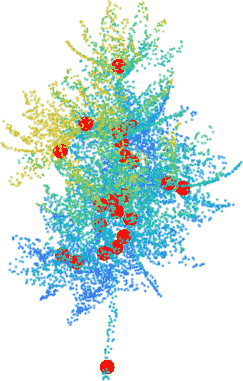}
    \includegraphics[width=0.18\textwidth, height=0.18\textwidth, keepaspectratio]{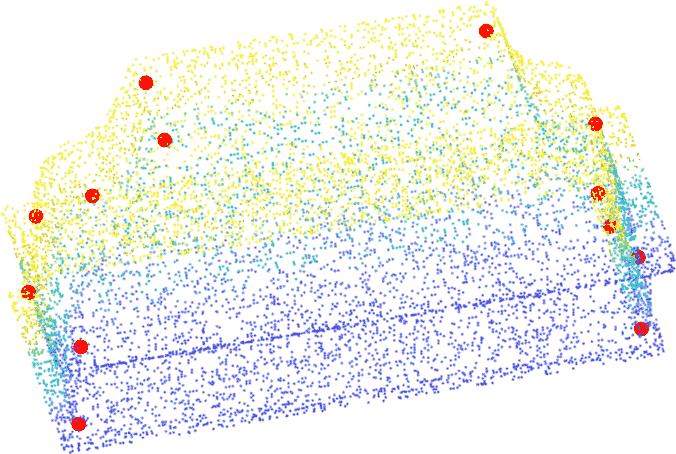}
    \includegraphics[width=0.18\textwidth, height=0.18\textwidth, keepaspectratio]{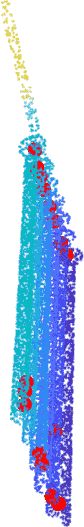}
    \includegraphics[width=0.18\textwidth, height=0.18\textwidth, keepaspectratio]{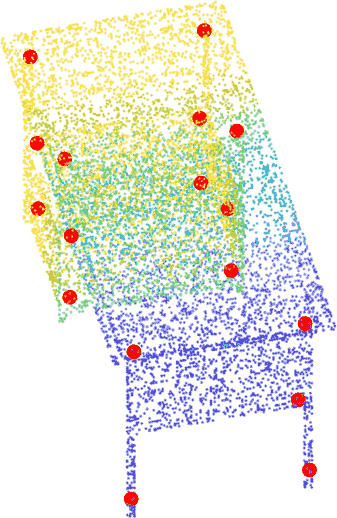}
    \includegraphics[width=0.18\textwidth, height=0.18\textwidth, keepaspectratio]{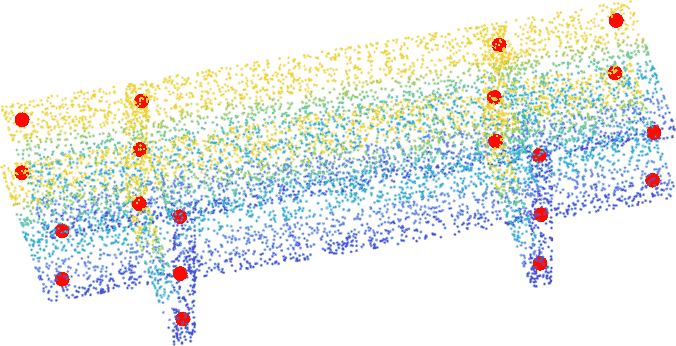}
    \includegraphics[width=0.18\textwidth, height=0.18\textwidth, keepaspectratio]{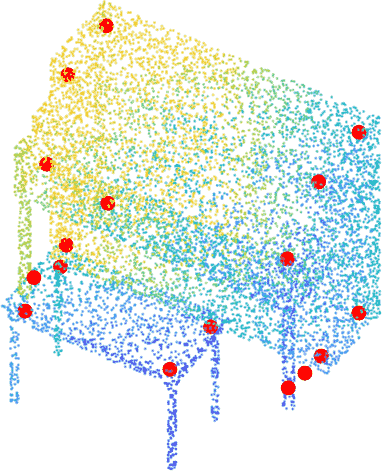}
    \includegraphics[width=0.18\textwidth, height=0.18\textwidth, keepaspectratio]{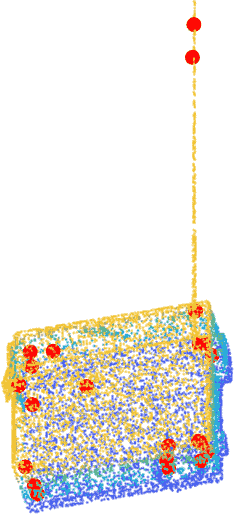}
    \includegraphics[width=0.18\textwidth, height=0.18\textwidth, keepaspectratio]{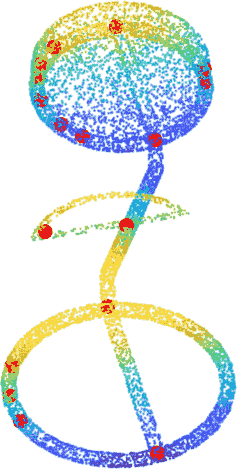}
    \includegraphics[width=0.18\textwidth, height=0.18\textwidth, keepaspectratio]{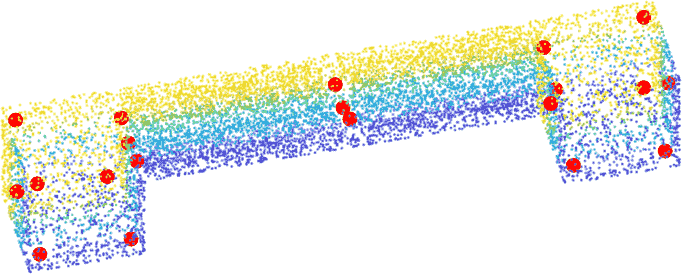}
    \includegraphics[width=0.18\textwidth, height=0.18\textwidth, keepaspectratio]{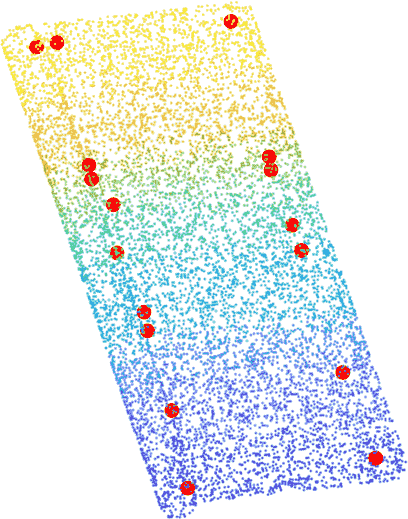}
    \includegraphics[width=0.18\textwidth, height=0.18\textwidth, keepaspectratio]{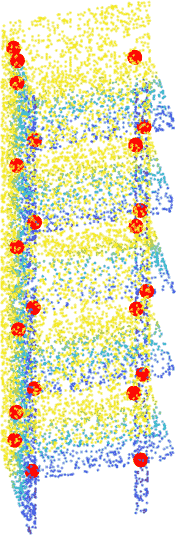}
    \includegraphics[width=0.18\textwidth, height=0.18\textwidth, keepaspectratio]{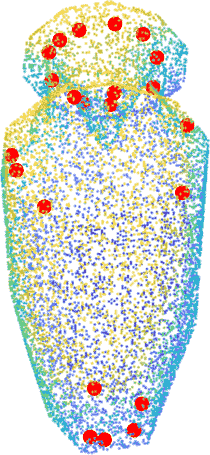}
    \includegraphics[width=0.18\textwidth, height=0.18\textwidth, keepaspectratio]{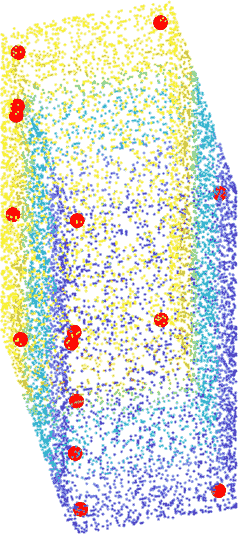}
    \includegraphics[width=0.18\textwidth, height=0.18\textwidth, keepaspectratio]{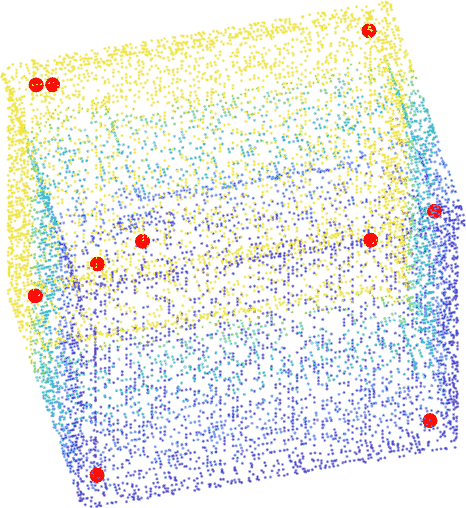}
    \includegraphics[width=0.18\textwidth, height=0.18\textwidth, keepaspectratio]{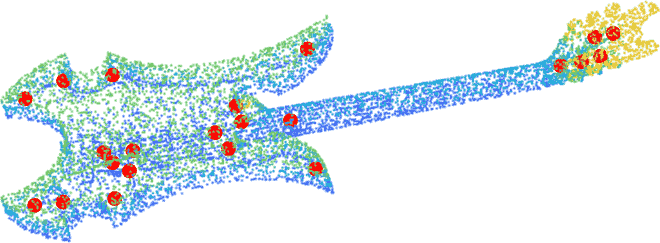}
    \includegraphics[width=0.18\textwidth, height=0.18\textwidth, keepaspectratio]{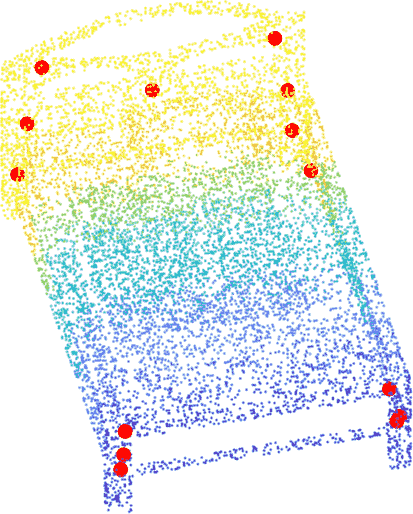}
    \includegraphics[width=0.18\textwidth, height=0.18\textwidth, keepaspectratio]{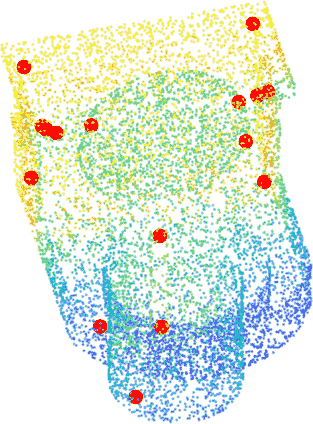}
    \includegraphics[width=0.18\textwidth, height=0.18\textwidth, keepaspectratio]{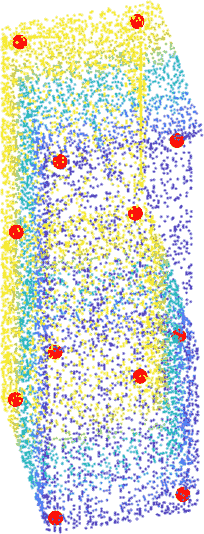}
    \includegraphics[width=0.18\textwidth, height=0.18\textwidth, keepaspectratio]{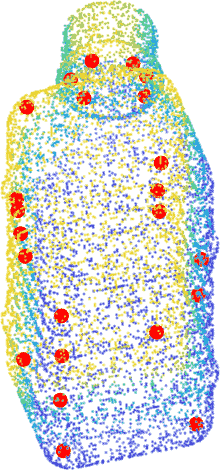}
    \includegraphics[width=0.18\textwidth, height=0.18\textwidth, keepaspectratio]{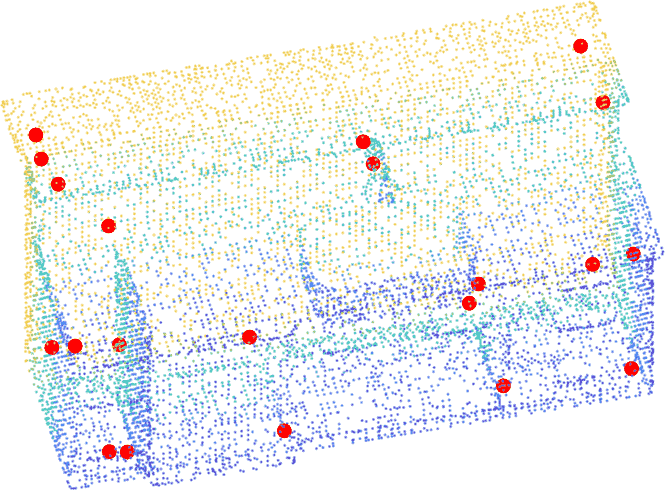}
    \includegraphics[width=0.18\textwidth, height=0.18\textwidth, keepaspectratio]{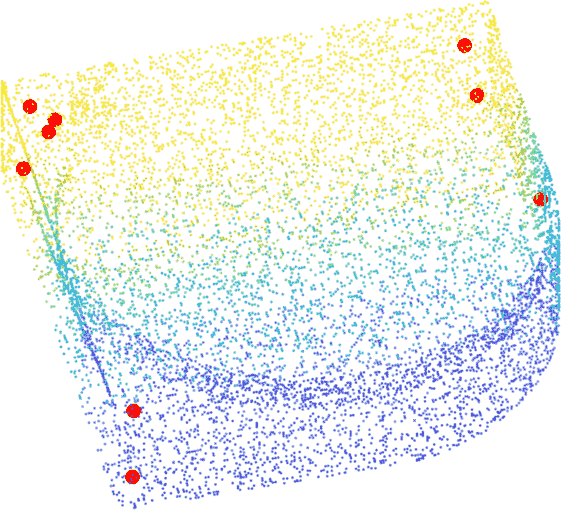}
    \includegraphics[width=0.18\textwidth, height=0.18\textwidth, keepaspectratio]{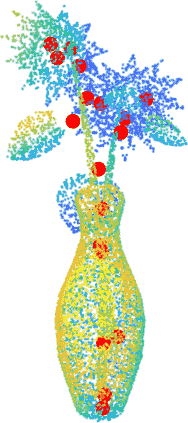}
    \includegraphics[width=0.18\textwidth, height=0.18\textwidth, keepaspectratio]{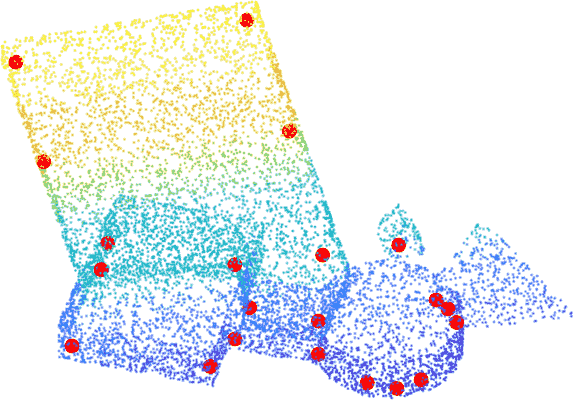}
    \includegraphics[width=0.18\textwidth, height=0.18\textwidth, keepaspectratio]{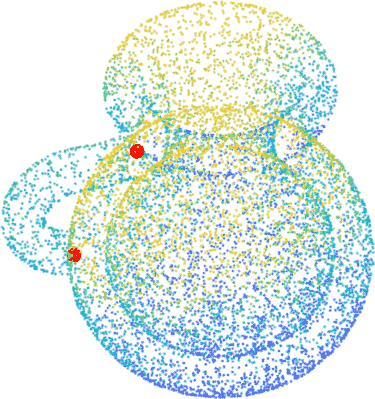}
    \includegraphics[width=0.18\textwidth, height=0.18\textwidth, keepaspectratio]{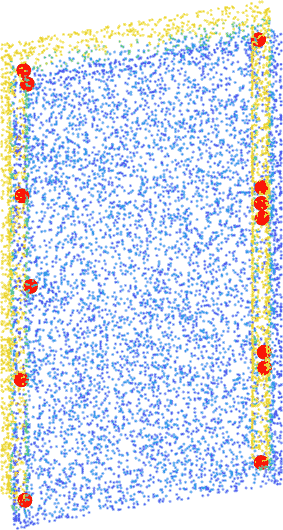}
    \includegraphics[width=0.18\textwidth, height=0.18\textwidth, keepaspectratio]{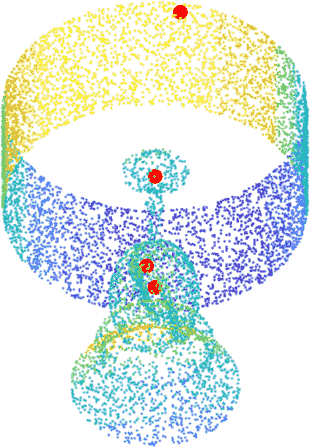}
    \includegraphics[width=0.18\textwidth, height=0.18\textwidth, keepaspectratio]{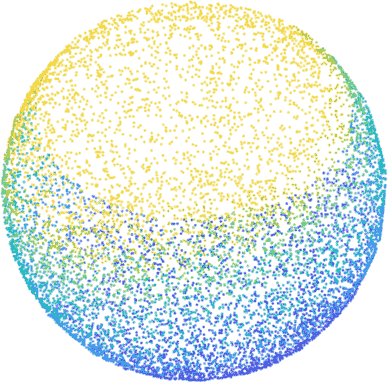}
    \includegraphics[width=0.18\textwidth, height=0.18\textwidth, keepaspectratio]{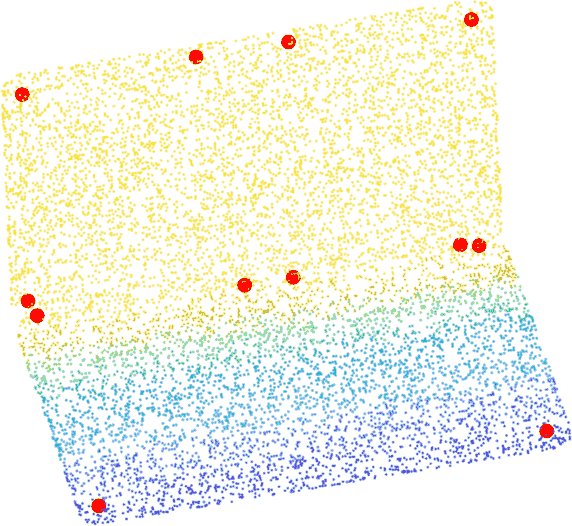}
    \includegraphics[width=0.18\textwidth, height=0.18\textwidth, keepaspectratio]{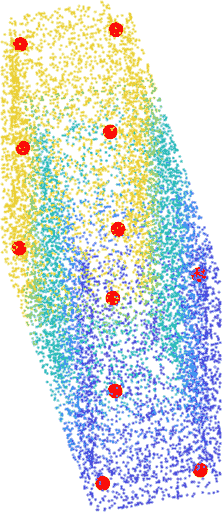}
    \includegraphics[width=0.18\textwidth, height=0.18\textwidth, keepaspectratio]{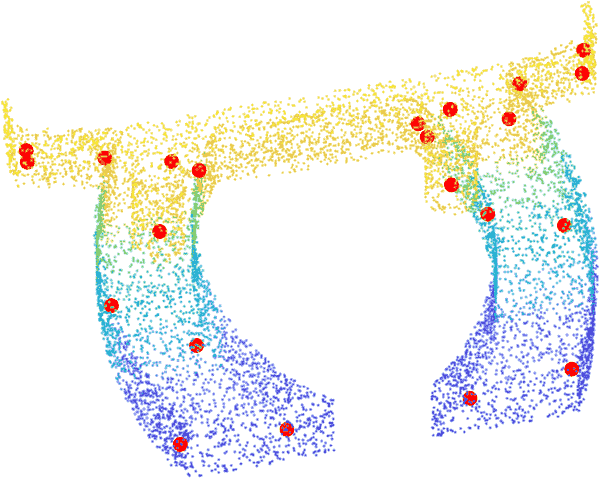}
    \includegraphics[width=0.18\textwidth, height=0.18\textwidth, keepaspectratio]{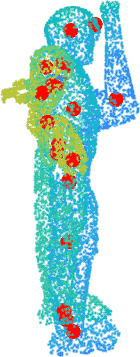}
    \includegraphics[width=0.18\textwidth, height=0.18\textwidth, keepaspectratio]{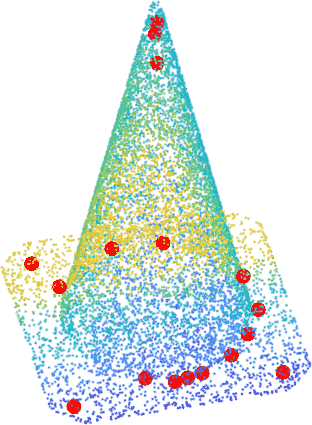}
    
    \caption{Visualization of USIP keypoints on ModelNet40. Best view with color and zoom-in.} \label{fig_suppl_vis_modelnet}
\end{figure*} 

\begin{figure*}[t] \centering
    \includegraphics[angle=90, width=0.33\textwidth, height=0.40\textwidth, keepaspectratio]{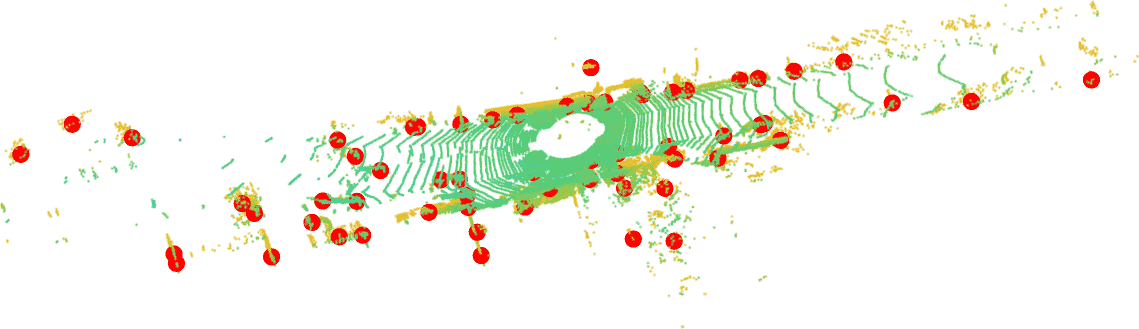}
    \includegraphics[angle=90, width=0.33\textwidth, height=0.40\textwidth, keepaspectratio]{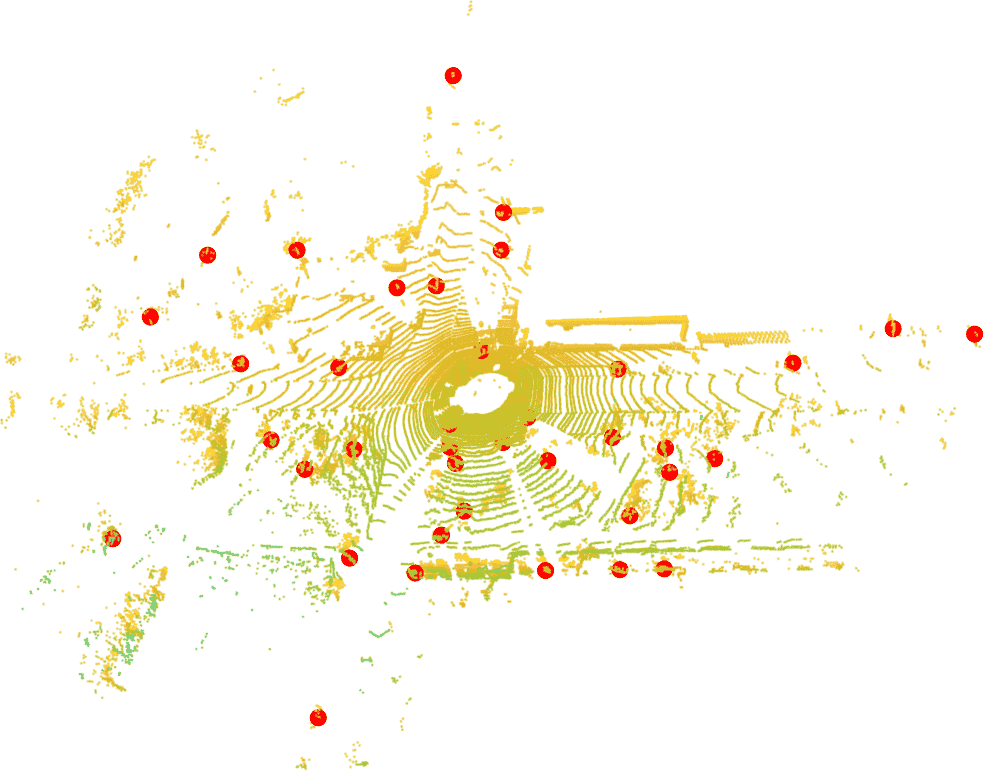}
    \includegraphics[angle=90, width=0.33\textwidth, height=0.40\textwidth, keepaspectratio]{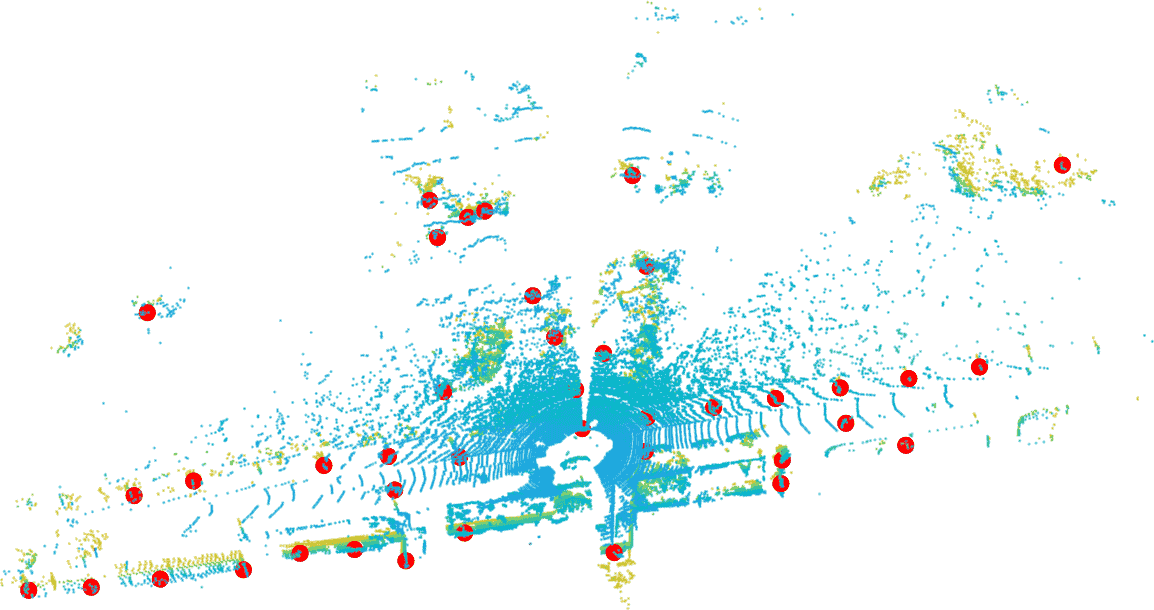}
    \includegraphics[angle=90, width=0.33\textwidth, height=0.40\textwidth, keepaspectratio]{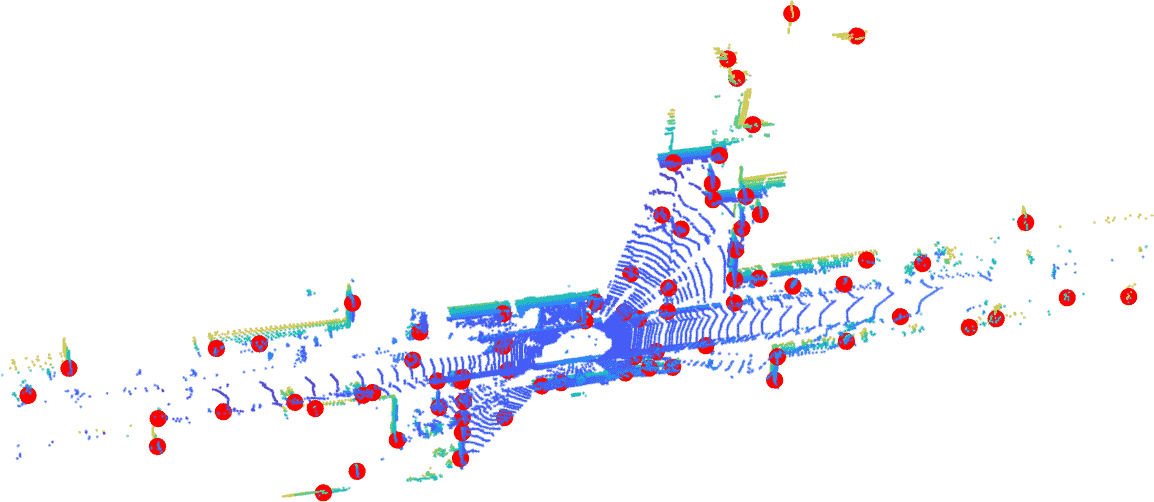}
    \includegraphics[angle=90, width=0.33\textwidth, height=0.40\textwidth, keepaspectratio]{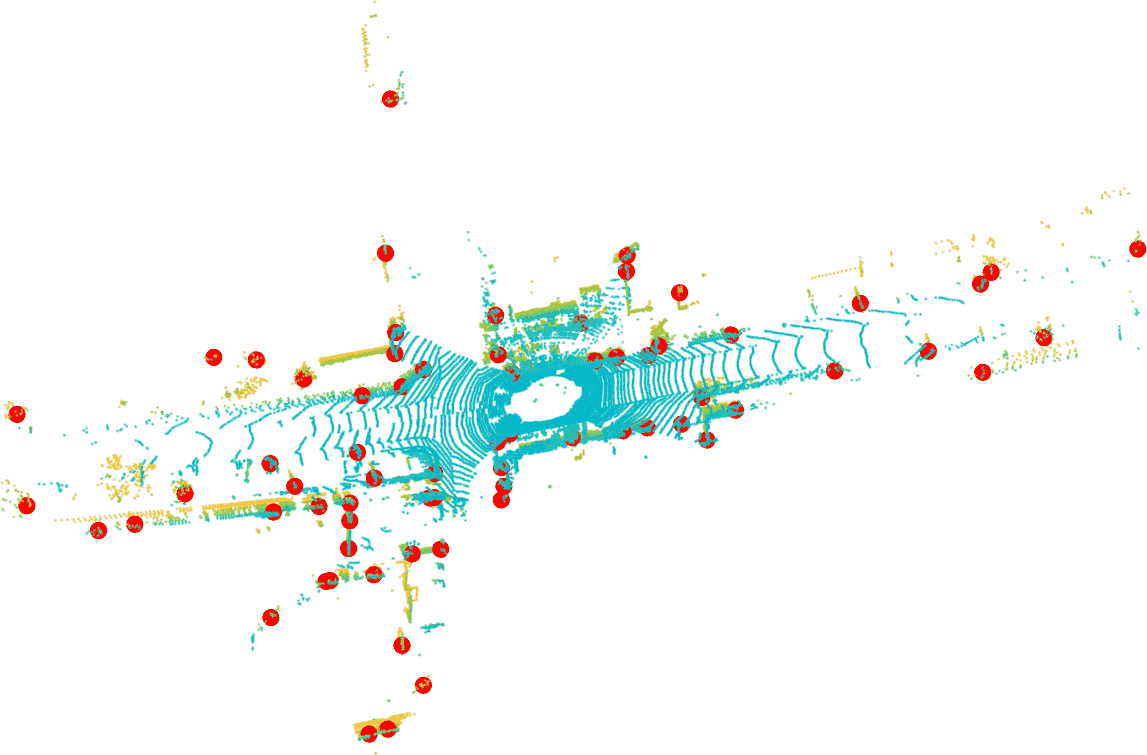}
    \includegraphics[angle=90, width=0.33\textwidth, height=0.40\textwidth, keepaspectratio]{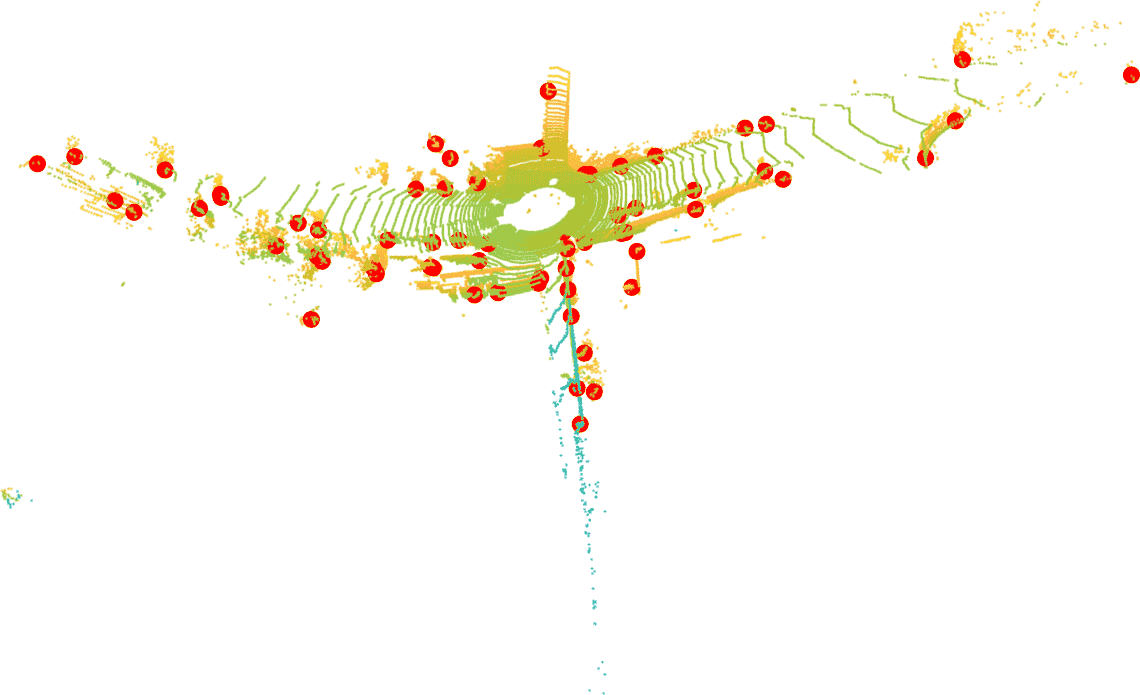}
    \includegraphics[angle=90, width=0.33\textwidth, height=0.40\textwidth, keepaspectratio]{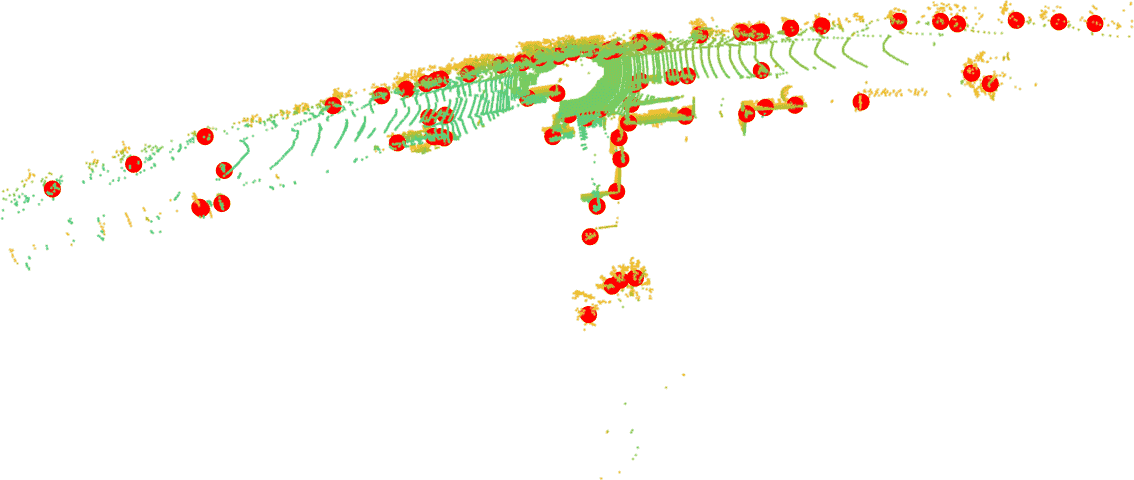}
    \includegraphics[angle=90, width=0.33\textwidth, height=0.40\textwidth, keepaspectratio]{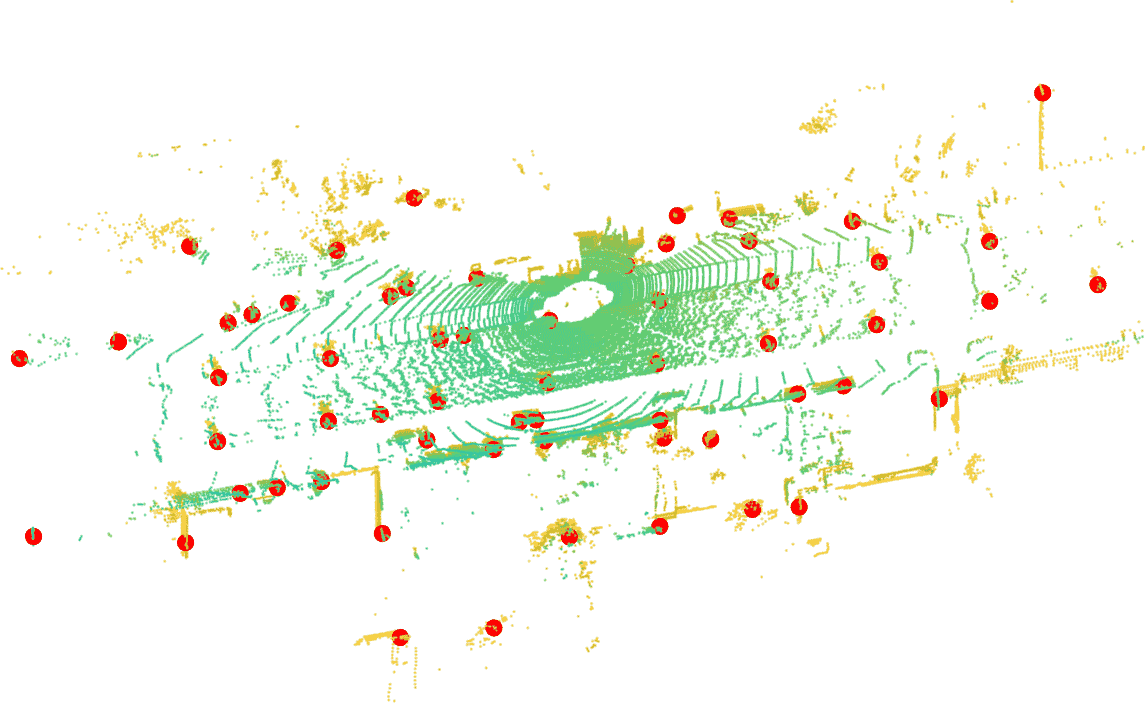}
    \includegraphics[angle=90, width=0.33\textwidth, height=0.40\textwidth, keepaspectratio]{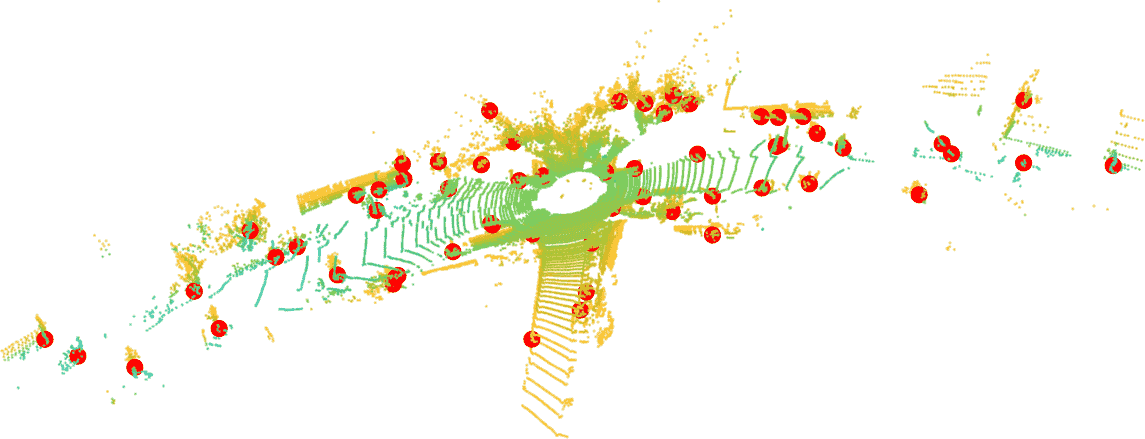}
    \includegraphics[angle=90, width=0.33\textwidth, height=0.40\textwidth, keepaspectratio]{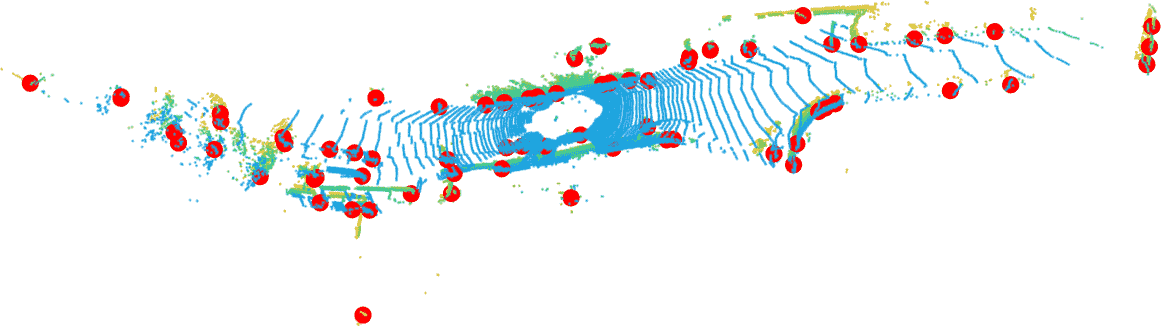}
    \includegraphics[angle=90, width=0.33\textwidth, height=0.40\textwidth, keepaspectratio]{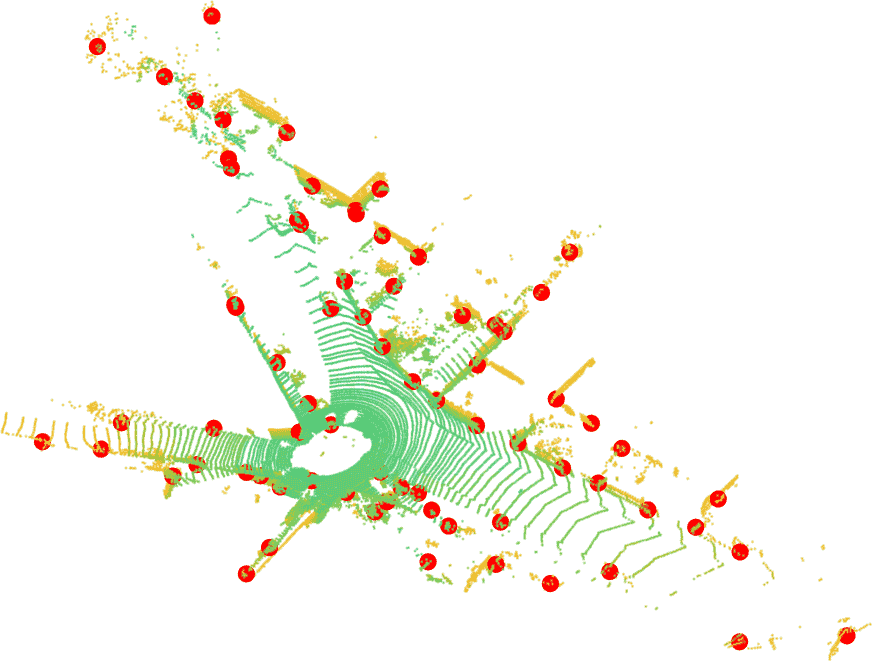}
    \includegraphics[angle=90, width=0.33\textwidth, height=0.40\textwidth, keepaspectratio]{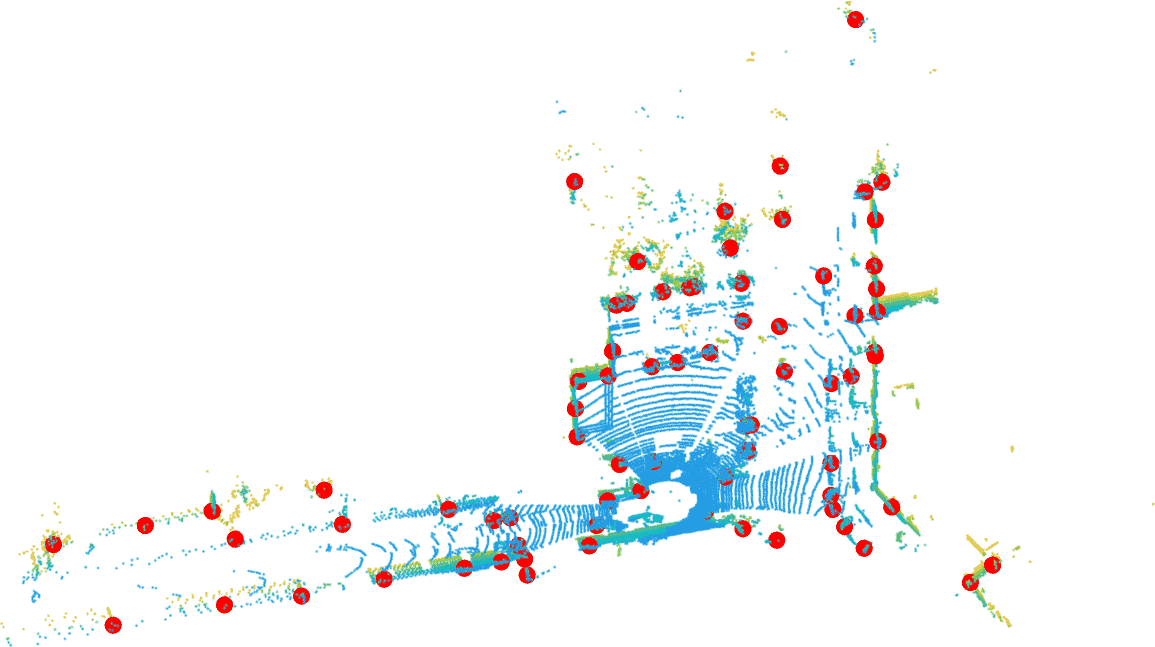}
    \includegraphics[angle=90, width=0.33\textwidth, height=0.40\textwidth, keepaspectratio]{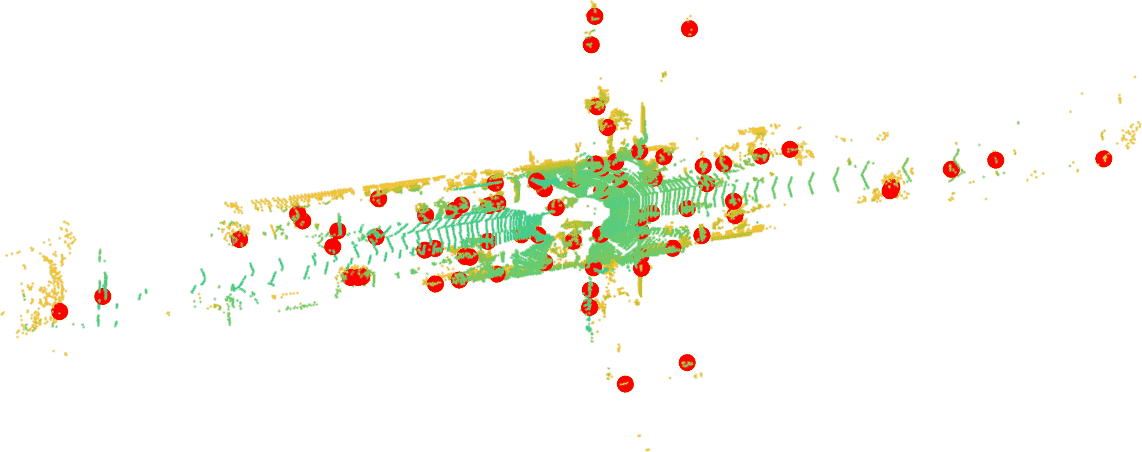}
    
    \caption{Visualization of USIP keypoints on KITTI with our USIP detector trained on Oxford RobotCar dataset. Best view with color and zoom-in.} \label{fig_suppl_vis_kitti}
\end{figure*}

\begin{figure*}[t] \centering
    \includegraphics[angle=0, width=0.33\textwidth, height=0.301\textwidth, keepaspectratio]{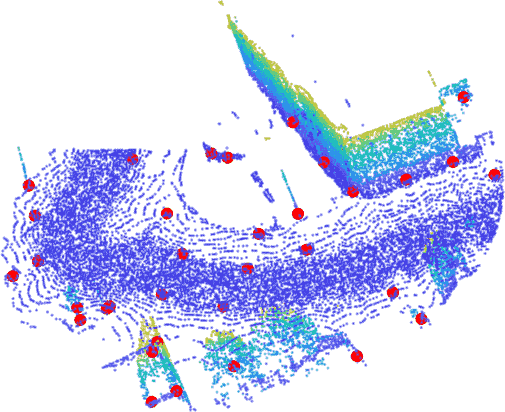}
    \includegraphics[angle=0, width=0.33\textwidth, height=0.301\textwidth, keepaspectratio]{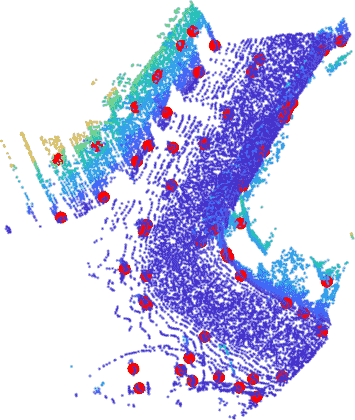}
    \includegraphics[angle=0, width=0.33\textwidth, height=0.301\textwidth, keepaspectratio]{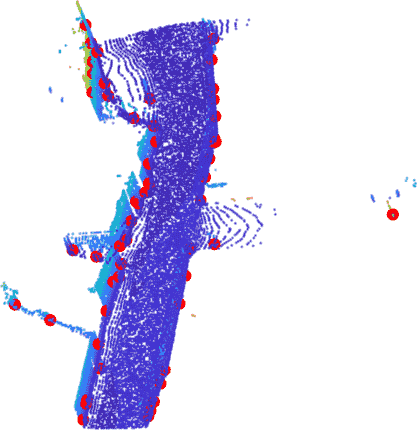}
    \includegraphics[angle=0, width=0.33\textwidth, height=0.301\textwidth, keepaspectratio]{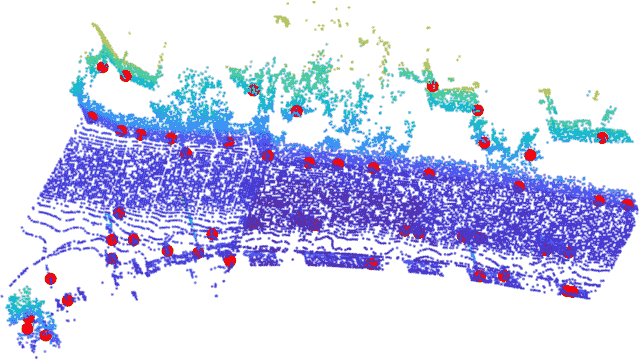}
    \includegraphics[angle=0, width=0.33\textwidth, height=0.301\textwidth, keepaspectratio]{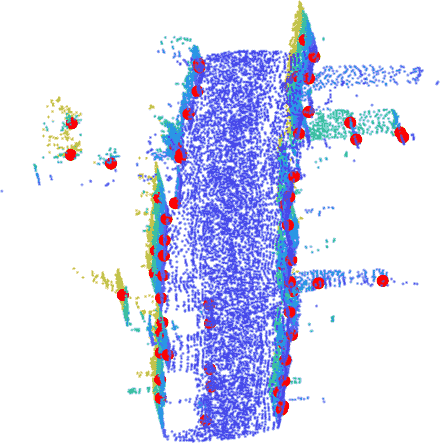}
    \includegraphics[angle=0, width=0.33\textwidth, height=0.301\textwidth, keepaspectratio]{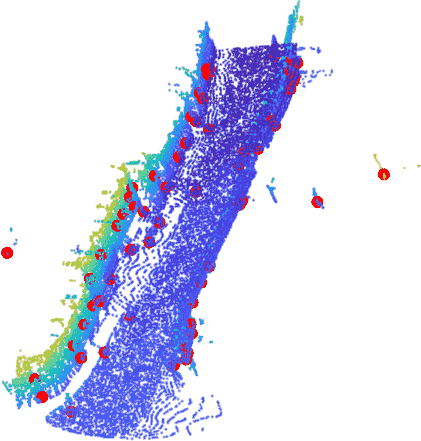}
    \includegraphics[angle=0, width=0.33\textwidth, height=0.301\textwidth, keepaspectratio]{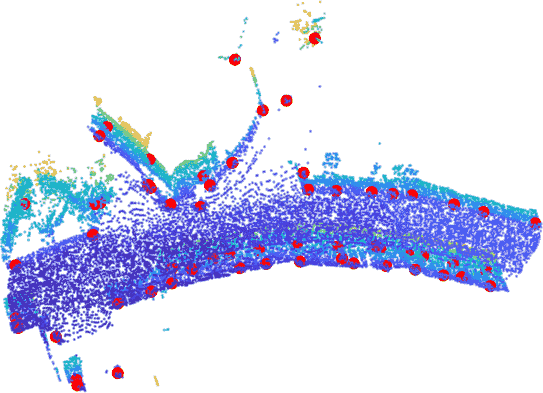}
    \includegraphics[angle=0, width=0.33\textwidth, height=0.301\textwidth, keepaspectratio]{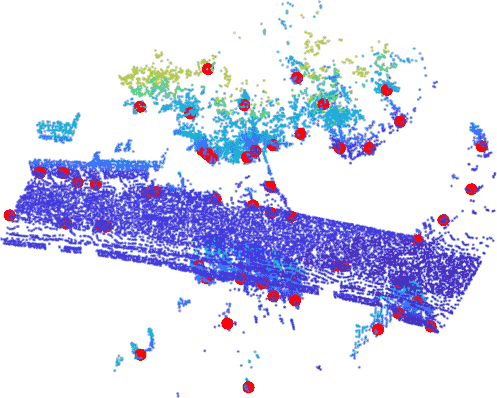}
    \includegraphics[angle=0, width=0.33\textwidth, height=0.301\textwidth, keepaspectratio]{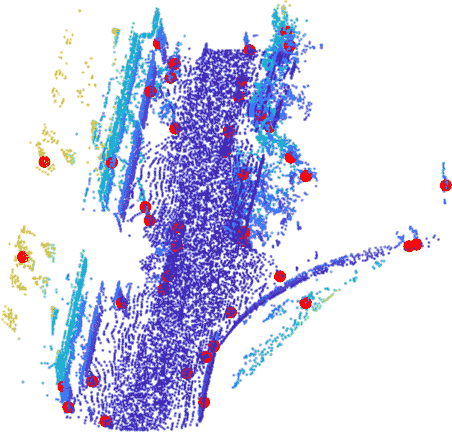}
    \includegraphics[angle=0, width=0.33\textwidth, height=0.301\textwidth, keepaspectratio]{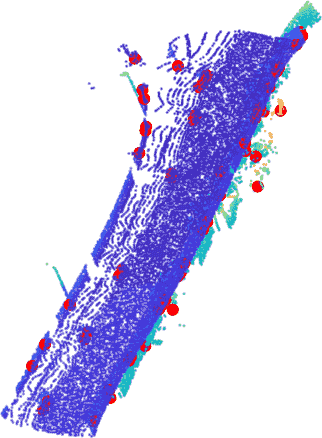}
    \includegraphics[angle=0, width=0.33\textwidth, height=0.301\textwidth, keepaspectratio]{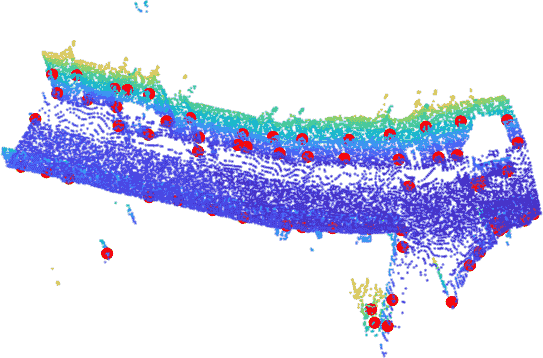}
    \includegraphics[angle=0, width=0.33\textwidth, height=0.301\textwidth, keepaspectratio]{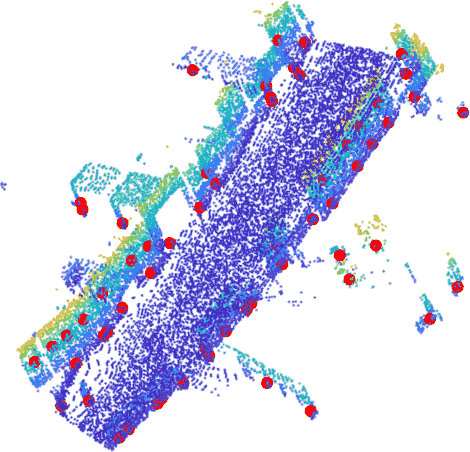}
    
    \caption{Visualization of USIP keypoints on Oxford RobotCar. Best view with color and zoom-in.} \label{fig_suppl_vis_oxford}
\end{figure*}

\begin{figure*}[t] \centering
    \includegraphics[angle=0, width=0.33\textwidth, height=0.301\textwidth, keepaspectratio]{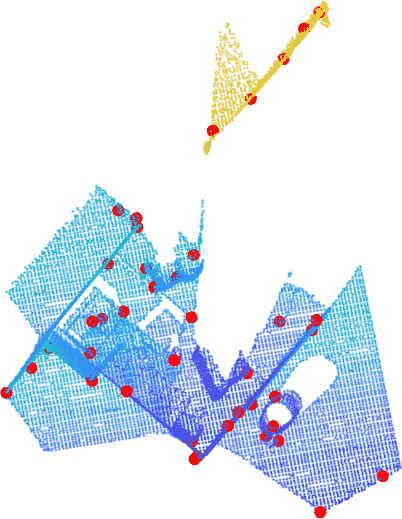}
    \includegraphics[angle=0, width=0.33\textwidth, height=0.301\textwidth, keepaspectratio]{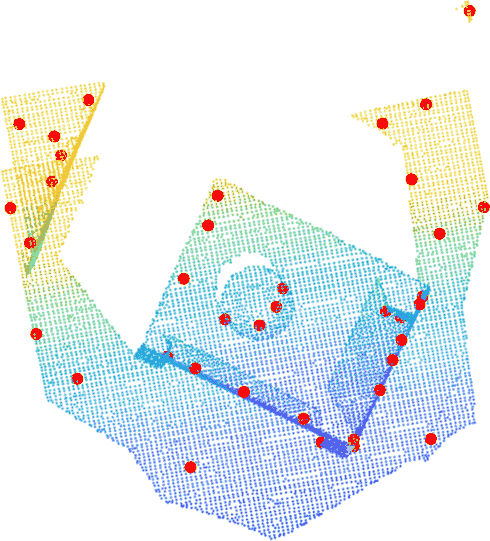}
    \includegraphics[angle=0, width=0.33\textwidth, height=0.301\textwidth, keepaspectratio]{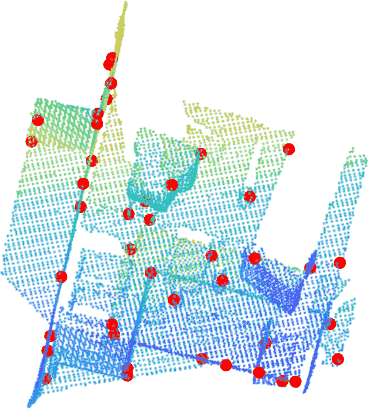}
    \includegraphics[angle=0, width=0.33\textwidth, height=0.301\textwidth, keepaspectratio]{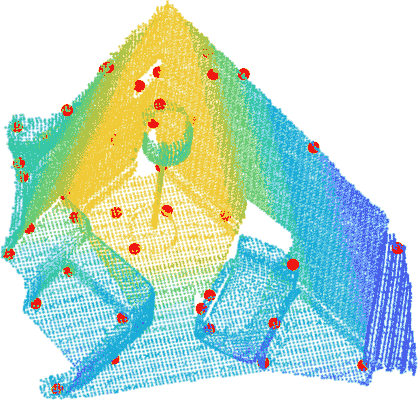}
    \includegraphics[angle=0, width=0.33\textwidth, height=0.301\textwidth, keepaspectratio]{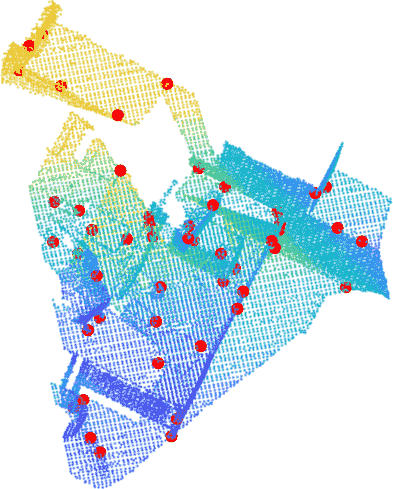}
    \includegraphics[angle=0, width=0.33\textwidth, height=0.301\textwidth, keepaspectratio]{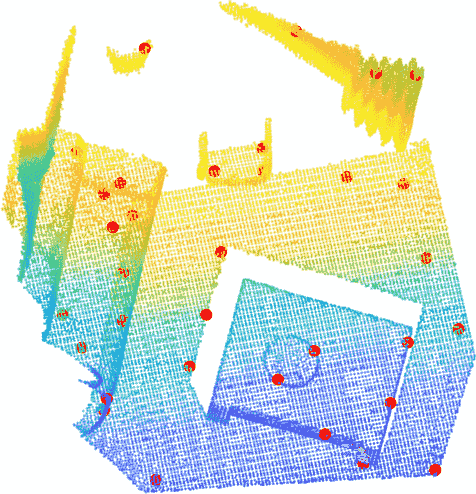}
    \includegraphics[angle=0, width=0.33\textwidth, height=0.301\textwidth, keepaspectratio]{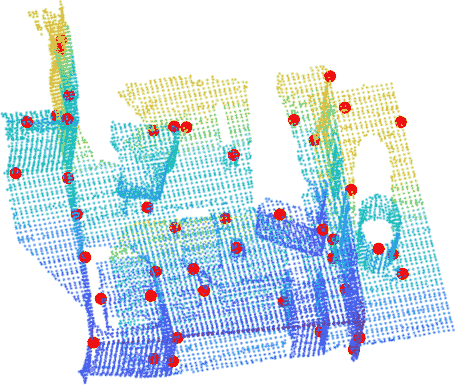}
    \includegraphics[angle=0, width=0.33\textwidth, height=0.301\textwidth, keepaspectratio]{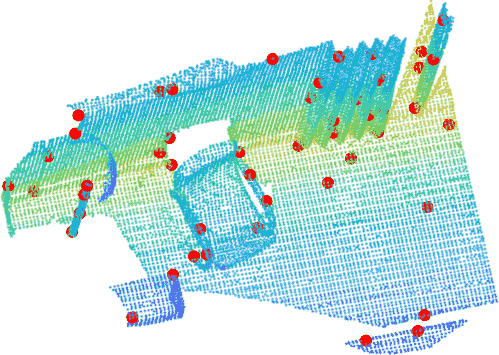}
    \includegraphics[angle=0, width=0.33\textwidth, height=0.301\textwidth, keepaspectratio]{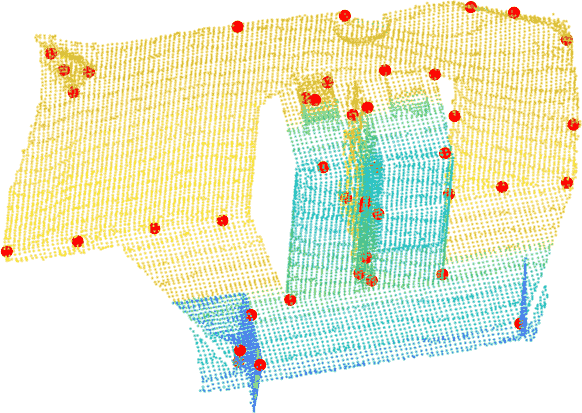}
    \includegraphics[angle=0, width=0.33\textwidth, height=0.301\textwidth, keepaspectratio]{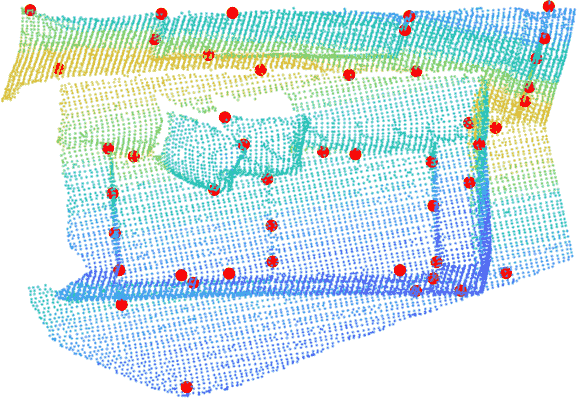}
    \includegraphics[angle=0, width=0.33\textwidth, height=0.301\textwidth, keepaspectratio]{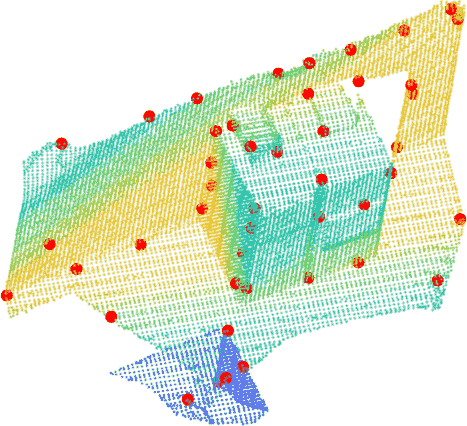}
    \includegraphics[angle=0, width=0.33\textwidth, height=0.301\textwidth, keepaspectratio]{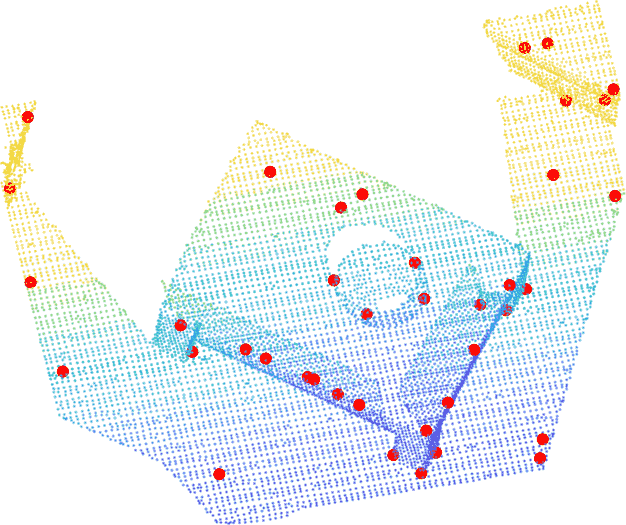}
    
    \caption{Visualization of USIP keypoints on Redwood with our USIP detector trained on ``3D Reconstruction Dataset" \cite{zeng20173dmatch}. Best view with color and zoom-in.} \label{fig_suppl_vis_redwood}
\end{figure*}

\end{document}